\def\eqref#1{equation~\ref{#1}}
\def\Eqref#1{Equation~\ref{#1}}
\def\1{\bm{1}}
\def\rd{{\textnormal{d}}}
\def\rvu{{\mathbf{i}}}
\def\rvu{{\mathbf{u}}}
\def\rvx{{\mathbf{x}}}
\def\rvy{{\mathbf{y}}}
\def\rvz{{\mathbf{z}}}
\def\rvZ{{\mathbf{Z}}}
\DeclareMathAlphabet{\mathsfit}{\encodingdefault}{\sfdefault}{m}{sl}
\SetMathAlphabet{\mathsfit}{bold}{\encodingdefault}{\sfdefault}{bx}{n}
\def\sP{{\mathbb{P}}}
\def\sR{{\mathbb{R}}}
\newcommand{\E}{\mathbb{E}}
\theoremstyle{plain}
\newtheorem{theorem}{Theorem}[section]
\newtheorem{corollary}[theorem]{Corollary}
\theoremstyle{definition}
\newtheorem{definition}[theorem]{Definition}
\theoremstyle{remark}
\newcommand{\PreserveBackslash}[1]{\let\temp=\\#1\let\\=\temp}
\newcolumntype{C}[1]{>{\PreserveBackslash\centering}m{#1}}
\newcolumntype{R}[1]{>{\PreserveBackslash\raggedleft}m{#1}}
\newcolumntype{L}[1]{>{\PreserveBackslash\raggedright}m{#1}}
\title{Continuum Dropout for Neural Differential Equations}
\author {
    Jonghun Lee\textsuperscript{\rm 1}\thanks{Equal contribution.},
    YongKyung Oh\textsuperscript{\rm 2}\footnotemark[1],
    Sungil Kim\textsuperscript{\rm 1,3}\thanks{Corresponding authors.},
    Dong-Young Lim\textsuperscript{\rm 1,3}\footnotemark[2]
}
\begin{document}

\maketitle

\begin{abstract}
Neural Differential Equations (NDEs) excel at modeling continuous-time dynamics, effectively handling challenges such as irregular observations, missing values, and noise. Despite their advantages, NDEs face a fundamental challenge in adopting dropout, a cornerstone of deep learning regularization, making them susceptible to overfitting. To address this research gap, we introduce Continuum Dropout, a universally applicable regularization technique for NDEs built upon the theory of alternating renewal processes. Continuum Dropout formulates the on-off mechanism of dropout as a stochastic process that alternates between active (evolution) and inactive (paused) states in continuous time. This provides a principled approach to prevent overfitting and enhance the generalization capabilities of NDEs. Moreover, Continuum Dropout offers a structured framework to quantify predictive uncertainty via Monte Carlo sampling at test time. Through extensive experiments, we demonstrate that Continuum Dropout outperforms existing regularization methods for NDEs, achieving superior performance on various time series and image classification tasks. It also yields better-calibrated and more trustworthy probability estimates, highlighting its effectiveness for uncertainty-aware modeling.
\end{abstract}

\begin{links}
    \link{Code}{https://github.com/jonghun-lee0/Continuum-Dropout}
    %\link{Extended version}{https://arxiv.org/abs/xxxxxxxxxxx}
\end{links}

\section{Introduction}\label{sec:Introduction}

% Background and Motivation
Neural Differential Equations (NDEs) have emerged as a powerful framework for modeling continuous-time dynamics by integrating differential equations with neural networks~\citep{chen2018neural, rubanova2019latent, oh2025comprehensive}. This framework has demonstrated superior performance in handling irregular sampling, missing data, and noisy observations across diverse domains such as physics~\citep{greydanus2019hamiltonian}, finance~\citep{yang2023neural}, and others~\citep{rubanova2019latent, kidger2020cde, oh2024stable, oh2025dualdynamics, oh2025tandem}.

%%%
\begin{figure*}[h]
    \centering\captionsetup{justification=centering, skip=3pt}
    \captionsetup[subfigure]{justification=centering, skip=3pt}

    \begin{subfigure}{0.22\textwidth}
        \includegraphics[width=\linewidth]{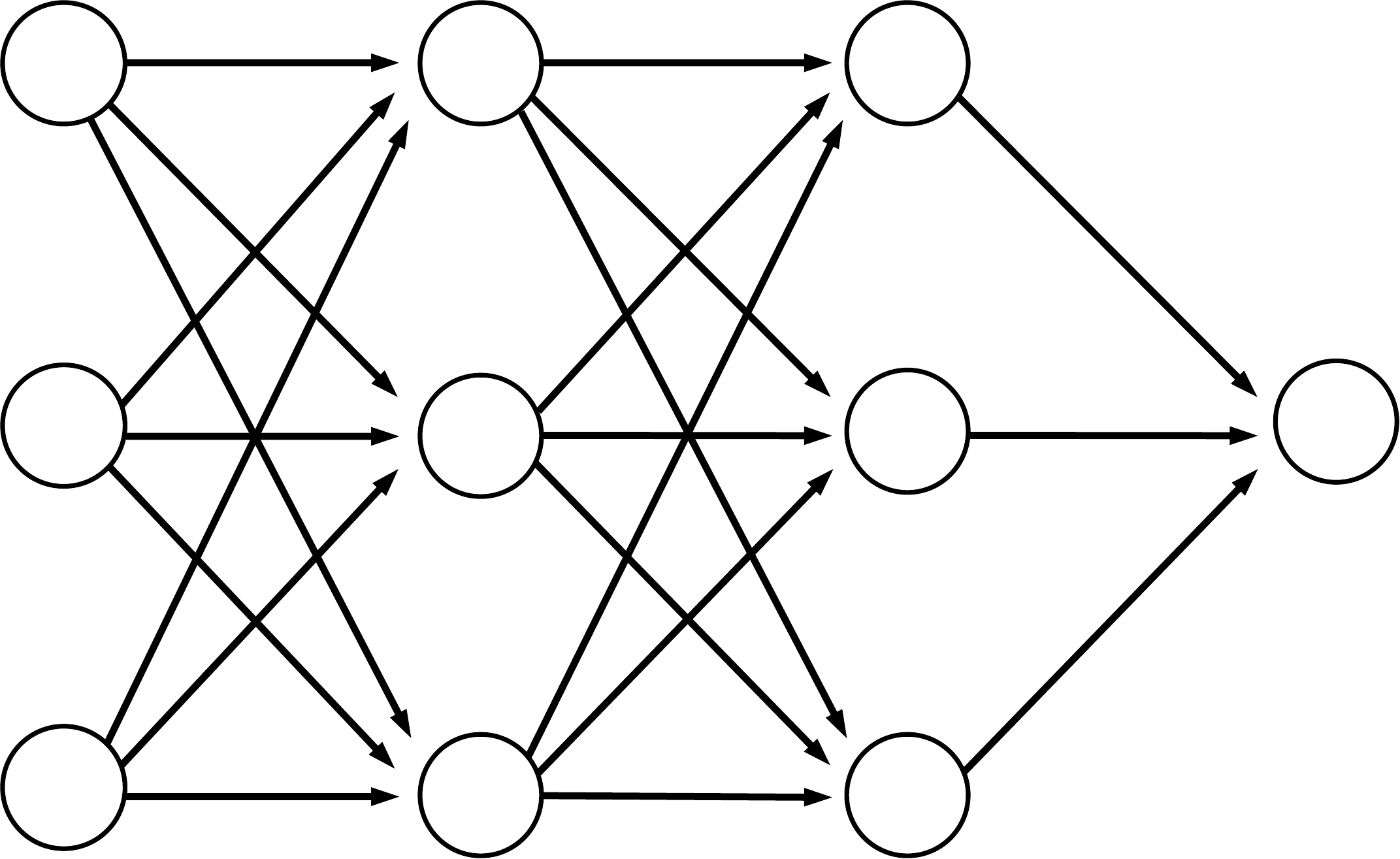}
        \caption{Neural network}
        \label{subfig:nn}
    \end{subfigure}    
    \hspace{0.01\textwidth}
    \begin{subfigure}{0.22\textwidth}
       \includegraphics[width=\linewidth]{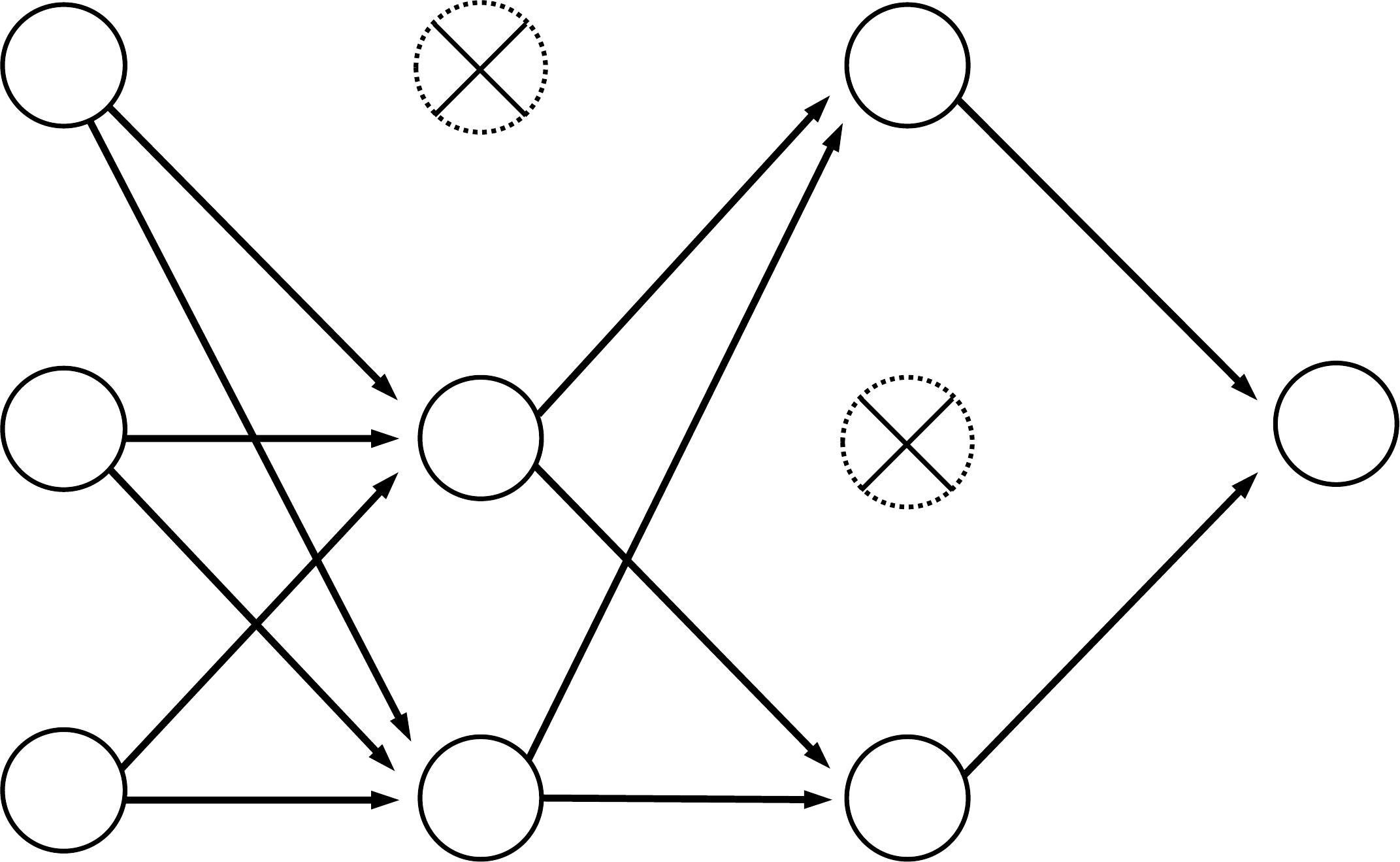}
        \caption{Neural network w/ dropout}
        \label{subfig:nn_dropout}
    \end{subfigure}
    \begin{subfigure}{0.265\textwidth}
       \includegraphics[width=\linewidth]{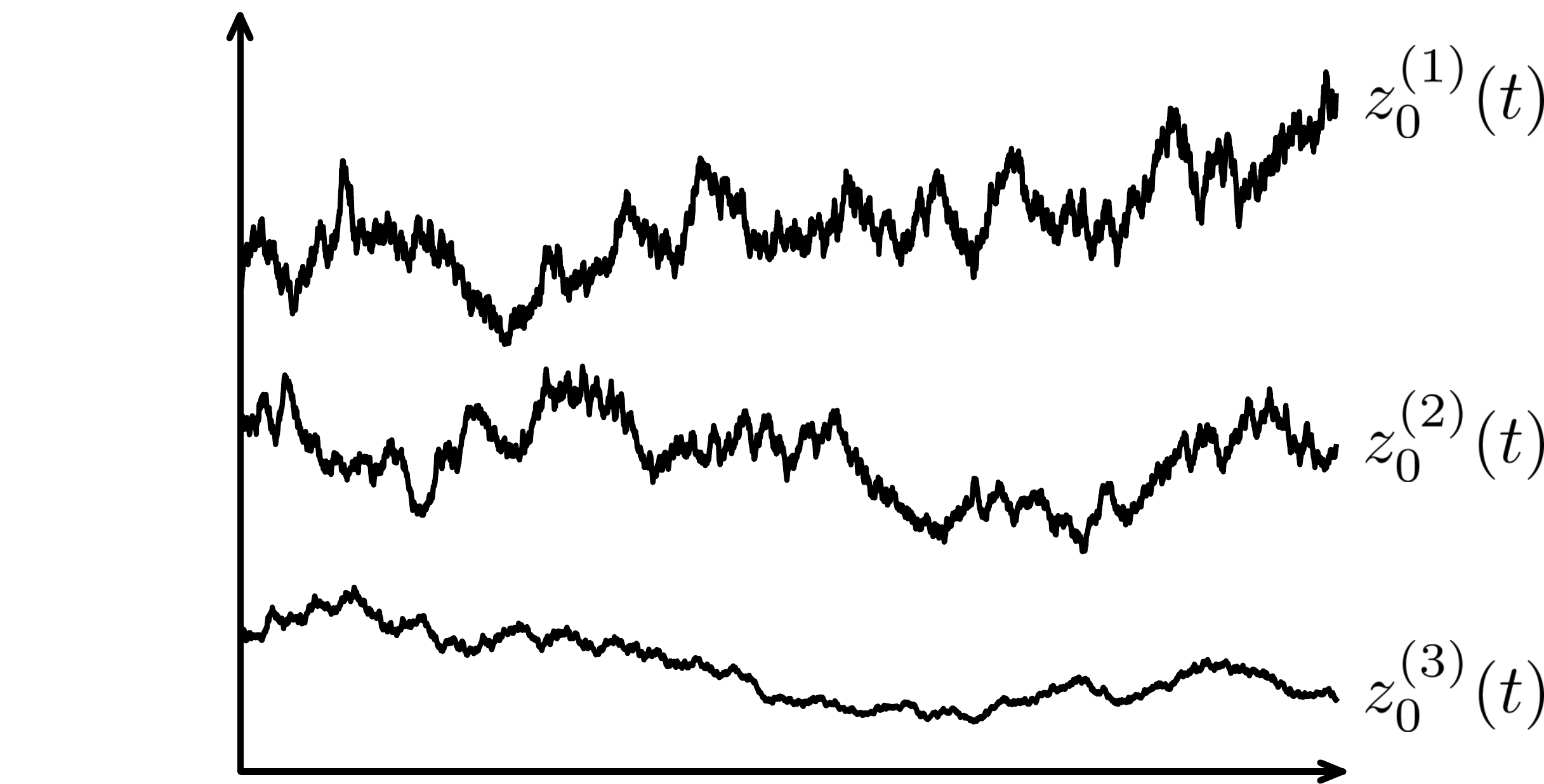}
        \caption{NDE, $\rvz_0(t)$}
        \label{subfig:nde}
    \end{subfigure}
    \begin{subfigure}{0.265\textwidth}
        \includegraphics[width=\linewidth]{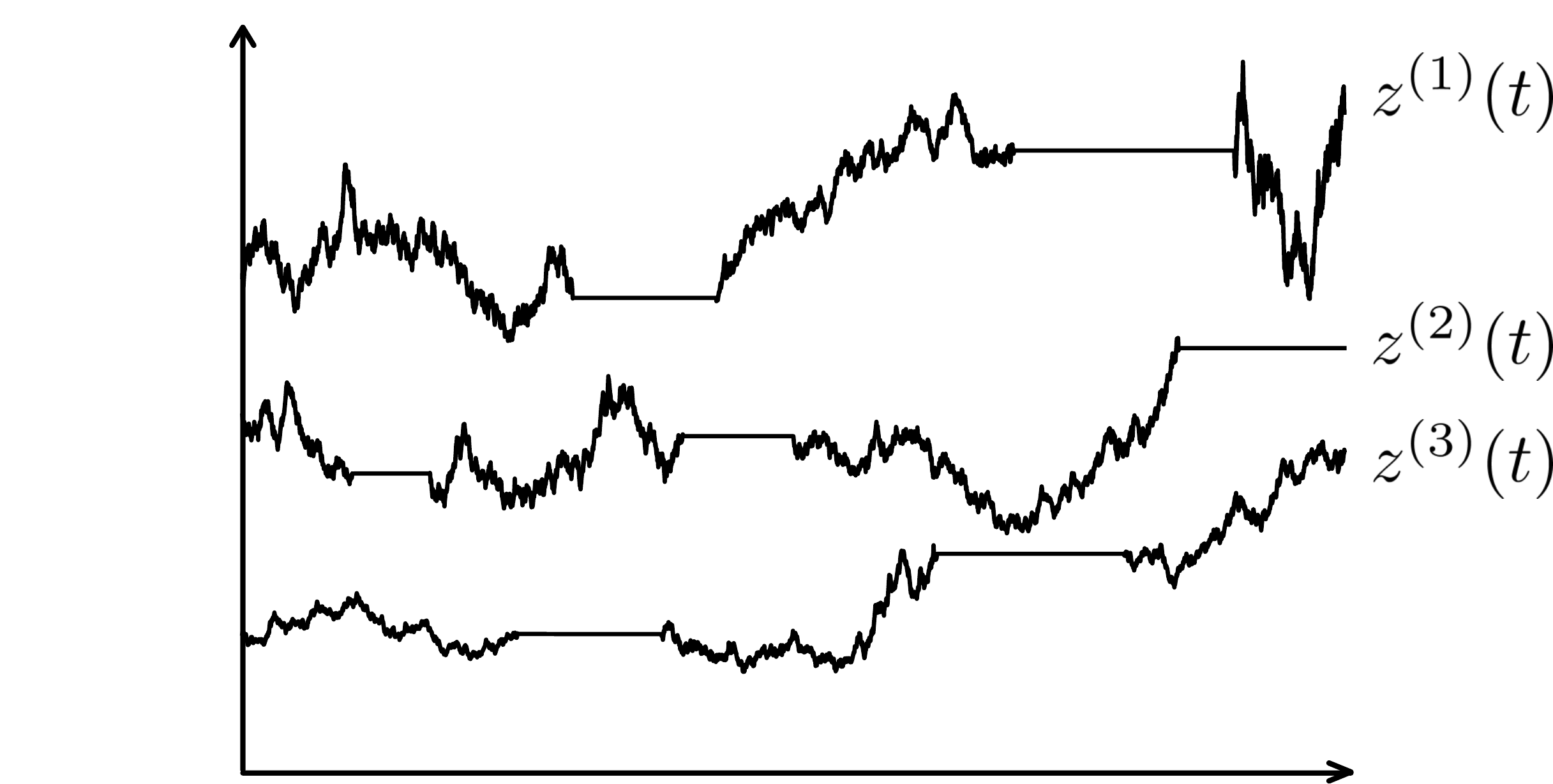}
        \caption{NDE w/ Continuum Dropout, $\rvz(t)$}
        \label{subfig:NRODE}
    \end{subfigure}
    \caption{Illustration of dropout in discrete neural networks and continuous-time latent processes: \\(a) discrete neural network, (b) neural network with dropout, (c) NDE, (d) NDE with Continuum Dropout.}
    \label{fig:comparison}
\end{figure*}
%%%

Despite these advantages, NDEs are particularly prone to overfitting, especially when trained on limited data or equipped with complex architectures~\citep{oh2024stable}. To address this issue, various regularization techniques for NDEs have been explored, including Neural Stochastic Differential Equations (Neural SDEs)~\citep{tzen2019neural, kong2020sde, oganesyan2020stochasticity}, Neural Jump Diffusion model~\citep{liu2020does}, STEER~\cite{ghosh2020steer}, Temporal Adaptive Batch Normalization (TA-BN)~\citep{zheng2024improving}, and kinetic energy regularization for Neural ODE-based generative models~\citep{finlay2020train}. However, these methods are often tailored to specific architectures or fail to offer principled uncertainty quantification. Critically, NDEs face a fundamental challenge in adopting dropout~\cite{srivastava2014dropout}, the cornerstone of regularization in discrete neural networks. For instance, na\"{\i}vely applying dropout to the drift function results in an ad-hoc solution that fails to capture the underlying continuous-time structure. While the jump diffusion process has been explored for modeling a continuous-time analogue of dropout~\cite{liu2020does}, the approach is neither a theoretically faithful analogue nor architecturally general, being applicable only to specific types of NDEs. A summary of the limitations of these existing regularization techniques is provided in Table~\ref{tab:methodological_comparison}.

\begin{table}[htb!]
\scriptsize\centering\captionsetup{justification=centering, skip=3pt}
\setlength{\tabcolsep}{3pt}
\caption{Comprehensive methodological comparison of proposed method with key regularization methods for NDEs}
\label{tab:methodological_comparison}
\begin{tabular}{C{1.6cm}C{1.0cm}C{1.0cm}C{1.0cm}C{1.0cm}C{1.2cm}}
\Xhline{0.7pt}
& \textbf{Na\"{\i}ve Dropout} & \textbf{Jump Diffusion} & \textbf{STEER} & \textbf{TA-BN}  & \textbf{Continuum Dropout} \\
\hline
Tailored for NDEs         & $\times$     & $\checkmark$ & $\checkmark$ & $\checkmark$ & $\checkmark$ \\
\hline
Discrete Analogue         & $-$          & $\triangle$  & $\times$     & $\times$     & $\checkmark$ \\
\hline
Quantifying Uncertainty   & $\checkmark$ & $\triangle$  & $\times$     & $\times$     & $\checkmark$ \\
\hline
Architectural Generality  & $\checkmark$ & $\times$     & $\checkmark$ & $\checkmark$ & $\checkmark$ \\
\Xhline{0.7pt}
\end{tabular}
\end{table}
%%%

To address this research gap, we introduce Continuum Dropout, a universally applicable regularization technique for NDEs built upon the theory of alternating renewal processes. The key idea is to formulate the on-off mechanism of dropout as a stochastic process that alternates between active (evolution) and inactive (paused) states in continuous time. This naturally leads to a modified NDE where the latent dynamics are randomly turned on and off during training. Moreover, in the spirit of Bayesian dropout~\cite{gal2016dropout} for discrete networks, Continuum Dropout offers a structured framework to quantify predictive uncertainty via Monte Carlo sampling at test time, enabling more trustworthy and uncertainty-aware modeling. Through extensive experiments, we demonstrate that our method not only outperforms existing regularization techniques but also produces better-calibrated probability estimates, highlighting its practical effectiveness.

\section{Preliminaries}
\paragraph{Notations.} For a $\mathbb R^d$-valued vector $\mathbf x$, denote the $i$-th component of $\mathbf x$ by $x^{(i)}$ for $i=1,\ldots, d$. For two matrices $\mathbf{A}$ and $\mathbf{B}$ of the same size, $\mathbf{A} \circ \mathbf{B}$ represents a matrix obtained by element-wise (Hadamard) product of $\mathbf A$ and $\mathbf B$. 
\subsection{Problem Statement}

Let  $\{\rvz_0(t)\}_{0\leq t\leq T}$ be a $d_z$-dimensional continuous-time dynamical (stochastic) process. %, \yk{which serves as the subject of our study.} % we are interested in. 
In particular, we consider $\rvz_0(t)$ as a latent process used for various tasks such as prediction, classification, and regression. Let $\mathbf{x}$ denote the $d_x$-dimensional input data, and \(\zeta : \mathbb{R}^{d_x} \rightarrow \mathbb{R}^{d_z} \) is an affine function with parameter \(\theta_{\zeta}\).

To represent the underlying process $\rvz_0(t)$, \citet{chen2018neural} proposed a Neural ODE as the solution of the following ordinary differential equation
\begin{equation}\label{eq:node}
\frac{\rm d \mathbf{z}_0(t)}{\rm d t} = \gamma(t, \mathbf{z}_0(t);\theta_{\gamma})\quad\text{with}\,\,\mathbf{z}_0(0)=\zeta(\mathbf{x};\theta_{\zeta}) ,
\end{equation}
where $0\leq t\leq T$, $\gamma(\cdot; \cdot; \theta_{\gamma})$ is a neural network parameterized by $\theta_{\gamma}$, which is inspired by the following residual connections in ResNet~\cite{he2016deep}:
\begin{equation}\label{eq:resnet}
\begin{aligned}
   \rvZ_{0}(k+1) &= \rvZ_{0}(k) + \gamma(\rvZ_{0}(k); \theta_k),\\
\end{aligned}
\end{equation}
where $\rvZ_0 (k)$ represents the hidden state of ResNet at the $k$-th layer. See \citet{sander2022residual} for a detailed discussion of the relationship between ResNets and Neural ODEs.

On the other hand, when it comes to preventing neural networks from overfitting, the most successful and powerful regularization technique in deep learning models is dropout, which randomly deactivates certain neurons during training~\citep{srivastava2014dropout}. Therefore, to further improve the performance of Neural Differential Equations (NDEs), one naturally pose the following question:

\begin{mdframed}[backgroundcolor=gray!20,linecolor=black]
\textbf{Q.} How can we incorporate the mechanism of dropout into the NDE framework?
\end{mdframed}

The answer to this question lies in developing a continuous-time analogue of a ResNet with dropout: 
\begin{equation}\label{eq:resnet_dropout}
\begin{aligned}
   \rvZ(k+1) &= \rvZ(k) + \mathbf{I}_k \circ \gamma(\rvZ(k); \theta_k),\\
\end{aligned}
\end{equation}
where $\rvZ (k)$ represents the hidden state of ResNet with dropout at the $k$-th layer and $\mathbb{P}(I_k^{(i)} = 0) = 1 - \mathbb{P}(I_k^{(i)} = 1) = p$, with $\mathbf{I}_k \in \mathbb{R}^{d_z}$. Compared to \Eqref{eq:resnet}, \Eqref{eq:resnet_dropout} introduces the Bernoulli variable $\mathbf{I}_k$, which controls the update of the hidden state at step $k+1$. Specifically, when $I_k^{(i)}=1$, the hidden state evolves as in the standard ResNet, whereas when $I_k^{(i)}=0$, its evolution is paused. 

Figure~\ref{fig:comparison} provides a visual comparison of a standard discrete network, a discrete network with dropout, and their continuous-time counterparts. Consider a standard neural network without dropout, as shown in Figure~\ref{subfig:nn}. When dropout is applied, some neurons are temporarily removed from the network, as shown in Figure~\ref{subfig:nn_dropout}. Now, consider the solution $\rvz_0(t)$ of an NDE, which represents the continuous counterpart of the discrete network shown in Figure~\ref{subfig:nde}. Our method constructs a new continuous-time process $\rvz(t)$ that emulates dropout by switching between an active state, where it evolves like $\rvz_0(t)$, and an inactive state, where its evolution is temporarily paused (see Figure~\ref{subfig:NRODE}). This construction serves as the continuous analogue of the ResNet with dropout in \Eqref{eq:resnet_dropout}, effectively incorporating the dropout mechanism into the NDE framework.

Our key insight is that the behavior of dropout in neural networks in continuous-time can be interpreted as a stochastic process where the system alternates between periods of active (evolution) and inactive (paused) states over random time intervals. This type of stochastic behavior is naturally described by alternating renewal processes. Based on this insight, we provide a more robust and theoretically sound regularization technique for NDEs. In addition, we address important follow-up questions: i) what is the proper definition of the dropout rate in continuous-time settings? and ii) how can the dropout rate be controlled throughout training?

\begin{figure*}
    \centering
    \captionsetup{justification=centering, skip=3pt}
    \includegraphics[width=0.82\textwidth]{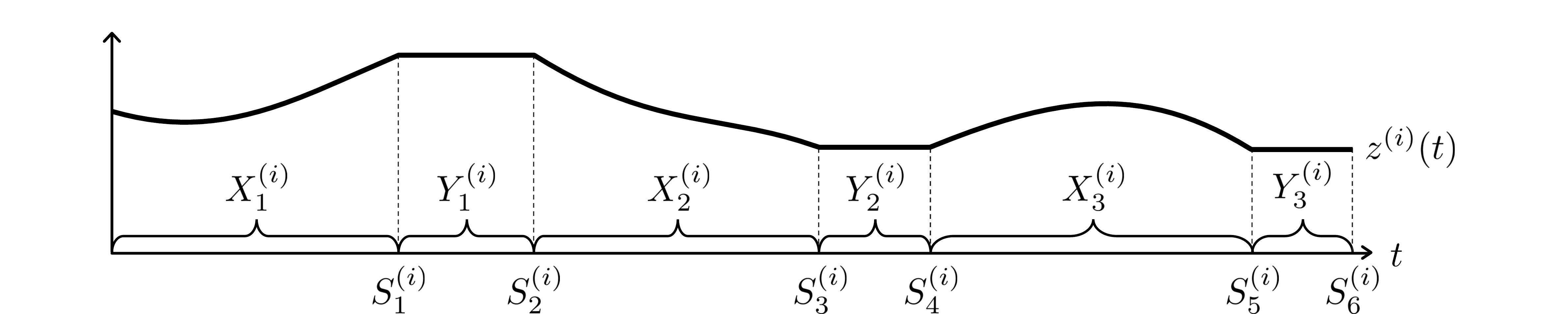}
    \caption{Illustration of $i$-th component of the latent process $\rvz(t)$ with Continuum Dropout}
    \label{fig:alternating}
\end{figure*}

\subsection{Alternating Renewal Process}\label{sec:alternating}

We briefly review the alternating renewal process, which is a key concept in our methodology. This class of regenerative stochastic processes is commonly used to model systems that alternate between two states over time. It has found wide applications in fields such as operations research, queueing theory, and reliability engineering \citep{stanford1979reneging, heath1998heavy, pham1999system, birolini1974some}. For a more detailed explanation, we refer to Appendix~\ref{app:renewal_process}.

Consider a dynamical system that alternates between two (active and inactive) states. Let  $\{X_n\}_{n\geq 1}$ be the sequence of i.i.d. random variables with a distribution $G$ representing the lengths of time that the system is in active state, and let $\{Y_n\}_{n\geq 1}$ be the sequence of i.i.d. random variables with a distribution $H$ representing the lengths of time that the system is in inactive state. Assume that $X_n$ and $Y_m$ are independent for any $n \neq m$. However, $X_n$ and $Y_n$ (for the same index $n$) are not necessarily independent. Moreover, a renewal is defined as the alternation between active and inactive states where the length of each renewal, $\{X_n +Y_n\}_{n\geq 1}$, has a common distribution $F$. Then, the renewal process associated with $\{X_n, Y_n\}_{n\geq1}$ is called an \textit{alternating renewal process}. In particular, we focus on an exponential alternating renewal process where $G$ and $H$ are exponentially distributed. This choice is principled because the memoryless property of the exponential distribution ensures that the state transition probability is independent of the elapsed time, which is consistent with how dropout is applied independently of layer depth in discrete networks.

With $S_0 = 0$, define $S_n$ by for each $n \ge 1$,
\begin{align}\label{eq:arrival_time}
    S_n = \sum_{k=1}^n T_k,
\end{align}
where $T_{2k-1} := X_k$ and $T_{2k} := Y_k$. 
For $t \ge 0$, let $N(t)$ denote the number of renewals in the interval $[0,t]$, i.e.,
\begin{align}\label{eq:counting_proc}
    N(t) = \sum_{n=1}^{\infty} \1_{\{ S_{2n} \le t \}}.
\end{align}
So, $\{N(t)\}_{t \ge 0}$ is the counting process associated with $S_{2n}$.

Here, $S_{2n}$ indicates the arrival time of the $n$-th renewal, while $S_{2n-1}$ denotes 
the start time of the $n$-th inactive state, which lasts for a duration of $Y_n$ until it 
alternates to the next active state. Moreover, at time $t$, the system is in the active state if $S_{2n-2} \le t < S_{2n-1}$, and in the inactive state if $S_{2n-1} \le t < S_{2n}$.

\section{The Proposed Dropout Method for NDEs}\label{sec:methodology}
This section introduces Continuum Dropout. First, we present its mathematical formulation based on alternating renewal processes. Then, we explain its practical application, covering the definition of the dropout rate, hyperparameter tuning, and its use for uncertainty quantification.

\subsection{Continuum Dropout}\label{sec:NRODE}

Let $\rvz_0(t) \in \sR^{d_z}$, be the solution of an Neural ODE given in \Eqref{eq:node}. While we use a Neural ODE for illustrative purposes throughout this section, our framework is universally applicable to any variant of NDEs. For each $i=1,2,\ldots, d_z$, let $\{X_n^{(i)}\}_{n\geq 1}$ and $\{Y_n^{(i)}\}_{n\geq 1}$be sequences of i.i.d. exponential random variables with rates $\lambda_1$ and $\lambda_2$, representing the lengths of active and inactive periods for the $i$-th component of the latent process, respectively. These time periods define an dropout indicator function \( \mathbf{I}_{\lambda_1, \lambda_2}(t) \in \mathbb{R}^{d_z} \), whose components switch between $1$ (active) and $0$ (inactive) according to the alternating renewal process defined in \Eqref{eq:counting_proc} and \Eqref{eq:arrival_time}. That is, the $i$-th component of \( \mathbf{I}_{\lambda_1, \lambda_2}(t) \in \mathbb{R}^{d_z} \) is given by 
\[
I^{(i)}_{\lambda_1, \lambda_2}(t) =
\begin{cases}
1 & \text{if } S_{2n}^{(i)} \leq t < S_{2n+1}^{(i)}, \\
0 & \text{if } S_{2n+1}^{(i)} \leq t < S_{2n+2}^{(i)},
\end{cases}
\quad \text{for all } n \geq 0.
\]
Then, the dynamics of a latent process $\rvz(t)$ with Continuum Dropout are governed by the following differential equation:
\begin{equation}\label{eq:nrode}
\frac{\rd \rvz(t)}{\rd t} = \mathbf{I}_{\lambda_1, \lambda_2}(t) \circ \gamma(t, \rvz(t); \theta_{\gamma}),
\end{equation}
with $\rvz(0) = \rvz_0(0)$. 
The solution to \Eqref{eq:nrode} can be understood through its integral form. During an active period, for $S_{2n}^{(i)} \le t < S_{2n+1}^{(i)}$, the trajectory $z^{(i)}$ shares the same differential equation as the original latent process $z_0^{(i)}$. That is, its solution evolves over this interval as follows:
$$z^{(i)}(t) = z^{(i)}(S_{2n}^{(i)}) + \int_{S_{2n}^{(i)}}^{t} \gamma^{(i)}(s, z(s); \theta_\gamma) \,\rd s.$$
In contrast, during an inactive period, for $S_{2n+1}^{(i)} \le t < S_{2n+2}^{(i)}$, its evolution is paused and the state remains at the fixed value $z^{(i)}(S_{2n+1}^{(i)})$, i.e., $z^{(i)}(t) = z^{(i)}(S_{2n+1}^{(i)})$. Figure 2 provides a visual example of a trajectory $z^{(i)}(t)$.

\subsection{Dropout Rate}\label{subsec:dropout_rate}

In discrete neural networks, the dropout technique has a hyperparameter called the dropout rate $p$. Specifically, if the dropout rate is $p$, then each neuron has a probability of being deactivated for each training iteration. However, in the continuous setting of NDEs, the traditional concept of the dropout rate cannot be directly applicable because each neuron in NDEs represents the value of a continuous-time process at a specific time rather than a countable entity. Therefore, the concept of the dropout rate needs to be redefined to fit the continuous version. 

To this end, we define the continuous-time dropout rate by drawing a direct analogy to its discrete counterpart. In a discrete network, the final prediction is based on the state of the final layer, where each neuron is deactivated with probability $p$. In an NDE, the final prediction is similarly calculated based on the latent state at the terminal time $T$. A natural and consistent definition for the dropout rate $p$ is therefore the probability that a component of the latent process is in the inactive state at this terminal time $T$. This concept is formally known as \textit{instantaneous availability} in renewal theory, which is both theoretically grounded and analytically tractable. Thus, in Continuum Dropout, the dropout rate $p$ is defined as $p=1-A(T)$ where $A(T):= \sP(\{z^{(i)}(T) \mbox{ is in the active state}\})$.

This probability $A(T)$ is closely associated with the intensity parameters $(\lambda_1, \lambda_2)$ of the alternating renewal process, which determine the expected lengths of the active and inactive periods, respectively. Consequently, the dropout rate $p$ can be expressed in terms of $(\lambda_1, \lambda_2)$ in the following theorem. 

\begin{theorem}\label{thm:drop_rate}
For fixed $T>0$, let $\{\rvz(t)\}_{0\leq t \leq T}$ be the latent process with Continuum Dropout where the active and inactive period lengths,  $\{X_n^{(i)}\}_{n\geq 1}$ and  $\{Y_n^{(i)}\}_{n\geq 1}$, are i.i.d. exponential random variables with rates $\lambda_1$ and $\lambda_2$, respectively. Then, the dropout rate $p\in (0,1)$ is given by 
\begin{equation}\label{eq:dropout_rate}
    p = \frac{\lambda_1}{\lambda_1+\lambda_2}\left ( 1 - e^{-(\lambda_1 + \lambda_2)T} \right ).
\end{equation}
\end{theorem}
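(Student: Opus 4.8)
The plan is to exploit the memoryless property of the exponential holding times to recognize the indicator process as a two-state continuous-time Markov chain, and then to solve the resulting forward equation for the instantaneous availability $A(T)$. Fix a component index $i$ and suppress it throughout. Since $S_0 = 0$, the definition of $I^{(i)}_{\lambda_1,\lambda_2}$ places the system in the active state at $t=0$, so the boundary condition is $A(0) = 1$.

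First I would argue that the state process $t \mapsto I_{\lambda_1,\lambda_2}(t) \in \{0,1\}$ (with $1$ active and $0$ inactive) is a time-homogeneous continuous-time Markov chain. This is the crucial step: because the successive active sojourns are $\mathrm{Exp}(\lambda_1)$, the inactive sojourns are $\mathrm{Exp}(\lambda_2)$, and these durations are independent, the memoryless property guarantees that the chance of switching in $[t, t+h]$ depends only on the current state and not on the time already spent in it. Hence the process has generator with transition rate $\lambda_1$ from active to inactive and rate $\lambda_2$ from inactive to active.

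Next, writing $A(t) = \sP(I_{\lambda_1,\lambda_2}(t) = 1)$ and using $\sP(I_{\lambda_1,\lambda_2}(t)=0) = 1 - A(t)$, the forward Kolmogorov (master) equation reads
\begin{equation*}
\frac{\rd A(t)}{\rd t} = -\lambda_1 A(t) + \lambda_2\bigl(1 - A(t)\bigr) = \lambda_2 - (\lambda_1 + \lambda_2)A(t).
\end{equation*}
This is a linear first-order ODE, and its solution subject to $A(0) = 1$ is
\begin{equation*}
A(t) = \frac{\lambda_2}{\lambda_1 + \lambda_2} + \frac{\lambda_1}{\lambda_1 + \lambda_2}\, e^{-(\lambda_1 + \lambda_2)t}.
\end{equation*}
Evaluating at $t = T$ and using $p = 1 - A(T)$ then yields $p = \frac{\lambda_1}{\lambda_1 + \lambda_2}\bigl(1 - e^{-(\lambda_1 + \lambda_2)T}\bigr)$, after which I would observe that both factors lie strictly in $(0,1)$ for any finite $T>0$, so $p \in (0,1)$ as claimed.

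I expect the main obstacle to be the rigorous justification of the Markov reduction, namely verifying that the alternating structure with independent exponential sojourns genuinely yields a time-homogeneous two-state chain with the stated rates, rather than simply asserting it. If one prefers to avoid invoking the Markov property directly, an alternative route is renewal-theoretic: conditioning on the first full cycle gives the renewal equation $A(t) = e^{-\lambda_1 t} + \int_0^t A(t-s)\,\rd F(s)$, where $F$ is the law of $X_1 + Y_1$, which can be solved via Laplace transforms. This is more computational but sidesteps the Markov argument; either way, once $A(t)$ is in hand the remaining algebra is routine.
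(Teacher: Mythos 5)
Your proposal is correct, but it reaches the result by a genuinely different route than the paper. You reduce the indicator process to a two-state continuous-time Markov chain (rates $\lambda_1$ for active-to-inactive, $\lambda_2$ for inactive-to-active), write the forward Kolmogorov equation $\frac{\rd A(t)}{\rd t} = \lambda_2 - (\lambda_1+\lambda_2)A(t)$ with $A(0)=1$, and solve a linear first-order ODE; the paper instead stays within renewal theory, conditioning on the first full cycle $S_2^{(i)} = X_1^{(i)} + Y_1^{(i)}$ to obtain the renewal-type equation $A(t) = e^{-\lambda_1 t} + \int_0^t A(t-\tau) f(\tau)\,\mathrm{d}\tau$ and then inverting via Laplace transforms, using $\mathcal{L}\{f\}(s) = \frac{\lambda_1}{s+\lambda_1}\frac{\lambda_2}{s+\lambda_2}$. (Indeed, the ``alternative route'' you sketch at the end is precisely the paper's proof.) Your Markov argument is shorter and more elementary, but it is rigid: it hinges entirely on memorylessness, and, as you correctly flag, it also requires cross-independence of $X_n^{(i)}$ and $Y_n^{(i)}$ so that the residual sojourn law depends only on the current state --- this holds in the Continuum Dropout construction, where all sojourn variables are independent exponentials, so the gap you worry about is fillable by a standard argument. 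The paper's renewal/Laplace route requires the same independence (it is used in the convolution step for $\mathcal{L}\{f\}$), but its conditioning step is valid for \emph{arbitrary} sojourn distributions $G$ and $H$, with exponentiality entering only in the final transform computation; this makes it the more natural companion to the alternating-renewal framework on which the whole method is built, and the same machinery is reused in the proof of Corollary~\ref{cor:drop_rate} to compute the renewal function $m^{(i)}(t)$, where a Markov shortcut of your kind is not available.
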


The proof for Theorem~\ref{thm:drop_rate} can be found in Appendix~\ref{app:proofs}. Given the desired level of dropout rate $p$, there are infinitely many possible pairs $(\lambda_1,\lambda_2)$ that satisfy \Eqref{eq:dropout_rate}. To uniquely determine $(\lambda_1, \lambda_2)$, we impose a second  condition on the expected number of renewals of $\rvz(t)$ in the interval $[0,T]$, denoted as $m:=\E[N^{(i)}(T)]$\footnote{Note that $\E[N^{(1)}(T)]=\E[N^{(2)}(T)]=\cdots=\E[N^{(d_z)}(T)]$ since $\{X_n^{(i)}\}_{n\geq1}$ and $\{Y_n^{(i)}\}_{n\geq1}$ have common distributions $\text{Exp}(\lambda_1)$ and $\text{Exp}(\lambda_2)$, respectively, for all $i$.}. In other words, $m$ represents the average number of repetitions of the active and inactive states in the interval $[0,T]$. Consequently, the user-specified pair $(p,m)$ of Continuum Dropout provides two constraints to uniquely determine the intensity parameters $(\lambda_1, \lambda_2)$, as stated in the following Corollary.

\begin{corollary}\label{cor:drop_rate} Let $p$ be the dropout rate and $m$ be the expected number of renewals of $\rvz(t)$ in the interval $[0,T]$. Given $p\in (0,1)$ and $m>0$, the hyperparameters $\lambda_1$ and $\lambda_2$ of Continuum Dropout can be determined by solving the following system of nonlinear equations:
\begin{equation}\label{eq:pm}
\begin{cases}
\begin{aligned}
p &= \dfrac{\lambda_1}{\lambda_1+\lambda_2}\left ( 1 - e^{-(\lambda_1 + \lambda_2)T} \right ), \\
m &= \dfrac{\lambda_1\lambda_2}{\lambda_1+\lambda_2}\,T - \dfrac{\lambda_1\lambda_2}{(\lambda_1+\lambda_2)^2}\left( 1-e^{-(\lambda_1+\lambda_2)T} \right).
\end{aligned}
\end{cases}
\end{equation}
In particular, for large $T$, $\lambda_1$ and $\lambda_2$ can be approximated by
\begin{equation*}\label{eq:asymp}
\lambda_1 \approx \frac{m}{(1-p)\,T}, \quad \lambda_2 \approx \frac{m}{p\,T}.
\end{equation*}
\end{corollary}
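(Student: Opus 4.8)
The first identity in \Eqref{eq:pm} is exactly the conclusion of Theorem~\ref{thm:drop_rate}, so the plan reduces to (i) deriving the closed form for $m=\E[N^{(i)}(T)]$ stated in the second line, and (ii) extracting the claimed large-$T$ approximations. Throughout I fix a single coordinate $i$ and drop the superscript, since all coordinates are i.i.d. The key structural observation is that, because the active and inactive durations are exponential, the indicator $I_{\lambda_1,\lambda_2}(t)$ is a two-state continuous-time Markov chain on the state space $\{0,1\}$ with active-to-inactive rate $\lambda_1$ and inactive-to-active rate $\lambda_2$, started in the active state $1$. Its availability $A(t)=\sP(I_{\lambda_1,\lambda_2}(t)=1)$ solves the forward equation $A'(t)=-\lambda_1 A(t)+\lambda_2\big(1-A(t)\big)$ with $A(0)=1$, yielding
\begin{equation*}
A(t)=\frac{\lambda_2}{\lambda_1+\lambda_2}+\frac{\lambda_1}{\lambda_1+\lambda_2}e^{-(\lambda_1+\lambda_2)t}.
\end{equation*}
This is the same computation underlying Theorem~\ref{thm:drop_rate}, now retained for all $t$ rather than only at $t=T$.

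For step (i), I would identify the renewal counter $N(t)$ with the number of inactive-to-active transitions of this chain in $[0,t]$: by definition $S_{2n}$ marks the end of the $n$-th inactive period, so each unit increment of $N$ coincides with a jump from state $0$ to state $1$. The standard expected-jump-count formula for a Markov jump process then gives $\E[N(t)]=\lambda_2\int_0^t\big(1-A(s)\big)\,\rd s$, since such jumps fire at rate $\lambda_2$ precisely while the chain is inactive. Substituting $1-A(s)=\tfrac{\lambda_1}{\lambda_1+\lambda_2}\big(1-e^{-(\lambda_1+\lambda_2)s}\big)$ and integrating term by term produces exactly the second equation of \Eqref{eq:pm}. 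As a cross-check, the elementary renewal theorem gives $\E[N(t)]/t\to 1/\E[X+Y]=\lambda_1\lambda_2/(\lambda_1+\lambda_2)$, matching the linear coefficient; a fully self-contained alternative is to invert the renewal transform $\widehat M(s)=\widehat F(s)/\big(1-\widehat F(s)\big)$ with $\widehat F(s)=\tfrac{\lambda_1\lambda_2}{(\lambda_1+s)(\lambda_2+s)}$ by partial fractions, which gives the identical expression.

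For step (ii), I would pass to the regime $(\lambda_1+\lambda_2)T\gg 1$, in which $e^{-(\lambda_1+\lambda_2)T}$ is negligible and the bounded correction $\tfrac{\lambda_1\lambda_2}{(\lambda_1+\lambda_2)^2}\big(1-e^{-(\lambda_1+\lambda_2)T}\big)$ is dominated by the term linear in $T$. The system then collapses to $p\approx\lambda_1/(\lambda_1+\lambda_2)$ and $m\approx\tfrac{\lambda_1\lambda_2}{\lambda_1+\lambda_2}T$. Writing $\lambda_1=p\mu$ and $\lambda_2=(1-p)\mu$ with $\mu:=\lambda_1+\lambda_2$ turns the second relation into $p(1-p)\mu T\approx m$, hence $\mu\approx\tfrac{m}{p(1-p)T}$ and the claimed $\lambda_1\approx\tfrac{m}{(1-p)T}$ and $\lambda_2\approx\tfrac{m}{pT}$.

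I expect the main obstacle to be pinning down step (i) rigorously rather than the algebra: one must justify the transition-rate identity $\E[N(t)]=\lambda_2\int_0^t\big(1-A(s)\big)\,\rd s$ (equivalently, solve the associated renewal equation), after which the rest is elementary integration and a $2\times2$ solve. A secondary subtlety worth flagging is the precise meaning of ``large $T$'': substituting the approximate solution gives $(\lambda_1+\lambda_2)T\approx m/\big(p(1-p)\big)$, which remains bounded when $p$ and $m$ are held fixed, so the approximation is genuinely controlled by $m/\big(p(1-p)\big)$ being large and is most honestly stated in those terms.
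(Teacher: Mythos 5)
Your proposal is correct, and it reaches both halves of the corollary by a genuinely different route than the paper. The paper works purely within renewal theory: it plugs the cycle distribution $F$ of $X_n^{(i)}+Y_n^{(i)}$ into the renewal equation $m^{(i)}(t)=F(t)+\int_0^t f(t-\tau)m^{(i)}(\tau)\,\mathrm{d}\tau$, solves it by Laplace transform and partial fractions to get the closed form for $m^{(i)}(t)$, and then obtains the large-$T$ approximations by invoking two classical limit theorems (the elementary renewal theorem for $m/T\to\lambda_1\lambda_2/(\lambda_1+\lambda_2)$ and the steady-state availability theorem for $p\to\lambda_1/(\lambda_1+\lambda_2)$). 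You instead exploit the memorylessness to view $I_{\lambda_1,\lambda_2}(t)$ as a two-state Markov jump chain: the availability comes from the forward Kolmogorov ODE rather than a renewal argument, and the renewal count comes from the compensator identity $\E[N(t)]=\lambda_2\int_0^t\bigl(1-A(s)\bigr)\,\mathrm{d}s$, which after substitution reproduces the second line of \Eqref{eq:pm} exactly; your asymptotics then follow by directly discarding the exponential and bounded terms in the exact formulas, with no appeal to renewal limit theorems. The trade-off is clear: the paper's transform method rests on the renewal equation, which holds for arbitrary holding-time distributions (only the inversion step uses exponentiality), whereas your Markovian argument is intrinsically tied to the exponential case but is more elementary once the compensator formula---which you correctly flag as the one step requiring justification (it is the standard L\'evy/Watanabe characterization of the jump counting process)---is granted. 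Two further points in your favor: your correct identification of $N(t)$ with the number of $0\to1$ transitions (since $S_{2n}$ is precisely the $n$-th inactive-to-active epoch) makes the compensator step legitimate, and your closing observation that the approximate solution forces $(\lambda_1+\lambda_2)T\approx m/\bigl(p(1-p)\bigr)$, so that the regime of validity is really ``$m/\bigl(p(1-p)\bigr)$ large'' rather than ``$T$ large with everything else fixed,'' is a sharper and accurate characterization of the approximation than the paper itself provides.
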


The proof for Corollary~\ref{cor:drop_rate} can be found in Appendix~\ref{app:proofs}. 
While standard dropout has a single hyperparameter, the dropout rate $p$, the proposed Continuum Dropout intrdocues two hyperparameters, $p$ and $m$. Nevertheless, our sensitivity analysis summarized in Appendix~\ref{sec:sensitivity} indicates that the performance of Continuum Dropout is not highly sensitive to the choice of $m$, which reduces the time and effort required for hyperparameter tuning.
Moreover, a detailed explanation of the parameters $(p, m)$ for the trajectories with Continuum Dropout is provided in Figure~\ref{fig:sample_diff_p} of Appendix~\ref{sec:sensitivity}. 

\setlength{\textfloatsep}{10pt}
\begin{algorithm}[!t]
    \small
    \caption{Training and Testing with Continuum Dropout}
    \label{algorithm:dropout}
    \raggedright

    \textbf{Input}: training data $\{\mathbf{x}_i, \mathbf{y}_i\}_{i=1}^{N_{\text{train}}}$, testing data $\{\mathbf{x}^{\dagger}_i\}_{i=1}^{N_{\text{test}}}$, time interval $[0,T]$, hyperparameters $(p, m) \in [0, 1) \times \mathbb{R}^+$, number of epochs $N_{\text{epochs}}$, number of MC simulations $N_{\text{MC}}$
    
    \textbf{Compute}: $(\lambda_1, \lambda_2)$ from $(p, m)$ according to \Eqref{eq:pm}

    \textbf{Initialize}: Network parameters $\theta = [\theta_\zeta, \theta_\gamma, \theta_\sigma, \theta_{\text{MLP}}]$

    \textbf{Training Phase}
    \begin{algorithmic}[1]
        \For{$\text{epoch} = 1$ \textbf{to} $N_{\text{epochs}}$}
            \For{each $(\mathbf{x}, \mathbf{y}) \in \{\mathbf{x}_i, \mathbf{y}_i\}_{i=1}^{N_{\text{train}}}$}
                \State $\mathbf{z}(0) \gets \zeta(\mathbf{x}; \theta_\zeta)$
                \State $\mathbf{z}(T) \gets \text{ODE\_Solve}\Bigl(\mathbf{I}_{\lambda_1,\lambda_2} \circ \gamma_\theta,\mathbf{z}(0),\, [0,T]\Bigr)$\footnotemark
                \State $\mathbf{y}_{\text{pred}} \gets \text{MLP}(\mathbf{z}(T); \theta_{\text{MLP}})$
                \State Compute loss $\mathcal{L}(\mathbf{y}_{\text{pred}}, \mathbf{y})$
            \EndFor
            \State Compute gradients $\nabla_\theta \mathcal{L}$ and update $\theta$ using optimizer
        \EndFor
    \end{algorithmic}

    \textbf{Test Phase}
    \begin{algorithmic}[1]
        \For{each $\mathbf{x}^\dagger \in \{\mathbf{x}_i^\dagger\}_{i=1}^{N_{\text{test}}}$}
            \State $\mathbf{z}(0) \gets \zeta(\mathbf{x}^\dagger; \theta_\zeta)$
            \State Initialize $\mathbf{z}_{\text{sum}} \gets \mathbf{0}$
            \For{$j = 1$ \textbf{to} $N_{\text{MC}}$} \Comment{MC Simulation}
                \State $\mathbf{z}_j(T) \gets \text{ODE\_Solve}\Bigl(\mathbf{I}_{\lambda_1,\lambda_2} \circ \gamma_\theta,\mathbf{z}(0),\, [0,T]\Bigr)$
                \State $\mathbf{z}_{\text{sum}} \gets \mathbf{z}_{\text{sum}} + \mathbf{z}_j(T)$
            \EndFor
            \State $\bar{\mathbf{z}}(T) \gets \mathbf{z}_{\text{sum}}/N_{\text{MC}}$
            \State $\mathbf{y}_{\text{pred}} \gets \text{MLP}(\bar{\mathbf{z}}(T); \theta_{\text{MLP}})$
        \EndFor
        \State \Return $\{\mathbf{y}_{\text{pred}}\}_{i=1}^{N_{\text{test}}}$
    \end{algorithmic}
\end{algorithm}
\footnotetext{For brevity, we denote $\gamma(\cdot, \cdot; \theta_\gamma)$ simply as $\gamma_\theta$.}

\subsection{Uncertainty Quantification in Continuum Dropout}\label{sec:uq}

A key strength of our framework is its natural capacity to quantify predictive uncertainty, particularly epistemic uncertainty, driven by the stochasticity of Continuum Dropout. While many NDEs such as Neural ODE~\citep{chen2018neural} and Neural CDE~\citep{kidger2020cde} generate a single and deterministic latent trajectory for a given input, applying Continuum Droput transforms the dynamics into a stochastic process. This inherent stochasticity, which can be also retained at test time, allows for uncertainty quantification in the spirit of Monte Carlo Dropout~\cite{gal2016dropout}. 

The procedure involves performing $N_{MC}$ stochastic forward passes for a single input. In each pass, a new realization of the dropout indicator function $\mathbf{I}_{\lambda_1, \lambda_2}(t)$ is sampled from the alternating renewal process. Since this indicator function directly modifies the NDE's dynamics, each sample yields a distinct latent trajectory $\{\mathbf{z}_{j}(T)\}_{j=1}^{N_{\text{MC}}}$. This collection forms an empirical distribution over the model's prediction for the given input. Then, we use the predictive mean as the point estimator for the prediction: 
\[
\bar{\mathbf z}(T)\;=\;\frac{1}{N_{\text{MC}}}\sum_{j=1}^{N_{\text{MC}}}\mathbf z_j(T),
\]
and use the sample covariance as a proxy for the model's uncertainty:
\[
\widehat{\operatorname{Var}}\,\!\bigl[\mathbf z(T)\bigr]\;=\;
\frac{1}{N_{\text{MC}}-1}\sum_{j=1}^{N_{\text{MC}}}
\bigl(\mathbf z_j(T)-\bar{\mathbf z}(T)\bigr)
\bigl(\mathbf z_j(T)-\bar{\mathbf z}(T)\bigr)^{\!\top}.
\]
The point estimate $\bar{\mathbf z}(T)$ is then passed through the output MLP to obtain the final prediction. 

As we demonstrate in Section~~\ref{sec:mc_sample}, we empirically observe that as few as $N_{\text{MC}} \ge 5$ simulations are sufficient to obtain stable performance. The complete procedure is summarized in Algorithm~\ref{algorithm:dropout}.

\subsection{Comparison of Alternating Renewal Processes and Jump Diffusion Processes in Dropout Modeling}\label{sec:why?}

While the use of a jump diffusion process to model dropout for NDEs is an interesting direction~\cite{liu2020does}, we argue that it does not serve as a faithful continuous-time analogue. As discussed in Appendix~\ref{app:discretization}, the key issue lies in its discretization: the discrete-time version of their proposed process does not recover the standard dropout formula for ResNets with dropout in \Eqref{eq:resnet_dropout}. This theoretical mismatch suggests a fundamental inconsistency in its modeling of the dropout mechanism. In contrast, the discretization of Continuum Dropout, by construction, aligns with \Eqref{eq:resnet_dropout}, providing a more precise representation of the dropout mechanism.

Beyond theoretical misalignment, alternating renewal processes offer significant practical advantages over the jump diffusion model.  First, the jump diffusion approach is highly sensitive to the dropout rate, improving performance only for very small values ($p<0.1$), which deviates from standard practice and requires extensive tuning (see Appendix~\ref{app:liu_hyperparameter}). On the other hand, Continuum Dropout provides robust performance improvements across a wide range of typical dropout rates. Second, Continuum Dropout consistently outperforms the jump diffusion model in various settings. Lastly, Continuum Dropout is universally applicable to any variant of NDEs. However, the jump diffusion model is limited to specific NDE structures, as detailed in Appendix~\ref{app:why_cannot}.

\section{Experiments}\label{sec:exp}
We perform numerical experiments using real-world datasets to evaluate the effectiveness of the proposed method (Continuum Dropout) in NDEs. First, we compare the proposed method with several regularization methods in Section~\ref{subsec:regularization}. Then, we assess the performance of the proposed method across various NDEs in Section~\ref{subsec:ndes}. In Section \ref{sec:calibration}, we examine Continuum Dropout’s calibration results to verify its uncertainty-aware modeling capability. Finally, Section \ref{sec:mc_sample} analyzes how the number of Monte Carlo samples affects both accuracy and computational cost. For Continuum Dropout, we search for optimal hyperparameters among the dropout rate $p \in \lbrack 0.1, 0.2, 0.3, 0.4, 0.5 \rbrack$ and the expected number of renewals $m \in \lbrack 5, 10, 50, 100 \rbrack$. Throughout our experiments, we use five Monte Carlo samples at test time. A detailed analysis of the impact of MC sample size during the test phase of Continuum Dropout is presented in Section~\ref{sec:mc_sample}. Further details of datasets and experimental settings are summarized in Appendix~\ref{sec:datasets} and Appendix~\ref{sec:settings}, respectively. 

\subsection{Superior Performance Over Existing Regularization Methods}\label{subsec:regularization}

\begin{table*}[htb!]
\scriptsize\centering\captionsetup{justification=centering, skip=3pt}
\caption{Performance of various regularization methods on time series classification}\label{tab:dropout_time}
\begin{tabular}{ccccc}
\Xhline{0.7pt}
\textbf{Regularization Methods}       & \textbf{SmoothSubspace}    & \textbf{ArticularyWordRecognition}  & \textbf{ERing}       & \textbf{RacketSports}      \\ \hline 
Baseline (Neural ODE)   & 0.569 (0.040)       & 0.859 (0.005)    & 0.839 (0.018)     & 0.565 (0.065)         \\ \hline
Na\"{\i}ve Dropout for Drift Network & 0.594 (0.016) & 0.862 (0.014) & 0.844 (0.031) & 0.598 (0.045)\\
Na\"{\i}ve Dropout for MLP Classifier & 0.600 (0.067) & 0.860 (0.006) & 0.856 (0.078) & 0.554 (0.076) \\
Jump Diffusion \cite{liu2020does}   & 0.617 (0.043)   & 0.871 (0.054)  &  0.861 (0.064)  & 0.609 (0.031)        \\
Jump Diffusion + TTN \cite{liu2020does} & 0.606 (0.018) & 0.876 (0.025) & 0.878 (0.025) & 0.598 (0.076)        \\ 
STEER \citep{ghosh2020steer} & 0.578 (0.056) & 0.871 (0.036)  & 0.850 (0.029) & 0.592 (0.054) \\
TA-BN \citep{zheng2024improving} & 0.578 (0.035) & 0.873 (0.023)  & 0.859 (0.023)  & 0.587 (0.035) \\ \hline
\textbf{Continuum Dropout} (ours)  & \textbf{0.629 (0.022)} & \textbf{0.884 (0.012)} & \textbf{0.881 (0.023)} & \textbf{0.619 (0.033)} \\
\Xhline{0.7pt}
\end{tabular}
\end{table*}

\begin{table*}[t]
\scriptsize\centering\captionsetup{justification=centering, skip=3pt}
\caption{Performance of various regularization methods on image classification}\label{tab:dropout}
\begin{tabular}{@{\hspace{0.2cm}}cC{1.8cm}C{1.8cm}C{1.8cm}C{1.8cm}@{\hspace{0.2cm}}}
\Xhline{0.7pt}
\textbf{Regularization Methods}       & \textbf{CIFAR-100}    & \textbf{CIFAR-10}  & \textbf{STL-10}       & \textbf{SVHN}      \\  \hline
Baseline (Neural ODE)   & 0.745 (0.012) & 0.739 (0.008) & 0.707 (0.007) & 0.913 (0.004) \\  \hline
Na\"{\i}ve Dropout for Drift Network & 0.759 (0.004) & 0.749 (0.017) & 0.708 (0.002) & 0.917 (0.004) \\
Na\"{\i}ve Dropout for MLP Classifier & 0.750 (0.006) & 0.751 (0.007) & 0.707 (0.004) & 0.923 (0.001) \\ 
Jump Diffusion \cite{liu2020does}   & 0.761 (0.005) & 0.750 (0.004) & 0.711 (0.002) & 0.916 (0.004) \\
Jump Diffusion + TTN \cite{liu2020does} & 0.760 (0.003) & 0.750 (0.005) & 0.709 (0.003) & 0.917 (0.005) \\ 
STEER \citep{ghosh2020steer} & 0.756 (0.006) & 0.747 (0.008) & 0.710 (0.003) & 0.915 (0.003)\\
TA-BN \citep{zheng2024improving} & 0.758 (0.003) & 0.762 (0.007) & 0.701 (0.007) & 0.915 (0.004) \\  \hline
\textbf{Continuum Dropout} (ours)     & \textbf{0.762 (0.002)} & \textbf{0.765 (0.007)} & \textbf{0.719 (0.005)} & \textbf{0.925 (0.004)} \\ 
\Xhline{0.7pt}
\end{tabular}
\end{table*}

We compare the proposed method (Continuum Dropout) with existing regularization methods for NDEs, using Neural ODE as the baseline. Specifically, we consider \textbf{Na\"{\i}ve Dropout for Drift Network} and \textbf{Na\"{\i}ve Dropout for MLP Classifier}, where na\"{\i}ve dropout \citep{srivastava2014dropout} is applied to the drift network and the MLP classifier, respectively. Additionally, we also consider \textbf{Jump Diffusion model} \cite{liu2020does}, \textbf{STEER} \citep{ghosh2020steer}, and \textbf{Temporal Adaptive Batch Normalization (TA-BN)} \citep{zheng2024improving}. We compute the performance of both with and without dropout noise during the test phase for Jump Diffusion model, following their experimental setup. TTN refers to the use of dropout noise not only during training but also during the test phase. 
Table~\ref{tab:dropout_time} and \ref{tab:dropout} show the classification performance of various regularization methods on time series and image datasets, respectively. We follow the image dataset selection used in previous studies on regularization methods. The performance of each method is evaluated in terms of top-5 accuracy for CIFAR-100 and top-1 accuracy for the other datasets. Continuum Dropout consistently demonstrates superior performance over existing regularization methods across various datasets.

\subsection{Broad Applicability in NDEs}\label{subsec:ndes}

We investigate whether the proposed method (Continuum Dropout) can demonstrate consistent improvements across various NDEs. To evaluate this, we conduct experiments on both time series and image classification tasks. Notably, the time series classification tasks considered here involve challenges such as irregular observations, label imbalance, and missing values where na\"{\i}ve Neural ODE and Neural SDE tend to struggle.

For time series classification tasks, we consider the following NDEs: Neural ODEs (\textbf{GRU-ODE}~\citep{de2019gru}, and \textbf{ODE-RNN}~\citep{rubanova2019latent}), Neural CDEs (\textbf{Neural CDE}~\citep{kidger2020cde} and \textbf{ANCDE}~\citep{jhin2023attentive}),  Neural SDEs (\textbf{Neural LSDE}, \textbf{Neural LNSDE}, and \textbf{Neural GSDE}~\citep{oh2024stable}). Note that Neural LSDE, Neural LNSDE, and Neural GSDE are state-of-the-art models for the time series classification tasks under consideration. We apply the proposed method to NDEs and evaluate its performance. 
Table~\ref{tab:speech} shows classification accuracy of Neural CDE, ANCDE, Neural LSDE, Neural LSDE, Neural LNSDE, Neural GSDE, with and without Continuum Dropout, on Speech Commands. GRU-ODE and ODE-RNN are excluded due to their failure in training~\citep{kidger2020cde}. For PhysioNet Sepsis, Table~\ref{tab:sepsis} presents the AUROC of various NDEs, with and without Continuum Dropout, under different Observation Intensity (OI) settings. Moreover, we mark ``$*$" and ``$**$" to represent the statistical significance of the differences between NDEs with and without Continuum Dropout using two-sample $t$-tests, where $p$-values are less than $0.05$ and $0.01$, respectively. As shown in Table~\ref{tab:speech} and Table~\ref{tab:sepsis}, the proposed method improves the performance of all NDEs, and most of these improvements are statistically significant. In particular, we emphasize that the performance of the current state-of-the-art models, Neural LSDE, Neural LNSDE, and Neural GSDE, can be further improved with proposed method.

\begin{table}[!h]
\scriptsize\centering\captionsetup{justification=centering, skip=3pt}
\caption{Accuracy on Speech Commands}
\label{tab:speech}
\centering
\begin{tabular}{lcc}
\Xhline{0.7pt}
\multirow{2}{*}{\textbf{Models}}                & \multirow{2}{*}{\begin{tabular}[c]{@{}c@{}}\textbf{Continuum} \\ \textbf{Dropout}\end{tabular}}                & \multirow{2}{*}{\textbf{Test Accuracy}} \\ 
 & & \\  \hline
\multirow{2}{*}{Neural CDE}       & X                                 & 0.910 (0.005)   \\
                                  & O                                 & \phantom{$^{**}$}\textbf{0.940 (0.001)}$^{**}$             \\   \hline
\multirow{2}{*}{ANCDE}            & X                                 & 0.760 (0.003)    \\
                                  & O                                 & \phantom{$^{**}$}\textbf{0.794 (0.003)}$^{**}$            \\  \hline  
\multirow{2}{*}{Neural LSDE}      & X                                 & 0.927 (0.004)             \\
                                  & O                                 & \phantom{$^{*}$}\textbf{0.932 (0.000)}$^{*}$    \\  \hline
\multirow{2}{*}{Neural LNSDE}     & X                                 & 0.923 (0.001)             \\
                                  & O                                 & \phantom{$^{**}$}\textbf{0.932 (0.001)}$^{**}$    \\  \hline
\multirow{2}{*}{Neural GSDE}      & X                                 & 0.913 (0.001)             \\
                                  & O                                 & \phantom{$^{**}$}\textbf{0.927 (0.001)}$^{**}$    \\ \Xhline{0.7pt}
\end{tabular}
\end{table}

\begin{table}[!h]
\scriptsize\centering\captionsetup{justification=centering, skip=3pt}
\caption{AUROC on PhysioNet Sepsis}
\label{tab:sepsis}
\centering
\begin{tabular}{@{\hspace{0.15cm}}lccc@{\hspace{0.15cm}}}
\Xhline{0.7pt}
\multirow{2}{*}{\textbf{Models}} & \multirow{2}{*}{\begin{tabular}[c]{@{}c@{}}\textbf{Continuum} \\ \textbf{Dropout}\end{tabular}}   & \multicolumn{2}{c}{\textbf{Test AUROC}}  \\ \cline{3-4} 
                                  &                                   & \textbf{OI}                                      & \textbf{No OI} \\  \hline
\multirow{2}{*}{GRU-ODE}          & X                                 & 0.852 (0.010)                           & 0.771 (0.024)           \\
                                  & O                                 & \phantom{$^{**}$}\textbf{0.875 (0.006)}$^{**}$                  & \phantom{$^{*}$}\textbf{0.808 (0.010)}$^{*}$  \\   \hline 
\multirow{2}{*}{ODE-RNN}          & X                                 & 0.874 (0.016)                           & 0.833 (0.020)           \\
                                  & O                                 & \phantom{$^{*}$}\textbf{0.893 (0.004)}$^{*}$                  & \textbf{0.851 (0.013)}   \\  \hline
\multirow{2}{*}{Neural CDE}       & X                                 & 0.909 (0.006)                           & 0.841 (0.007)           \\
                                  & O                                 & \phantom{$^{*}$}\textbf{0.918 (0.001)}$^{*}$                  & \phantom{$^{**}$}\textbf{0.860 (0.003)}$^{**}$   \\   \hline
\multirow{2}{*}{ANCDE}            & X                                 & 0.900 (0.002)                          & 0.823 (0.003)           \\
                                  & O                                 & \phantom{$^{**}$}\textbf{0.910 (0.001)}$^{**}$                  & \phantom{$^{**}$}\textbf{0.840 (0.005)}$^{**}$   \\   \hline% \midrule 
\multirow{2}{*}{Neural LSDE}      & X                                 & 0.909 (0.004)                           & 0.879 (0.008)           \\
                                  & O                                 & \phantom{$^{**}$}\textbf{0.927 (0.002)}$^{**}$                  & \phantom{$^{**}$}\textbf{0.892 (0.003)}$^{**}$ \\  \hline
\multirow{2}{*}{Neural LNSDE}     & X                                 & 0.911 (0.002)                           & 0.881 (0.002)           \\
                                  & O                                 & \phantom{$^{**}$}\textbf{0.930 (0.001)}$^{**}$                    & \phantom{$^{**}$}\textbf{0.891 (0.003)}$^{**}$ \\  \hline
\multirow{2}{*}{Neural GSDE}      & X                                 & 0.909 (0.001)                           & 0.884 (0.002)           \\
                                  & O                                 & \phantom{$^{**}$}\textbf{0.929 (0.003)}$^{**}$                  & \phantom{$^{**}$}\textbf{0.890 (0.002)}$^{**}$ \\ \Xhline{0.7pt}
\end{tabular}
\end{table}

For image classification tasks, we consider \textbf{Neural ODE}~\citep{chen2018neural} and \textbf{Neural SDEs}~\citep{tzen2019neural, liu2020does, kong2020sde} (Neural SDE with additive noise and Neural SDE with multiplicative noise). Other variants of NDEs are not suitable for image classification since the concept of a controlled path is designed for handling time series data. 

Table~\ref{tab:image} presents the performance of Neural ODE, Neural SDE (additive), and Neural SDE (multiplicative) with and without Continuum Dropout on the four image datasets. The experimental results suggest that proposed method consistently enhances the performance of all NDEs across all datasets with statistical significance. 

\begin{table*}[!h]
\scriptsize\centering\captionsetup{justification=centering, skip=3pt}
\caption{Performance on four image datasets using Neural ODE and Neural SDEs}\label{tab:image}
\begin{tabular}{@{\hspace{0.2cm}}ccC{2cm}C{2cm}C{2cm}C{2cm}@{\hspace{0.2cm}}}
\Xhline{0.7pt}
\multirow{2}{*}{\textbf{Model}} & \multirow{2}{*}{\begin{tabular}[c]{@{}c@{}}\textbf{Continuum} \\ \textbf{Dropout}\end{tabular}} & \multirow{2}{*}{\textbf{CIFAR-100}} & \multirow{2}{*}{\textbf{CIFAR-10}} & \multirow{2}{*}{\textbf{STL-10}} & \multirow{2}{*}{\textbf{SVHN}} \\
 & & & & & \\  \hline
\multirow{2}{*}{Neural ODE} & X & 0.745 (0.006) & 0.739 (0.008) & 0.707 (0.007) & 0.913 (0.004) \\
                           & O & \textbf{\phantom{$^{**}$}0.762 (0.002)$^{**}$} & \textbf{\phantom{$^{**}$}0.765 (0.007)$^{**}$} & \textbf{\phantom{$^{**}$}0.719 (0.005)$^{**}$} & \textbf{\phantom{$^{**}$}0.925 (0.004)$^{**}$} \\  \hline
\multirow{2}{*}{\begin{tabular}[c]{@{}c@{}}Neural SDE\\(additive)\end{tabular}} & X & 0.749 (0.003) & 0.745 (0.009) & 0.707 (0.004) & 0.918 (0.003) \\
                           & O & \textbf{\phantom{$^{**}$}0.770 (0.004)$^{**}$} & \textbf{\phantom{$^{**}$}0.771 (0.006)$^{**}$} & \textbf{\phantom{$^{**}$}0.718 (0.002)$^{**}$} & \textbf{\phantom{$^{**}$}0.925 (0.003)$^{**}$} \\  \hline
\multirow{2}{*}{\begin{tabular}[c]{@{}c@{}}Neural SDE\\(multiplicative)\end{tabular}} & X & 0.753 (0.003) & 0.747 (0.009) & 0.705 (0.005) & 0.914 (0.006) \\
                           & O & \textbf{\phantom{$^{**}$}0.768 (0.004)$^{**}$} & \textbf{\phantom{$^{**}$}0.768 (0.005)$^{**}$} & \textbf{\phantom{$^{**}$}0.715 (0.005)$^{**}$} & \textbf{\phantom{$^{*}$}0.924 (0.003)$^{*}$} \\ \Xhline{0.7pt}
\end{tabular}
\end{table*}

\subsection{Capability for Uncertainty-Aware Modeling}\label{sec:calibration}

While the primary motivation for Continuum Dropout is regularization, its design provides a computationally efficient framework for uncertainty quantification (UQ). As explained in Section~\ref{sec:uq}, we can obtain an empirical predictive distribution via Monte Carlo sampling by retaining the stochastic mechanism during inference. Our goal here is not to claim state-of-the-art performance against specialized Bayesian methods in UQ tasks, but rather to demonstrate that our approach yields a crucial UQ benefit, improved calibration, as a natural byproduct.

To this end, we assess model calibration, where a model's predictive confidence should align with its empirical accuracy. We use reliability diagrams to visualize this alignment; curves that lie close to the identity line indicate well-calibrated and thus more trustworthy uncertainty estimates. As shown in Figure~\ref{fig:calibration}, models with Continuum Dropout produce curves that more closely track the diagonal than their deterministic counterparts. This demonstrates that the injected stochasticity effectively regularizes overconfident predictions, leading to probability estimates that better reflect the model's true predictive uncertainty.

\begin{figure}
    \centering
    \begin{subfigure}[t]{0.48\linewidth}
        \centering
        \includegraphics[width=\linewidth]{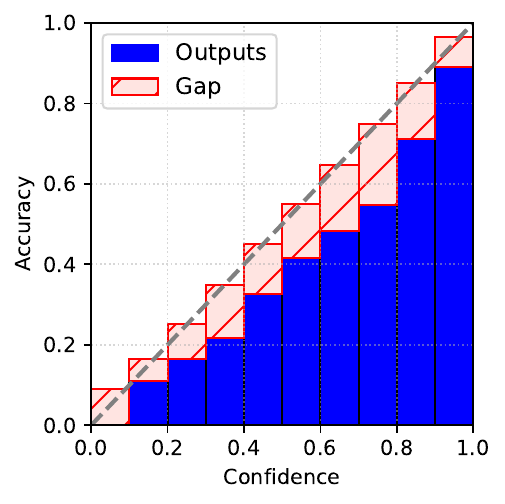}
        \caption{CIFAR-100 without Continuum Dropout}
    \end{subfigure}
    \hfill
    \begin{subfigure}[t]{0.48\linewidth}
        \centering
        \includegraphics[width=\linewidth]{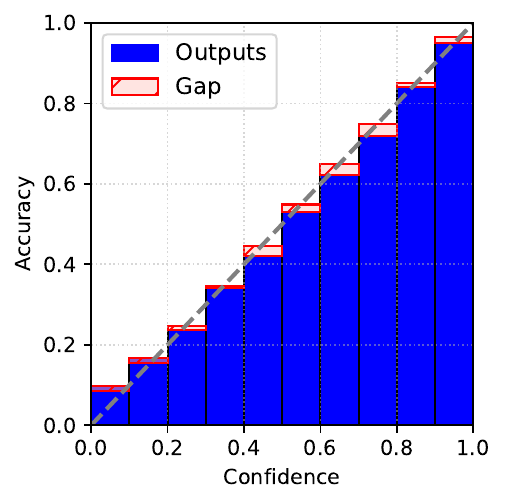}
        \caption{CIFAR-100 with Continuum Dropout}
    \end{subfigure}
    
    \begin{subfigure}[t]{0.48\linewidth}
        \centering
        \includegraphics[width=\linewidth]{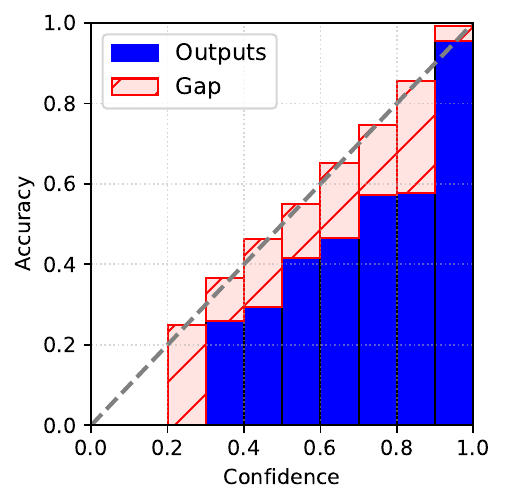}
        \caption{Speech Commands without Continuum Dropout}
    \end{subfigure}
    \hfill
    \begin{subfigure}[t]{0.48\linewidth}
        \centering
        \includegraphics[width=\linewidth]{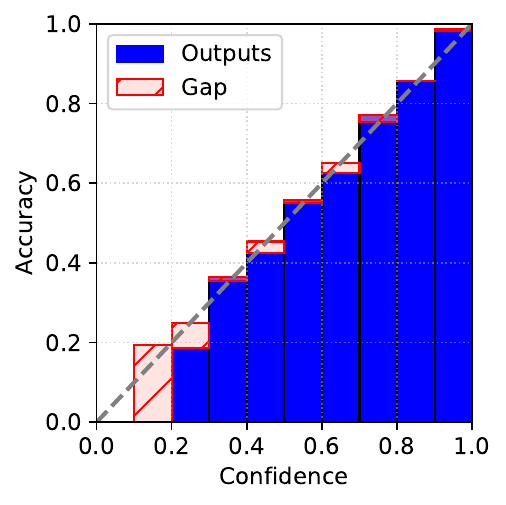}
        \caption{Speech Commands with
        Continuum Dropout}
    \end{subfigure}
    
    \caption{Reliability diagrams illustrating calibration performance on CIFAR-100 and Speech Commands datasets.}
    \label{fig:calibration}
\end{figure}

\subsection{MC Sample Size vs. Performance and Cost}\label{sec:mc_sample}

We conduct experiments to analyze the effect of Monte Carlo (MC) sample size during the test phase of Continuum Dropout, in terms of both model performance and computational cost. First, we apply the proposed method to models such as Neural CDE, ANCDE, Neural LSDE, Neural LNSDE, and Neural GSDE on the Speech Commands dataset, varying the number of MC samples to evaluate performance sensitivity. As shown in Figure~\ref{fig:mcs}, the accuracy of the models stabilizes once the number of MC samples exceeds 5. A similar trend is observed on the CIFAR-100 image dataset, and detailed results are provided in Appendix~\ref{app:c}. 

Next, Table~\ref{tab:cost} reports the computation time in two settings: with and without Continuum Dropout. Although the number of MC sample increases the computational load of the proposed method, the overhead remains within an acceptable range, as MC simulations are performed only during the test phase and require inference on the trained model.

\begin{figure}[h]
    \scriptsize\centering\captionsetup{justification=centering, skip=3pt}
    \includegraphics[width=\columnwidth]{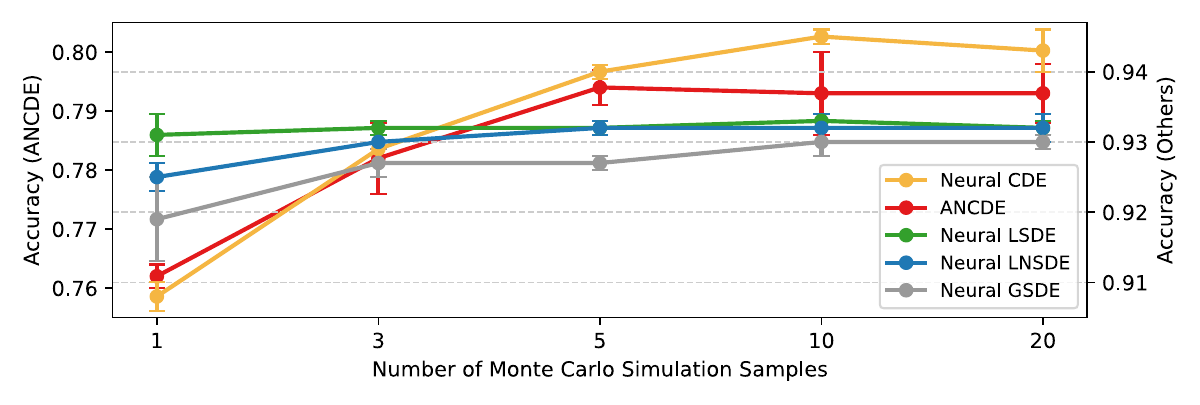} % Adjust width as needed
    \caption{Performance with Different Numbers of MC Simulation Samples on Speech Commands}
    \label{fig:mcs}
\end{figure}

\begin{table}[h]
\scriptsize\centering\captionsetup{justification=centering, skip=3pt}
\caption{Computation time comparison on Speech Commands (time in seconds per epoch)}
\label{tab:cost}
\begin{tabular}{@{\hspace{0.15cm}}cccccc@{\hspace{0.15cm}}}
\Xhline{0.5pt}
\multirow{2}{*}{\begin{tabular}[c]{@{}c@{}}\textbf{Continuum} \\ \textbf{Dropout}\end{tabular}} & \multirow{2}{*}{\begin{tabular}[c]{@{}c@{}}\textbf{Neural} \\ \textbf{CDE}\end{tabular}} & \multirow{2}{*}{\textbf{ANCDE}} &  \multirow{2}{*}{\begin{tabular}[c]{@{}c@{}}\textbf{Neural} \\ \textbf{LSDE}\end{tabular}} &  \multirow{2}{*}{\begin{tabular}[c]{@{}c@{}}\textbf{Neural} \\ \textbf{LNSDE}\end{tabular}} &  \multirow{2}{*}{\begin{tabular}[c]{@{}c@{}}\textbf{Neural} \\ \textbf{GSDE}\end{tabular}} \\
& & & & & \\   \hline
 X &  25.6 (0.3) & 53.3 (0.2) & 19.4 (0.1) & 19.5 (0.1) & 19.8 (0.1) \\ 
 O & 27.1 (0.3) & 58.9 (0.3) & 21.6 (0.2) & 21.7 (0.2) & 22.5 (0.2) \\
\Xhline{0.7pt}
\end{tabular}
\end{table}

\section{Conclusion}\label{sec:conclusion}
This paper introduced Continuum Dropout, a principled and universally applicable dropout framework for NDEs to enhance their generalization capabilities. Our core contribution is to formulate dropout for NDEs as an alternating renewal process, which provides a faithful continuous-time analogue of the discrete mechanism. We developed a novel, continuous-time definition for the dropout rate and a practical method to control it with intuitive hyperparameters. A key advantage of our framework is that it naturally enables uncertainty quantification via Monte Carlo sampling at test time. Extensive experiments demonstrated that Continuum Dropout consistently outperforms existing regularization methods and is universally applicable across a wide range of NDE architectures, achieving superior results on challenging time-series classification tasks. Furthermore, its ability to enhance model calibration highlights the effectiveness for uncertainty-aware modeling.

\section*{Acknowledgments}
This research was supported by the Institute of Information \& communications Technology Planning \& Evaluation(IITP) grant funded by the Korea government(MSIT)(No.RS-2020-II201336, Artificial Intelligence graduate school support(UNIST)); the Institute of Information \& Communications Technology Planning \& Evaluation(IITP) grant funded by the Korea government(MSIT) (No. RS-2025-25442824, AI STAR Fellowship); the National Research Foundation of Korea(NRF) grant funded by the Korea government(MSIT) (No. RS-2025-02216640); 
the Basic Science Research Program through the National Research Foundation of Korea (NRF) funded by the Ministry of Education (RS-2024-00407852); the 2025 Research Fund (1.250006.01) of UNIST (Ulsan National Institute of Science \& Technology).

\bibliography{references}

\setcounter{secnumdepth}{2} 

\renewcommand{\thesection}{\Alph{section}}  % A, B, C ...
\renewcommand{\thesubsection}{\thesection.\arabic{subsection}} 

\onecolumn

\clearpage
\newpage
\appendix

\section{Overview of Alternating Renewal Process}\label{app:renewal_process}
The renewal process is a concept generalized from the Poisson process, where interarrival times follow not only the exponential distribution but also more general distributions. This process is commonly used for analyzing purchase cycles, replacement times, and similar phenomena and has been extensively studied under the name renewal theory. Among these, we focus on the alternating renewal process, which is a renewal process where a system alternates between two states, `on' and `off'. In this appendix, we provide a brief overview of renewal theory. For more details, we refer to \citep{ross1995stochastic, cox1962renewal, pham1999system}.

Let $F$ denote the common distribution of the independent interarrival times $\{T_n\}_{n \geq 1}$,
\begin{equation*}
F(t) = \mathbb{P}(T_1 \leq t), \quad  t \in [0, \infty).
\end{equation*}

And let \(\mu\) be the mean time of \(T_1\), i.e.,
\begin{equation*}
\mu = \mathbb{E}[T_1] = \int_{0}^{\infty }t\,\mathrm{d}F(t).
\end{equation*}

Define $S_n$ as the time of the $n$-th arrival, given by 
\begin{align*}
    S_0=0, \quad S_n = \sum_{k=1}^n \,T_k, \quad n\geq 1.
\end{align*}

For $n \in \mathbb{N}$, let $F_n$ denote the distribution of $S_n$, 
\begin{equation*}
F_n(t) = \mathbb{P}(S_n \leq t), \quad  t \in [0, \infty).
\end{equation*}

Let $N(t)$ denote the number of arrivals in time $[0, t]$, 
\begin{equation*}
N(t) = \sum_{n=1}^{\infty } \1_{\{S_{n} \leq t\}} = \sup_{n \geq 0} \{ S_{n} \leq t \}
\end{equation*}

\begin{definition}
The counting process $\{N(t)\}_{t \geq 0}$ is called a renewal process.
\end{definition}

The distribution of $N(t)$ can be obtained as
\begin{equation*}
\begin{aligned}
\mathbb{P}(N(t) = n) &= \mathbb{P}(N(t) \geq n) - \mathbb{P}(N(t) \geq n + 1) \\
&= \mathbb{P}(S_n \leq t) - \mathbb{P}(S_{n+1} \leq t) \\
&= F_n(t) - F_{n+1} (t).
\end{aligned}
\end{equation*}

The renewal function $m(t)$ is defined as the expected value of $N(t)$, i.e., $m(t) = \mathbb{E}[N(t)]$, is closely related to most renewal theory and determines the properties of the renewal process.

\begin{theorem}\label{thm:mFn}
Let $m(t)$ be the renewal function of $N(t)$, defined as $m(t) = \mathbb{E}[N(t)]$. Then, $m(t)$ is given in terms of the arrival distribution function by
\begin{equation*}
m(t) = \displaystyle\sum_{n=1}^{\infty }F_n(t).
\end{equation*}
\end{theorem}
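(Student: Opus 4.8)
The plan is to exploit the representation of $N(t)$ as an infinite sum of indicator random variables and then interchange expectation with summation. Recall the definition $N(t) = \sum_{n=1}^{\infty} \1_{\{S_n \le t\}}$ given just above the statement, so that each summand $\1_{\{S_n \le t\}}$ is a nonnegative random variable whose expectation is precisely $\mathbb{E}[\1_{\{S_n \le t\}}] = \mathbb{P}(S_n \le t) = F_n(t)$. The entire proof reduces to moving the expectation inside this sum.

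Concretely, I would first write $m(t) = \mathbb{E}[N(t)] = \mathbb{E}\bigl[\sum_{n=1}^{\infty} \1_{\{S_n \le t\}}\bigr]$. The key step is to justify the exchange of expectation and the infinite sum. Since the partial sums $\sum_{n=1}^{N} \1_{\{S_n \le t\}}$ are nonnegative and increase monotonically to $N(t)$ as $N \to \infty$, the Monotone Convergence Theorem (equivalently, Tonelli's theorem applied to the counting measure on $\mathbb{N}$) permits the interchange, yielding $m(t) = \sum_{n=1}^{\infty} \mathbb{E}[\1_{\{S_n \le t\}}] = \sum_{n=1}^{\infty} \mathbb{P}(S_n \le t) = \sum_{n=1}^{\infty} F_n(t)$, which is exactly the claimed identity.

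The only genuine subtlety — and the step I expect to require the most care — is the legitimacy of this interchange. The monotone-convergence route is the cleanest because nonnegativity of the indicators removes any need for integrability or tail conditions. As an alternative elementary route, one could instead start from the distribution $\mathbb{P}(N(t)=n) = F_n(t) - F_{n+1}(t)$ derived above and compute $m(t) = \sum_{n=1}^{\infty} n\,[F_n(t) - F_{n+1}(t)]$ by Abel summation; telescoping the partial sums gives $\sum_{n=1}^{N} F_n(t) - N\,F_{N+1}(t)$, so this approach additionally requires a tail estimate $N\,F_{N+1}(t) \to 0$ to discard the boundary term. For this reason I would present the monotone-convergence argument as the main line and, if desired, mention the summation-by-parts identity only as a secondary remark.
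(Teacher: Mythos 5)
Your proof is correct. The paper actually states Theorem~\ref{thm:mFn} without proof (it appears in the background review of renewal theory, deferring to the cited references), and your main argument — using the paper's own representation $N(t)=\sum_{n=1}^{\infty}\1_{\{S_n\le t\}}$ and interchanging expectation with the sum via monotone convergence/Tonelli, so that $m(t)=\sum_{n=1}^{\infty}\mathbb{P}(S_n\le t)=\sum_{n=1}^{\infty}F_n(t)$ — is exactly the standard textbook proof; your observation that the alternative route through $\mathbb{P}(N(t)=n)=F_n(t)-F_{n+1}(t)$ needs the extra tail estimate $N\,F_{N+1}(t)\to 0$ is also accurate, and is a good reason to prefer the monotone-convergence line.
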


We summarize some important theorems related to the renewal function $m(t)$.

\begin{theorem}\label{thm:renewal_eq}
(The Renewal Equation~\citep{ross1995stochastic}) Let the distribution $F$ has the density $f$, then
\begin{equation*}
    m(t) = F(t) + \int_{0}^{t}f(t-\tau)m(\tau)\,\mathrm{d}\tau,
\end{equation*}
\end{theorem}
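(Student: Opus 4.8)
The plan is to derive the renewal equation by conditioning on the time $T_1$ of the first arrival and exploiting the regenerative (restart) property of the renewal process.

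First I would write $m(t) = \mathbb{E}[N(t)] = \mathbb{E}\bigl[\mathbb{E}[N(t)\mid T_1]\bigr]$ and evaluate the inner conditional expectation by a case split on whether the first arrival has occurred by time $t$. If $T_1 = s > t$, no renewal has happened and $N(t) = 0$; if $T_1 = s \le t$, exactly one arrival has occurred at time $s$, and the shifted interarrival sequence $(T_2, T_3, \ldots)$ is an independent copy of $(T_1, T_2, \ldots)$, so the number of further arrivals in $(s,t]$ has the same law as $N(t-s)$ for an independent renewal process. This gives $\mathbb{E}[N(t)\mid T_1 = s] = \1_{\{s \le t\}}\bigl(1 + m(t-s)\bigr)$.

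Next I would integrate against the density $f$ of $T_1$ and split the resulting integral:
\[
m(t) = \int_0^t \bigl(1 + m(t-s)\bigr) f(s)\,\mathrm{d}s = \int_0^t f(s)\,\mathrm{d}s + \int_0^t m(t-s) f(s)\,\mathrm{d}s = F(t) + \int_0^t m(t-s) f(s)\,\mathrm{d}s.
\]
A change of variables $\tau = t-s$ converts $\int_0^t m(t-s) f(s)\,\mathrm{d}s$ into $\int_0^t f(t-\tau) m(\tau)\,\mathrm{d}\tau$, which is exactly the claimed identity. I expect the main obstacle to be justifying the regeneration step rigorously: that conditional on $T_1 = s \le t$ the count of arrivals in $(s,t]$ has law $N(t-s)$. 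This rests on the i.i.d. structure of $\{T_n\}$---the tail sequence $(T_2, T_3, \ldots)$ is independent of $T_1$ and identically distributed to the whole sequence---so its partial sums, shifted by $s$, reproduce a fresh renewal process whose expected count over length $t-s$ is $m(t-s)$.

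As a self-contained alternative that builds directly on Theorem \ref{thm:mFn}, I could instead start from $m(t) = \sum_{n=1}^{\infty} F_n(t)$ and use the convolution recursion $F_n(t) = \int_0^t F_{n-1}(t-\tau) f(\tau)\,\mathrm{d}\tau$, valid because $S_n = S_{n-1} + T_n$ with $T_n$ independent of $S_{n-1}$. Separating the $n=1$ term ($F_1 = F$) and reindexing the rest yields $F(t) + \sum_{n=1}^{\infty} \int_0^t F_n(t-\tau) f(\tau)\,\mathrm{d}\tau$. Here the decisive technical point is interchanging the infinite sum with the integral; since each $F_n \ge 0$, this is justified by the monotone convergence theorem (Tonelli), after which $\sum_n F_n(t-\tau) = m(t-\tau)$ recovers the convolution and the same change of variables completes the proof.
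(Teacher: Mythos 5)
Your proposal is correct. Note, however, that the paper itself gives no proof of Theorem~\ref{thm:renewal_eq}: it is stated in Appendix~\ref{app:renewal_process} as a classical result imported from \citet{ross1995stochastic}, so there is no in-paper argument to compare against. Your first derivation --- conditioning on the first arrival time $T_1$ and using regeneration to get $\mathbb{E}[N(t)\mid T_1=s]=\1_{\{s\le t\}}\bigl(1+m(t-s)\bigr)$ --- is precisely the standard first-renewal argument from the cited source, and it is the same device the paper itself deploys in Appendix~\ref{app:proofs} when proving Theorem~\ref{thm:drop_rate}: there the availability $A(t)$ is obtained by conditioning on the first full cycle $S_2^{(i)}=X_1^{(i)}+Y_1^{(i)}$, yielding a renewal-type equation of identical form. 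Your technical caveats are the right ones: the tower property needs no integrability since $N(t)\ge 0$, and the regeneration step rests on the independence of $T_1$ from the tail sequence $(T_2,T_3,\ldots)$, which has the same law as the full sequence. Your second route, starting from $m(t)=\sum_{n\ge 1}F_n(t)$ (Theorem~\ref{thm:mFn}) together with the convolution recursion $F_n(t)=\int_0^t F_{n-1}(t-\tau)f(\tau)\,\mathrm{d}\tau$ and a Tonelli interchange, is a clean self-contained alternative: it trades the regular-conditional-distribution subtlety of the probabilistic argument for elementary manipulations of nonnegative series, and the interchange is unconditionally valid because every $F_n\ge 0$, so the identity holds in $[0,\infty]$ even before invoking the standard fact that $m(t)<\infty$ for finite $t$. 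Either route establishes the stated equation.
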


\begin{theorem}\label{thm:element}
(The Elementary Renewal Theorem~\citep{ross1995stochastic}) For finite $\mu$,
\begin{equation*}
\frac{m(t)}{t} \to \frac{1}{\mu} \quad \text{as } \,t \to \infty.
\end{equation*}
\end{theorem}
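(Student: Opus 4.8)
The plan is to prove the two one-sided bounds $\liminf_{t\to\infty} m(t)/t \geq 1/\mu$ and $\limsup_{t\to\infty} m(t)/t \leq 1/\mu$ separately, with Wald's equation serving as the central tool in both directions. The key object is the first arrival strictly after time $t$, namely $S_{N(t)+1}$, together with the observation that $N(t)+1$ is a stopping time for the sequence $\{T_n\}_{n\geq1}$, since the event $\{N(t)+1 = n\} = \{S_{n-1} \leq t < S_n\}$ is measurable with respect to $T_1, \ldots, T_n$. Before invoking Wald, I would first record the standard fact that $m(t) = \mathbb{E}[N(t)] < \infty$ for every $t$; this holds whenever $\mu > 0$, because then $\mathbb{P}(T_1 > \delta) > 0$ for some $\delta$, yielding a geometric-type tail bound on $N(t)$. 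This finiteness is precisely what licenses applying Wald's equation to the randomly stopped sum.

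For the lower bound, I would apply Wald's equation to $S_{N(t)+1} = \sum_{k=1}^{N(t)+1} T_k$, obtaining $\mathbb{E}[S_{N(t)+1}] = (m(t)+1)\mu$. Since $S_{N(t)+1} > t$ almost surely by the definition of $N(t)$, taking expectations gives $(m(t)+1)\mu \geq t$, hence $m(t)/t \geq 1/\mu - 1/t$, and letting $t\to\infty$ yields $\liminf_{t\to\infty} m(t)/t \geq 1/\mu$. This direction is clean because we only need a lower bound on the overshoot $S_{N(t)+1} - t$, which is trivially nonnegative.

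The harder direction is the upper bound, since $S_{N(t)+1}$ admits no deterministic upper bound in terms of $t$: the overshoot can be arbitrarily large. To tame it, I would use a truncation argument. Fix $M > 0$ and set $\bar T_n = \min(T_n, M)$, with partial sums $\bar S_n$, counting process $\bar N(t)$, renewal function $\bar m(t)$, and mean $\bar\mu = \mathbb{E}[\bar T_1]$. Since $\bar T_n \leq T_n$ we have $\bar S_n \leq S_n$, so $\bar N(t) \geq N(t)$ and therefore $\bar m(t) \geq m(t)$. For the truncated process the overshoot is now bounded, $\bar S_{\bar N(t)+1} = \bar S_{\bar N(t)} + \bar T_{\bar N(t)+1} \leq t + M$, because the truncated interarrival time never exceeds $M$. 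Applying Wald to the truncated (still i.i.d.) sequence gives $(\bar m(t)+1)\bar\mu \leq t + M$, whence $m(t)/t \leq \bar m(t)/t \leq (t+M)/(\bar\mu\, t)$, and taking $\limsup_{t\to\infty}$ yields $\limsup_{t\to\infty} m(t)/t \leq 1/\bar\mu$.

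Finally, I would remove the truncation. By monotone convergence, $\bar\mu = \mathbb{E}[\min(T_1, M)]$ increases to $\mathbb{E}[T_1] = \mu$ as $M \to \infty$, so $1/\bar\mu \to 1/\mu$; since the left-hand side is independent of $M$, letting $M\to\infty$ gives $\limsup_{t\to\infty} m(t)/t \leq 1/\mu$. Combining this with the lower bound establishes $\lim_{t\to\infty} m(t)/t = 1/\mu$, as claimed. I expect the truncation step to be the main obstacle, both because it is the conceptual heart of controlling the unbounded overshoot and because one must verify carefully that truncation preserves the i.i.d. and stopping-time structure so that Wald's equation legitimately applies to the truncated process as well.
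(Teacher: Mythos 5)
Your proof is correct and is exactly the classical argument for the Elementary Renewal Theorem: Wald's equation applied at the stopping time $N(t)+1$ gives the lower bound $\liminf_{t\to\infty} m(t)/t \ge 1/\mu$, and truncating the interarrival times at $M$ to bound the overshoot by $M$ gives the matching upper bound after letting $M\to\infty$ via monotone convergence. The paper itself states this theorem without proof, citing Ross (1995), and your argument is precisely the standard proof found in that reference (your preliminary check that $m(t)<\infty$, which also covers the truncated process, is exactly the point that licenses Wald in both directions), so there is nothing to reconcile.
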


\begin{theorem}
(The Key Renewal Theorem~\citep{ross1995stochastic}) Suppose that the renewal process is aperiodic and $g : [0, \infty) \rightarrow [0, \infty)$ is directly Riemann integrable, then
\begin{equation*}
\int_{0}^{t} g(t-\tau)\,\mathrm{d}m(\tau) \to \frac{1}{\mu}\int_{0}^{\infty }g(\tau) \,\mathrm{d}\tau \quad \text{as } \,t \to \infty.
\end{equation*}
\end{theorem}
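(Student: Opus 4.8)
The plan is to reduce the statement to Blackwell's renewal theorem and then bootstrap from interval indicators to a general directly Riemann integrable $g$ by a sandwiching argument. Concretely, I would first establish the increment form $m(t+h) - m(t) \to h/\mu$ as $t \to \infty$, for every fixed $h > 0$, under the aperiodicity (non-lattice) hypothesis on $F$; the Key Renewal Theorem then follows by approximation. Throughout, $\mu = \mathbb{E}[T_1]$ and $m(t) = \sum_{n\ge1} F_n(t)$ are as defined above.

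For Blackwell's theorem I would use a coupling argument. Introduce the equilibrium (stationary) delayed renewal process $\{\tilde S_n\}$ whose first interarrival has distribution $F_e(x) = \mu^{-1}\int_0^x \bigl(1 - F(y)\bigr)\,\mathrm{d}y$ and whose subsequent interarrivals are $\sim F$; a direct computation shows its renewal function is exactly $\tilde m(t) = t/\mu$, so that $\tilde m(t+h) - \tilde m(t) = h/\mu$ for all $t$. Running this process independently of the ordinary renewal sequence $\{S_n\}$, I would construct a coupling such that, after some almost surely finite random time $\tau$, the two sequences share a common renewal epoch and thereafter evolve identically. Because the processes agree past $\tau$, the difference $\bigl(m(t+h) - m(t)\bigr) - \bigl(\tilde m(t+h) - \tilde m(t)\bigr)$ is governed only by behavior before coupling and vanishes as $t \to \infty$, yielding $m(t+h) - m(t) \to h/\mu$.

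To pass from Blackwell to the Key Renewal Theorem, I would first verify the claim for indicators of bounded intervals: taking $g = \mathbf{1}_{(a,b]}$ gives $\int_0^t g(t-\tau)\,\mathrm{d}m(\tau) = m(t-a) - m(t-b) \to (b-a)/\mu = \mu^{-1}\int_0^\infty g$, directly by Blackwell (set $s = t-b \to \infty$, $h = b-a$). Linearity extends this to every finite step function. For general directly Riemann integrable $g$, I would sandwich it between lower and upper step functions $\underline{g}_\delta \le g \le \bar{g}_\delta$ on a mesh of width $\delta$; applying the step-function case and letting $t \to \infty$ bounds the $\limsup$ and $\liminf$ of $\int_0^t g(t-\tau)\,\mathrm{d}m(\tau)$ by $\mu^{-1}\int_0^\infty \bar{g}_\delta$ and $\mu^{-1}\int_0^\infty \underline{g}_\delta$, respectively. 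Direct Riemann integrability is precisely the hypothesis guaranteeing that both bounds converge to $\mu^{-1}\int_0^\infty g$ as $\delta \to 0$, squeezing the limit to the desired value.

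The main obstacle is the coupling step, i.e.\ proving that the ordinary and stationary renewal sequences can be made to share a renewal epoch in finite time. This is where aperiodicity is indispensable: in the lattice case the epochs of independent copies are confined to shifted grids and never coincide, and Blackwell's conclusion genuinely fails. In the non-lattice case one must show that the two sequences can be brought within $\epsilon$ of each other infinitely often and then coupled, which rests on a recurrence-type argument for the difference random walk together with an $\epsilon$-coupling that is closed once the gap is small. A secondary technical point is the tail control in the sandwiching step: one invokes the linear growth $m(t) \sim t/\mu$ (a uniform-in-location consequence of Blackwell) to bound the renewal mass in each mesh cell, so that the infinite step-function approximations may be truncated to finitely many terms with uniformly small error.
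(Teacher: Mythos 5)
The paper contains no proof of this statement for you to be compared against: the Key Renewal Theorem is quoted verbatim, with a citation to Ross (1995), as classical background in the Appendix~A review of renewal theory, and the paper never proves it (its own derivations in Appendix~B instead use Laplace transforms, the Renewal Equation, the Elementary Renewal Theorem, and the availability limit). Judged on its own merits, your outline is the canonical proof from the probability literature and is sound as a plan: Blackwell's increment theorem $m(t+h)-m(t)\to h/\mu$ via coupling with the equilibrium-delayed renewal process (whose renewal function is exactly $\tilde m(t)=t/\mu$, as you note), then the passage to the Key Renewal Theorem through interval indicators, finite step functions by linearity, and the sandwich that direct Riemann integrability is precisely designed to license. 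You also correctly locate where aperiodicity is indispensable and why the lattice case genuinely fails.

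One caution on the coupling step as written: your second paragraph asserts that the ordinary and stationary processes can be made to \emph{share a common renewal epoch} after an a.s.\ finite time, but exact coupling of renewal epochs is in general impossible for a merely non-lattice $F$ (it requires a stronger hypothesis, e.g.\ a spread-out interarrival distribution). For the general aperiodic case one must settle for the $\epsilon$-coupling you describe only in your final paragraph, and then extract the Blackwell limit by comparing $m(t+h)-m(t)$ against $\epsilon$-enlarged and $\epsilon$-shrunken windows of the stationary process before letting $\epsilon\to 0$; taken literally, the exact-coupling version of paragraph two would fail. A small polish on the tail control in the sandwiching step: rather than appealing to the linear growth $m(t)\sim t/\mu$, the cleaner tool is the uniform increment bound $m(t+\delta)-m(t)\le m(\delta)+1$, available from an elementary restart argument without invoking Blackwell, which together with $\sum_{k}\delta\,\sup_{[k\delta,(k+1)\delta]}g<\infty$ (part of the definition of direct Riemann integrability) justifies truncating the step-function sums.
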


%\begin{figure}[ht]
%    \centering\captionsetup{justification=centering, skip=5pt}
%    \begin{subfigure}[t]{0.925\textwidth}
%        \centering
%        \includegraphics[width=\linewidth]{figs/drop_wo}
%        \caption{Neural Differential Equation $\rvz_0(t)$}
%        \label{subfig:drop_wo}
%    \end{subfigure}
%
%    %\vspace{0.1cm}
%
%    \begin{subfigure}[t]{0.91\textwidth}
%        \centering
%        \includegraphics[width=\linewidth]{figs/drop_w}
%        \caption{Neural Regenerative Ordinary Differential Equation $\rvz(t)$}
%        \label{subfig:drop_w}
%    \end{subfigure}
%    
%    \caption{Illustration of dropout in continuous-time latent processes: (a) $i$-th component of NDE $\rvz_0(t)$, (b) $i$-th component of NRODE $\rvz(t)$}
%    \label{fig:sample_path_NRODE}
%    
%\end{figure}

Now, we define an alternating renewal process. Consider a system that alternates two states, `on' and `off'. It starts with the state `on' for a time $X_1$, and then transitions to the state `off' for a time $Y_1$. This pattern continues with $X_2$ in the `on' state, followed by $Y_2$ in the `off' state, and so forth. Suppose that $(X_n, Y_n)$ for $n\geq 1$ are independent random vectors with identically distributed. Let, $\{X_n\}_{n \geq 1}$ have a common distribution $G$, $\{Y_n\}_{n\geq 1}$ have a common distribution $H$ and the cycle lengths $\{X_n+Y_n\}_{n \geq 1}$ have a common distribution $F$. Notice that $X_n$ and $Y_m$ are independent for any $n \neq m$, but $X_n$ and $Y_n$ can be dependent for $n\geq 1$. In this setting, interarrival times $\{T_n\}_{n \geq 1}$ can be expressed as:
\[ T_n = \begin{cases} X_{\frac{n+1}{2}} & \text{if } n \text{ is odd}, \\ Y_{\frac{n}{2}} & \text{if } n \text{ is even}. \end{cases} \]

Thus, it can be seen that an alternating renewal process is a type of renewal process. Note that in a renewal process, events, arrivals, and renewals are used interchangeably. However, in alternating renewal processes, a renewal is defined as the alternation between an active phase and an inactive phase. To avoid confusion, we distinguish and use $S_{2n}$ to denote the time of the $n$-th renewal and the $(2n)$-th event and arrival. Furthermore, the focus is more on renewals, thus \( N(t) \) represents the number of renewals in the interval $[0, t]$. 

Define instantaneous availability $A(t)$ as 
\begin{equation*}
A(t) = \mathbb{P}\left(\{\text{system is on at time} \,\,t\}\right).
\end{equation*}
Then, we have the following asymptotic formula for $A(t)$. 
\begin{theorem}\label{thm:limit}
\citep{ross1995stochastic} If \,$\mathbb{E}[X_n + Y_n] < \infty$ and the alternating renewal process is aperiodic then
\begin{equation*}
\lim_{t \to \infty}A(t) = \frac{\mathbb{E}[X_n]}{\mathbb{E}[X_n] +\mathbb{E}[Y_n]}.
\end{equation*}
\end{theorem}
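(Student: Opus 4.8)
The plan is to reduce the limiting statement to a single application of the Key Renewal Theorem stated above, via the standard regeneration argument: express $A(t)$ as the solution of a renewal equation by conditioning on the first renewal epoch, then read off the limit.

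\textbf{Step 1 (Renewal equation for $A(t)$).} I would condition on the first renewal epoch $S_2 = X_1 + Y_1$, whose distribution is the cycle distribution $F$, and split according to whether the first cycle has completed by time $t$:
\begin{equation*}
A(t) = \mathbb{P}\big(\text{on at } t,\, S_2 > t\big) + \mathbb{P}\big(\text{on at } t,\, S_2 \le t\big).
\end{equation*}
On the event $\{S_2 > t\}$ the system is in its first cycle, whose active phase is $[0, X_1)$; hence it is on at $t$ precisely when $X_1 > t$, and since $\{X_1 > t\} \subseteq \{S_2 > t\}$ the first term equals $\bar G(t) := \mathbb{P}(X_1 > t) = 1 - G(t)$. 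For the second term, at the renewal epoch $S_2 = s \le t$ the process probabilistically restarts, because the cycle pairs $(X_n, Y_n)_{n\ge1}$ are i.i.d.\ across cycles; conditioning on $S_2 = s$ therefore contributes $A(t-s)$. This yields the renewal equation
\begin{equation*}
A(t) = \bar G(t) + \int_0^t A(t-s)\,\mathrm{d}F(s).
\end{equation*}
I would emphasize here that only the \emph{cross-cycle} independence of the pairs is used; the (permitted) within-cycle dependence between $X_n$ and $Y_n$ never enters, since the marginal of $X_1$ appears only through $\bar G$ and the full cycle only through $F$.

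\textbf{Step 2 (Solve via the renewal function).} Using the renewal function $m(t) = \sum_{n\ge1} F_n(t)$ from Theorem~\ref{thm:mFn}, iterating the equation $A = \bar G + A * F$ gives the explicit solution
\begin{equation*}
A(t) = \bar G(t) + \int_0^t \bar G(t-s)\,\mathrm{d}m(s).
\end{equation*}
\textbf{Step 3 (Apply the Key Renewal Theorem).} With $g = \bar G$ and mean cycle length $\mu = \mathbb{E}[X_n + Y_n]$, the Key Renewal Theorem gives $\int_0^t \bar G(t-s)\,\mathrm{d}m(s) \to \tfrac{1}{\mu}\int_0^\infty \bar G(\tau)\,\mathrm{d}\tau$, while $\bar G(t) \to 0$. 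Since $\int_0^\infty \bar G(\tau)\,\mathrm{d}\tau = \int_0^\infty \mathbb{P}(X_1 > \tau)\,\mathrm{d}\tau = \mathbb{E}[X_1] = \mathbb{E}[X_n]$ and $\mu = \mathbb{E}[X_n] + \mathbb{E}[Y_n]$, I conclude
\begin{equation*}
\lim_{t\to\infty} A(t) = \frac{\mathbb{E}[X_n]}{\mathbb{E}[X_n] + \mathbb{E}[Y_n]}.
\end{equation*}

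The main obstacle is the rigorous justification of the regeneration step in Step 1 — verifying that the process genuinely restarts at $S_2$ and that the conditional decomposition is exact despite the within-cycle dependence — together with checking the hypotheses that license the Key Renewal Theorem. For the latter I would verify that $\bar G$ is directly Riemann integrable: it is nonincreasing and its integral $\mathbb{E}[X_1]$ is finite (because $\mathbb{E}[X_n + Y_n] < \infty$ forces $\mathbb{E}[X_n] < \infty$ by nonnegativity), which is exactly the sufficient condition for a monotone function; aperiodicity is assumed in the hypothesis. These technical verifications, rather than any intricate computation, are where the care lies.
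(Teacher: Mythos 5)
Your proof is correct, and it is the classical textbook argument. One structural remark: the paper never proves Theorem~\ref{thm:limit} itself --- it is imported verbatim from \citet{ross1995stochastic} and invoked only to obtain the large-$T$ asymptotics in Corollary~\ref{cor:drop_rate} --- so there is no in-paper proof to match step for step. That said, your Step~1 is exactly the regeneration computation the paper does carry out, inside its proof of Theorem~\ref{thm:drop_rate} (equation~\ref{eq:avail}), where conditioning on $S_2 = X_1 + Y_1$ yields $A(t)=\bar G(t)+\int_0^t A(t-\tau)f(\tau)\,\mathrm{d}\tau$ with $\bar G(t)=e^{-\lambda_1 t}$ in the exponential specialization. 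From there the paper takes a different route than you do: it solves the renewal equation \emph{exactly} via Laplace transforms, obtaining $A(t)=\frac{\lambda_2}{\lambda_1+\lambda_2}+\frac{\lambda_1}{\lambda_1+\lambda_2}e^{-(\lambda_1+\lambda_2)t}$, from which the limit $\frac{\lambda_2}{\lambda_1+\lambda_2}=\frac{\mathbb{E}[X_n]}{\mathbb{E}[X_n]+\mathbb{E}[Y_n]}$ is immediate (and aperiodicity holds automatically). That computation, however, only covers the exponential case; your Steps~2--3 --- solving $A=\bar G + A*F$ as $A=\bar G+\bar G * m$ and applying the Key Renewal Theorem --- are what the theorem's stated generality (arbitrary $G$, $H$, aperiodic cycle distribution $F$ with finite mean) actually requires, so your argument buys the full statement where the paper's Laplace-transform method buys an explicit finite-$t$ formula in a special case. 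Your hypothesis checks are the right ones: $\bar G$ is nonnegative, nonincreasing, with $\int_0^\infty \bar G(\tau)\,\mathrm{d}\tau=\mathbb{E}[X_1]<\infty$ (finite by nonnegativity from $\mathbb{E}[X_n+Y_n]<\infty$), hence directly Riemann integrable, and the permitted within-cycle dependence of $(X_n,Y_n)$ indeed never enters since only the marginal $G$ and the cycle law $F$ appear. The one step worth tightening is the phrase ``iterating gives the explicit solution'': strictly, you should appeal to the standard lemma that $A=\bar G+\bar G*m$ is the \emph{unique} solution of the renewal equation bounded on compact intervals, which uses the finiteness of the renewal function $m(t)=\sum_{n\ge1}F_n(t)$ from Theorem~\ref{thm:mFn}; this is a one-line fix, not a gap in the idea.
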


Theorem~\ref{thm:limit} describes the ratio of `on' and `off' states in the limit for a system that follows an alternating renewal process. We called this limit the steady-state availability, which is generally more useful than instantaneous availability for analytical analysis in alternating renewal processes.

\section{Proofs}\label{app:proofs}
For a function $u$, we denote the Laplace transform of $u(t)$ by 
\begin{equation*}
    \mathcal{L}\{u\}(s) = \int_{0}^{\infty} e^{-st} u(t) \, \mathrm{d}t.
\end{equation*}

\begin{proof}[Proof of Theorem~\ref{thm:drop_rate}]
From the definition of the dropout rate in continuous-time, we have $p = 1- A(T)$ for given $p\in(0,1)$ where $A(T):= \sP(\{z^{(i)}(T) \mbox{ is in the active state}\})$\footnote{
As each $z^{(i)}(t)$ has common distributions for $\{X_n^{(i)}\}_{n\geq1}$ and $\{Y_n^{(i)}\}_{n\geq1}$, we have $\sP(\{z^{(1)}(T) \mbox{ is in the active state}\})=\cdots =\sP(\{z^{(d_z)}(T) \mbox{ is in the active state}\})$.
}.

Recall that  $\{X_n^{(i)}\}_{n \geq 1}$ have a common distribution $\text{Exp}(\lambda_1)$, $\{Y_n^{(i)}\}_{n\geq 1}$ have a common distribution $\text{Exp}(\lambda_2)$ and the cycle lengths $\{X_n^{(i)}+Y_n^{(i)}\}_{n \geq 1}$ have a common distribution $F$. Denote the density of $F$ as $f$. Then, using the convolution theorem, the Laplace transform of $f$ is given by
\begin{equation}\label{eq:laplace_f}
\begin{aligned}[b]
    \mathcal{L}\{f\}(s) & =  \int_{0}^{\infty }e^{-st}\left( \lambda_1e^{-\lambda_1t} \right)\,\mathrm{d}t  \int_{0}^{\infty }e^{-st}\left( \lambda_2e^{-\lambda_2t} \right)\,\mathrm{d}t \\
    & = \frac{\lambda_1}{s+\lambda_1}\,\frac{\lambda_2}{s+\lambda_2}.
\end{aligned}
\end{equation}

% \footnotetext[4]{This approach is also discussed in .}

Moreover, following the computation in \citet{whitt2013IEOR3106}, $A(t)$ can be further expressed by conditioning on the value of $S_2^{(i)} = X_1^{(i)} + Y_1^{(i)}$
\begin{align}
    A(t) &=  \sP(\{z^{(i)}(t) \mbox{ is in the active state}\}) \nonumber \\
    &= \sP(\{z^{(i)}(t) \mbox{ is in the active state}\}, S_2^{(i)} > t)  + \sP(\{z^{(i)}(t) \mbox{ is in the active state}\}, S_2^{(i)} \leq t) \nonumber  \\
    &= \mathbb{P}\left( S_1^{(i)} > t \right) + \int_{0}^{t}\mathbb{P}\left(\{z^{(i)}(t) \mbox{ is in the active state}\} \mid S_2^{(i)} = \tau\right)f(\tau)\,\mathrm{d}\tau  \nonumber \\
    &= \mathbb{P}\left( X_1^{(i)} > t \right) + \int_{0}^{t}\sP(\{z^{(i)}(t-\tau) \mbox{ is in the active state}\})f(\tau)\,\mathrm{d}\tau \nonumber \\
    &= e^{-\lambda_1 t} + \int_{0}^{t}A(t-\tau)f(\tau)\,\mathrm{d}\tau.\label{eq:avail}
\end{align}

% and memoryless property of $X_2^{(i)}$
%where we have used the renewal property of $z^{(i)}(t)$ for the fourth equality. 
Taking the Laplace transform on both sides of \Eqref{eq:avail} yields, 
\begin{equation*}
    \mathcal{L}\{A\}(s) =\frac{1}{s+\lambda_1} + \mathcal{L}\{A\}(s)\,\mathcal{L}\{f\}(s).
\end{equation*}

Thus, we have using \Eqref{eq:laplace_f}
\begin{align*}
    \mathcal{L}\{A\}(s) &= \frac{1}{(1 - \mathcal{L}\{f\}(s))(s+\lambda_1)} \\
    &= \frac{s+\lambda_2}{s(s+\lambda_1+\lambda_2)} \\
    &= \frac{\lambda_2}{\lambda_1+\lambda_2}\,\frac{1}{s}+\frac{\lambda_1}{\lambda_1+\lambda_2}\,\frac{1}{s+\lambda_1+\lambda_2}.
\end{align*}

Then, $A(t)$ is uniquely determined as
\begin{equation*}
    A(t) = \frac{\lambda_2}{\lambda_1+\lambda_2}+\frac{\lambda_1}{\lambda_1+\lambda_2}e^{-(\lambda_1+\lambda_2)t}.
\end{equation*}

Thus, we have the desired result:
\begin{equation*}
\begin{aligned}
    p &= 1 - A(T) \\
    &= \frac{\lambda_1}{\lambda_1+\lambda_2}\left ( 1 - e^{-(\lambda_1 + \lambda_2)T} \right ).
\end{aligned}
\end{equation*}
\end{proof}

\begin{proof}[Proof of Corollary~\ref{cor:drop_rate}]

Let $m^{(i)}(t) = \E[N^{(i)}(t)]$. Then, by the Renewal equation in Theorem~\ref{thm:renewal_eq}, 
\begin{equation}\label{eq:renewal_eq}
    m^{(i)}(t) = F(t) + \int_{0}^{t}f(t-\tau)m^{(i)}(\tau)\,\mathrm{d}\tau.
\end{equation}

By taking the Laplace transform on both sides of \Eqref{eq:renewal_eq}, we obtain 
\begin{equation*}\label{eq:Lm}
    \mathcal{L}\{m^{(i)}\}(s) = \mathcal{L}\{F\}(s) + \mathcal{L}\{f\}(s)\,\mathcal{L}\,\{m^{(i)}\}(s).
\end{equation*}
where $F$ and $f$ are distribution and density functions of $\{X_n^{(i)} + Y_n^{(i)}\}_{n\geq1}$, which yields 
\begin{equation*}\label{eq:ms}
\begin{aligned}[b]
    \mathcal{L}\{m^{(i)}\}(s) &= \frac{\mathcal{L}\{F\}(s)}{1 - \mathcal{L}\{f\}(s)} \\
    &= \frac{\lambda_1\lambda_2}{s^2(s+\lambda_1+\lambda_2)} \\
    &= \frac{\lambda_1\lambda_2}{\lambda_1+\lambda_2}\,\frac{1}{s^2} - \frac{\lambda_1\lambda_2}{(\lambda_1+\lambda_2)^2}\left( \frac{1}{s} - \frac{1}{s+\lambda_1+\lambda_2} \right).
\end{aligned}
\end{equation*}

Therefore, $m^{(i)}(t)$ is given by 
\begin{equation*}
    m^{(i)}(t) = \frac{\lambda_1\lambda_2}{\lambda_1+\lambda_2}\,t - \frac{\lambda_1\lambda_2}{(\lambda_1+\lambda_2)^2}\left( 1-e^{-(\lambda_1+\lambda_2)t} \right).
\end{equation*}

From the definition of $m= \E[N^{(i)}(T)]=m^{(i)}(T)$ and Theorem~\ref{thm:drop_rate}, we have for given $p\in(0,1)$ and $m>0$ 
\begin{align*}
p &= \dfrac{\lambda_1}{\lambda_1+\lambda_2}\left ( 1 - e^{-(\lambda_1 + \lambda_2)T} \right ), \\
m &= \dfrac{\lambda_1\lambda_2}{\lambda_1+\lambda_2}\,T - \dfrac{\lambda_1\lambda_2}{(\lambda_1+\lambda_2)^2}\left( 1-e^{-(\lambda_1+\lambda_2)T} \right).
\end{align*}

Furthermore, from Theorem~\ref{thm:limit}, $\lim_{T\rightarrow \infty} A(T)$ can be easily computed as follows:
\begin{equation*}
\lim_{T \rightarrow \infty}A(T) = \frac{\mathbb{E}[X_n^{(i)}]}{\mathbb{E}[X_n^{(i)}] +\mathbb{E}[Y_n^{(i)}]} = \frac{1/\lambda_1}{1/\lambda_1 + 1/\lambda_2} = \frac{\lambda_2}{\lambda_1 + \lambda_2}.
\end{equation*}

In other words, we have the following asymptotic relation: for sufficiently large $T$,
\begin{align}
p &= 1- A(T)\nonumber \\
&\approx 1- \frac{\lambda_2}{\lambda_1 + \lambda_2}. \label{eq:asymp1}
\end{align}

Furthermore, using Theorem~\ref{thm:element}, we get for sufficiently large $T$
\begin{align}
    \frac{m}{T} &= \frac{m^{(i)}(T)}{T} \nonumber \\
    &\approx \frac{\lambda_1\lambda_2}{\lambda_1+\lambda_2}. \label{eq:asymp2}
\end{align}

From \Eqref{eq:asymp1} and \Eqref{eq:asymp2}, $\lambda_1$ and $\lambda_2$ can be approximated by for large $T$
\begin{equation*}
\lambda_1 \approx \frac{m}{(1-p)\,T}, \quad \lambda_2 \approx \frac{m}{p\,T}.
\end{equation*}

\end{proof}

\section{Description of Datasets}\label{sec:datasets}

\paragraph{SmoothSubspace.}
The SmoothSubspace dataset~\citep{huang2016time} is designed to test algorithms' ability to identify and classify smooth trajectories in a high-dimensional space. This dataset consists of 150 simulated univariate time series, each represented in a 15-dimensional space. The time series are generated such that they lie on or near a smooth subspace within this higher-dimensional space. Each series can belong to one of three classes, with the class indicating the particular subspace configuration that the series aligns with. 

\paragraph{ArticularyWordRecognition.}
The ArticularyWordRecognition dataset~\citep{wang2013word} uses an Electromagnetic Articulograph (EMA) to measure tongue and lip movements during speech. It includes data from multiple native English speakers producing 25 words. The dataset features recordings from twelve sensors capturing $\rvx$, $\rvy$, and $\rvz$ positions at a 200 Hz sampling rate. Sensors are positioned on the forehead, tongue (T1 to T4), lips, and jaw. Out of 36 possible dimensions, this dataset includes 9.

\paragraph{ERing.}
The ERing dataset~\citep{wilhelm2015ering} is collected using a prototype finger ring known as eRing, which detects hand and finger gestures through electric field sensing. The dataset includes six classes of finger postures involving the thumb, index finger, and middle finger. Each data series is four-dimensional with 65 observations, representing measurements from electrodes sensitive to the distance from the hand.

\paragraph{RacketSports.}
The RacketSports dataset consists of data collected from university students playing badminton or squash while wearing a Smart Watch. The watch recorded $\rvx$, $\rvy$, and $\rvz$ coordinates from both the accelerometer and gyroscope, which were transmitted to a phone and stored in an Attribute-Relation File Format (ARFF) file. The dataset includes accelerometer and gyroscope measurements in the order: accelerometer $\rvx$, $\rvy$, $\rvz$, followed by gyroscope $\rvx$, $\rvy$, $\rvz$. The data was collected at 10 Hz over 3 seconds during either forehand/backhand strokes in squash or clear/smash strokes in badminton. 

\paragraph{Speech Commands.}
The Speech Commands dataset~\citep{warden2018speech} features a comprehensive set of one-second long audio clips that include spoken words and ambient sounds. This dataset contains 34,975 time-series entries, each corresponding to one of 35 different words. The audio clips are pre-processed by computing the Mel-frequency cepstral coefficients, which serve as features to capture the audio characteristics more effectively, thereby enhancing the accuracy of the applied machine learning models. Each audio sample is represented by a time series of 161 frames, with each frame consisting of 20 feature dimensions.

\paragraph{PhysioNet Sepsis.}
The 2019 PhysioNet Sepsis challenge~\citep{reyna2019early} focuses on predicting sepsis, a critical condition caused by blood infections leading to numerous fatalities. This challenge uses a dataset with 40,335 ICU patient records, featuring 34 time-dependent indicators like heart rate and body temperature. Researchers aim to determine the presence of sepsis in patients based on the sepsis-3 criteria. 
This dataset presents challenges due to its irregular time-series nature—only 10\% of data points are timestamped for each patient. To tackle this, we employ two strategies for time-series classification: (i) using observation intensity (OI), which gauges the severity of the patient's condition, and (ii) without using OI. %Each method adjusts for the dataset’s imbalance by evaluating performance through the Area Under the Receiver Operating Characteristic curve (AUROC) score.

\paragraph{CIFAR-100 and CIFAR-10.}
These datasets, introduced by \citet{krizhevsky2009learning}, contain 60,000 32x32 pixel RGB images that are split into two groups: CIFAR-100, with 100 different fine-grained classes, and CIFAR-10, with 10 broader categories. Each dataset is evenly divided, with 50,000 images allocated for training and 10,000 for testing. The images depict a wide variety of objects, including animals, vehicles, and everyday scenes.

\paragraph{STL-10.} 
The STL-10 dataset~\citep{coates2011analysis} consists of 13,000 color images, each with a resolution of 96x96 pixels, categorized into 10 classes. The images primarily capture animals, objects, and outdoor scenes.

\paragraph{SVHN.}
The Street View House Numbers (SVHN) dataset~\citep{netzer2011reading} consists of over 600,000 digit images extracted from real-world scenes, specifically house numbers captured by Google's Street View. The images typically have a resolution of around 32x32 pixels and contain digits from `0' to `9'. Each image often includes multiple digits, but the dataset focuses on identifying individual digits.

%\paragraph{Total Agent Call Time.}
%The Total Agent Call Time dataset is based on the Call Center dataset\footnote{\label{pwc}\url{https://www.pwc.com}}\footnote{\label{call_center_data}\url{https://github.com/globalsmile/Call-Center-Analysis}} and contains the cumulative call time in seconds for 9 agents from January to March 2021.

%\paragraph{CIFAR-100 and CIFAR-10.}
%Both CIFAR-100 and CIFAR-10 datasets consist of 32x32 color images. There are 60,000 images in total, with CIFAR-100 divided into 100 classes and CIFAR-10 into 10 classes. CIFAR-100 includes images of fine-grained categories such as 'sea', 'forest', and 'mountain', while CIFAR-10 contains broader categories like 'airplane', 'bird', and 'automobile' \citep{krizhevsky2009learning}.

%\paragraph{STL-10.}
% Comprising 13,000 96x96 images divided into 10 classes, this dataset is designed for developing unsupervised learning algorithms, providing a greater challenge with higher resolution images \citep{coates2011analysis}.

%\paragraph{SVHN.}
% Street View House Numbers (SVHN) Dataset, which is proposed by \citet{netzer2011reading}, includes over 600,000 digit images obtained from Google Street View images, useful for developing and testing real-world digit recognition systems.

%\vspace{-7pt}
\section{Details of Experimental Settings}\label{sec:settings}
% \subsection{Experimental Protocols}
We followed the recommended pipelines for each dataset and model. We conducted experiments repeated for five iterations to ensure robustness in all settings. For time series classification experiments, we adhered to experimental protocols provided in the GitHub repositories of each model. For image classification experiments, as there were no publicly available protocols for NDEs, we established our own. Our experiments were performed using a server on Ubuntu 20.04.6 LTS, equipped with an Intel(R) Core(TM) i9-10980XE CPU and four NVIDIA GeForce RTX 4090 GPUs. The source code can be accessed at \url{https://github.com/jonghun-lee0/Continuum-Dropout}.

\subsection{Various Regularization Methods}

In `Na\"{\i}ve Dropout for Drift Network' and 'Na\"{\i}ve Dropout for MLP Classifier'~\citep{srivastava2014dropout}, we used \( p = [0.1, 0.2, 0.3, 0.4, 0.5] \) for tuning the dropout rate. For `Jump Diffusion model'~\citep{liu2020does}, since they did not provide a reproducible code implementation, we referred to their Github repository\footnote{{\url{https://github.com/xuanqing94/NeuralSDE}}} to devise our own pipeline for the experiments. We employed tuning methods for typical dropout rates; however, since we could not observe significant performance improvements, we utilized a broader tuning grid of \( p = [10^{-5},  10^{-4}, 10^{-3}, 10^{-2}, 0.1, 0.2, 0.3, 0.4, 0.5] \). For `STEER', we followed the pipeline outlined by \citet{ghosh2020steer} and the GitHub repository\footnote{{\url{https://github.com/arnabgho/steer}}}. For `Temporal Adaptive Batch Normalization (TA-BN)', we followed the pipeline outlined by \citet{zheng2024improving} and the GitHub repository\footnote{{\url{https://github.com/shelljane/tabn-neuralode}}}. For further details, please refer to the original papers. %For `MALI', we followed the pipeline outlined by \citet{zhuang2021mali} and the GitHub repository\footnote{{\url{https://github.com/juntang-zhuang/TorchDiffEqPack}}}.  %This dropout method is defined as follows:

 %\paragraph{Jump Dropout}
 %\begin{equation*}
 %    \mathbf{z}_0(t) = \mathbf{z}_0(0) + \int_{0}^{t}\frac{1}{2}\gamma(s, \mathbf{z}_0(s);\theta_{\gamma})\,\mathrm{d}s + \int_{0}^{t}\frac{1}{2}\gamma(s, \mathbf{z}_0(s);\theta_{\gamma})\circ \mathbf{Z}_{N(s)}\,\mathrm{d}N(s),
 %\end{equation*}
 %where $N(t) \sim \text{Poisson}(\lambda t)$ is a Poisson counting process controlling the frequency of jumps and $\mathbf{Z}_{N(t)}$ is a shifting Bernoulli random variable with probability $1-p$ of being $1$ and probability $p$ of being $-1$, controlling the direction of the jump.

% \paragraph{Dropout proposed by \citep{liu2020does}.}
% \begin{equation*}
%     \mathbf{z}_0(t) = \mathbf{z}_0(0) + \int_{0}^{t}\gamma(s, \mathbf{z}_0(s);\theta_{\gamma})\,\mathrm{d}s +\int_{0}^{t} \sqrt{\frac{p}{1-p}}\,\gamma(s, \mathbf{z}_0(s);\theta_{\gamma})\circ\mathrm{d}\mathbf W(s).
% \end{equation*}
 
\subsection{Time Series Classification}

In `SmoothSubspace', `ArticularyWordRecognition', `ERing' and `RacketSports', we adhered to the experimental protocol using the pipeline outlined by \citet{oh2024stable} and Github repository\footnote{\url{https://github.com/yongkyung-oh/Stable-Neural-SDEs}\label{git:oh}}. We utilized a 70\%:15\%:15\% split for train, validation, and test due to the unconventional split ratios in the original datasets, as recommended by \citet{oh2024stable}. Regarding hyperparameter tuning, we employed the Python library \texttt{ray}\footnote{\url{https://github.com/ray-project/ray}}~\citep{moritz2018ray,liaw2018tune}, as suggested by \citet{oh2024stable}. For further details, please refer to the original paper.

In `Speech Commands' and `PhysioNet Sepsis', we followed the experimental protocol using publicly available pipelines for each model. For ANCDE, we used the pipeline outlined by \citet{jhin2023attentive} and the Github repository\footnote{\url{https://github.com/sheoyon-jhin/ANCDE}}. For the other models, we followed the pipeline outlined by \citet{oh2024stable} and the Github repository\footref{git:oh}. However, for the ANCDE model using the `Speech Commands' dataset, the hyperparameter settings for the architecture were not disclosed. Therefore, we conducted experiments using the number of layers $n_l = 4$, the hidden vector dimensions $n_h = 128$, and the hidden vector dimensions for attention $n_{attention} = 20$, and we reported the performance based on these settings. We performed Continuum Dropout hyperparameter tuning for two architecture (optimal and complex) for Neural CDE, Neural LSDE, Neural LNSDE, and Neural GSDE. For Neural CDE, contrary to previous approaches, we utilized grid search to optimize the model hyperparameters because the prior settings failed to replicate the reported results. We evaluated test performance using $n_l$ from the set $\{1, 2, 3, 4\}$ and $n_h$ from $\{16, 32, 64, 128\}$, selecting both the \textbf{optimal} architecture and the most \textbf{complex} architecture. Note that in certain models, optimal hyperparameters might be the same as complex hyperparameters. For Neural LSDE, Neural LNSDE, and Neural GSDE, we used the optimal architecture hyperparameters reported by \citet{oh2024stable}.
The following are the definitions of each model:

%\paragraph{Neural ODE.} \citet{chen2018neural} proposed Neural ODE as a continuous version of Resnet, inspired by the idea of infinite layers and infinitesimal time steps in Resnet. The model is formulated as follows:
%\begin{equation*}
%\mathbf{z}_0(t) = \mathbf{z}_0(0) + \int_{0}^{t}\gamma(s, \mathbf{z}_0(s);\theta_{\gamma})\,\mathrm{d}s,
%\end{equation*}
%where the drift function \(\gamma(\cdot; \cdot; \theta_{\gamma})\) is a neural network with parameter \(\theta_{\gamma}\).

\paragraph{GRU-ODE.}
\citet{de2019gru} introduced GRU-ODE, a concept of combination of Neural ODE and GRU as the solution of the following ordinary differential equation:
\[ \frac{\mathrm{d}\mathbf{z}_0(t)}{\mathrm{d}t} = (1 - \mathbf{u}_{gate}(t)) \circ (\mathbf{u}_{vector}(t) - \mathbf{z}_0(t)). \]
Here, $\mathbf{u}_{gate}(t)$ and $\mathbf{u}_{vector}(t)$ represent the update gate and update vector of the GRU, respectively. They control how much of the new state information should be retained and how much of the previous state should be forgotten.\\
The reset gate $\rvu_{reset}(t)$ and update gate $\mathbf{u}_{gate}(t)$ are defined as:
\begin{align*}
\mathbf{u}_{reset}(t) &= \mathrm{sigmoid}(\mathbf{W}_{r} \mathbf{x}(t) + \mathbf{V}_{r} \mathbf{z}_0(t) + \mathbf{b}_{r}), \\
\mathbf{u}_{gate}(t) &= \mathrm{sigmoid}(\mathbf{W}_{g} \mathbf{x}(t) + \mathbf{V}_{g} \mathbf{z}_0(t) + \mathbf{b}_{g}),
\end{align*}
where $\mathbf{W}_{r}$, $\mathbf{V}_{r}$, $\mathbf{W}_{g}$ and $\mathbf{V}_{g}$ are weight matrices, $\mathbf{b}_{r}$ and $\mathbf{b}_{g}$ are bias vectors, and $\mathbf{x}(t)$ is the input data at time \( t \).

The update vector $\mathbf{u}_{vector}(t)$ is given by:
\[ \mathbf{u}_{vector}(t) = \tanh(\mathbf{W}_{v} \mathbf{x}(t) + \mathbf{V}_{v} (\mathbf{u}_{reset}(t) \circ \mathbf{z}_0(t)) + \mathbf{b}_{v}), \]
where $\mathbf{W}_{v}$ and $\mathbf{V}_{v}$ are weight matrices, and $\mathbf{b}_{v}$ is a bias vector.

\paragraph{ODE-RNN.}
The ODE-RNN, proposed by \citet{rubanova2019latent}, combines Neural ODEs and RNNs. The latent process $\mathbf{z}_0(t)$ is obtained by numerically solving the ODE and then undergoing a standard RNN update process:
\begin{align*}
\bar{\mathbf{z}}_0(t_i) &= \text{ODE\_Solver}(\gamma(\cdot; \cdot; \theta_{\gamma}), \mathbf{z}_0(t_{i-1}), [t_{i-1}, t_i]), \\
\mathbf{z}_0(t_i) &= \text{RNNCell}(\bar{\mathbf{z}}_0(t_i), x^{(i)}) \quad \text{with }\, \mathbf{z}_0(t_0) = 0,
\end{align*}
where the drift function \(\gamma(\cdot; \cdot; \theta_{\gamma})\) is a neural network with parameter \(\theta_{\gamma}\).

\paragraph{Neural CDE.}
\citet{kidger2020cde} proposed Neural Controlled Differential Equation (Neural CDE), an extension of RNN to a continuous-time setting. It addresses the limitation of the initial condition determining the solution in the existing Neural ODE by introducing the concept of a controlled path $\mathbf X(t)$, which incorporates data arriving later. The model is formulated as follows:
\begin{equation*}
\mathbf{z}_0(t) = \mathbf{z}_0(0) + \int_{0}^{t}\gamma(s, \mathbf{z}_0(s);\theta_{\gamma})\,\mathrm{d}\mathbf X(s)\quad\text{with}\,\,\mathbf{z}_0(0)=\zeta(x^{(1)};\theta_{\zeta}),
\end{equation*}
where the drift function \(\gamma(\cdot; \cdot; \theta_{\gamma})\) is a neural network with parameter \(\theta_{\gamma}\) and the integral is the Riemann-Stieltjes integral. A controlled path $\mathbf X(t)$ can be any continuous function of bounded variation, but we have chosen natural cubic spline of $\mathbf{x}$ in our experiments.

\paragraph{ANCDE.}
\citet{jhin2023attentive} proposed the Attentive Neural Controlled Differential Equation (ANCDE), where they adopt two Neural CDEs: the bottom Neural CDE for attention $\mathbf{z}_{attention}(t)$ and the top Neural CDE for latent process $\mathbf{z}_0(t)$:
\begin{align*}
\mathbf{z}_{attention}(t) &= \mathbf{z}_{attention}(0) + \int_{0}^{t}\gamma_{1}(s, \mathbf{z}_{attention}(s);\theta_{\gamma_{1}})\,\mathrm{d}\mathbf X(s), \\
\mathbf{z}_0(t) &= \mathbf{z}_0(0) + \int_{0}^{t}\gamma_{2}(s, \mathbf{z}_0(s);\theta_{\gamma_{2}})\,\mathrm{d}\mathbf Y(s),
\end{align*}
where $\mathbf Y(t) = \mathrm{sigmoid}(\mathbf{z}_{attention}(t)) \circ \mathbf X(t)$ and the drift functions \(\gamma_1(\cdot; \cdot; \theta_{\gamma_1})\) and \(\gamma_2(\cdot; \cdot; \theta_{\gamma_2})\) are neural networks with parameter \(\theta_{\gamma_1}\) and \(\theta_{\gamma_2}\), respectively.

%\paragraph{Neural SDE.} \citet{tzen2019neural, liu2020does, kong2020sde} introduced Neural SDE by injecting Gaussian noise into Neural ODE. This is a form of regularization by adding random noise to each layer of Resnet. The model is formulated as follows: 
%\begin{equation*}
%\begin{aligned}
%\mathbf{z}_0(t) = \mathbf{z}_0(0) + \int_{0}^{t}\gamma(s, \mathbf{z}_0(s);\theta_{\gamma})\,\mathrm{d}s 
%+ \int_{0}^{t}\sigma(t, \mathbf{z}_0(s);\theta_{\sigma})\,\mathrm{d}\mathbf{W}(s)\quad\text{with}\,\,\mathbf{z}_0(0)=\zeta(\mathbf{x};\theta_{\zeta}),
%\end{aligned}
%\end{equation*}
%where the drift function \(\gamma(\cdot; \cdot; \theta_{\gamma})\) is a neural network with parameter \(\theta_{\gamma}\), the diffusion function \(\sigma(\cdot;\cdot; \theta_{\sigma})\) is a neural network with parameter \(\theta_{\sigma}\).

\citet{oh2024stable} proposed three classes of Neural SDEs: Neural Langevin-type SDE (LSDE), Neural Linear Noise SDE (LNSDE), and Neural Geometric SDE (GSDE). These models incorporate controlled paths into well-established SDEs, effectively capturing sequential observations like time series data and achieving recent state-of-the-art performance.

\paragraph{Neural LSDE.}
Neural LSDE is a class of Langevin SDE, defined as follows:
\begin{equation*}
\mathbf{z}_0(t) = \mathbf{z}_0(0) + \int_{0}^{t}\gamma(\mathbf{z}_0(s);\theta_{\gamma})\,\mathrm{d}s + \int_{0}^{t}\sigma(s;\theta_{\sigma})\,\mathrm{d}\mathbf{W}(s) \quad\text{with}\,\,\mathbf{z}_0(0)=\zeta(\mathbf{x};\theta_{\zeta}),
\end{equation*}
where the drift function \(\gamma(\cdot; \theta_{\gamma})\) is a neural network with parameter \(\theta_{\gamma}\) and the diffusion function \(\sigma(\cdot; \theta_{\sigma})\) is a neural network with parameter \(\theta_{\sigma}\).

\paragraph{Neural LNSDE.}
Neural LNSDE is an SDE with linear multiplicative noise, defined as follows:
\begin{equation*}
\mathbf{z}_0(t) = \mathbf{z}_0(0) + \int_{0}^{t}\gamma(s, \mathbf{z}_0(s);\theta_{\gamma})\,\mathrm{d}s + \int_{0}^{t}\sigma(s;\theta_{\sigma})\mathbf{z}_0(s)\,\mathrm{d}\mathbf{W}(s) \quad\text{with}\,\,\mathbf{z}_0(0)=\zeta(\mathbf{x};\theta_{\zeta}),
\end{equation*}
where the drift function \(\gamma(\cdot; \cdot; \theta_{\gamma})\) is a neural network with parameter \(\theta_{\gamma}\) and the diffusion function \(\sigma(\cdot; \theta_{\sigma})\) is a neural network with parameter \(\theta_{\sigma}\).

\paragraph{Neural GSDE.}
Neural GSDE is motivated by Geometric Brownian motion (GBM) and is defined as follows:
\begin{equation*}
\mathbf{z}_0(t) = \mathbf{z}_0(0) + \int_{0}^{t}\gamma(s, \mathbf{z}_0(s);\theta_{\gamma})\mathbf{z}_0(s)\,\mathrm{d}s + \int_{0}^{t}\sigma(s;\theta_{\sigma})\mathbf{z}_0(s)\,\mathrm{d}\mathbf{W}(s) \quad\text{with}\,\,\mathbf{z}_0(0)=\zeta(\mathbf{x};\theta_{\zeta}),
\end{equation*}
where the drift function \(\gamma(\cdot; \cdot; \theta_{\gamma})\) is a neural network with parameter \(\theta_{\gamma}\) and the diffusion function \(\sigma(\cdot; \theta_{\sigma})\) is a neural network with parameter \(\theta_{\sigma}\).

\newpage
\subsection{Image Classification}\label{sec:image_experi}

For the dataset preprocessing, we adhered to the original train-test split provided with the datasets. From the training set, 20\% was reserved for validation purposes. For data augmentation in the training set, we employed random resizing, cropping, and flipping techniques. Image normalization was carried out using the original mean and standard deviation values from each dataset, and the images were used at their default sizes without any resizing. This approach helped maintain the integrity of the original data while enhancing the model's ability to generalize from augmented variations.

In our experiments, the batch size was set to 128 for the CIFAR-100, CIFAR-10, and SVHN datasets. For the STL-10 dataset, a smaller batch size of 64 was utilized due to its limited number of samples. The models were optimized using stochastic gradient descent (SGD) with an initial learning rate of 0.1 over a course of 100 epochs. To address potential stalls in training progress, the learning rate was halved if there was no change in the loss over two consecutive epochs. Additionally, an early-stopping mechanism was implemented, terminating the training process if there was no improvement in the loss for ten consecutive epochs. This strategy helps in preventing overfitting and ensures efficient training by curtailing unnecessary computation once performance plateaus.

% Specifically, model architecture consists of multiple components as follows:
Specifically, the model architecture consists of the following components:
\begin{itemize}
    \item The input layer of the image classification model is similar to conventional CNN models, where multiple convolutional operations are applied to the input image to extract basic features. An average pooling layer is used to extract the input feature as a vector $\rvz$, with a vector size of 1024 obtained through convolutional layers.
    \item The feature vector $\rvz$ is fed into the neural differential equation module, which estimates the hidden state $\rvz(T)$ from the initial input feature $\rvz$, where $T$ represents the depth. The last value of $\rvz(T)$ is estimated by solving neural differential equations.
    \item The differential equation module consists of a vector field that can be learned through backpropagation. In the case of Neural SDE, the drift term and diffusion term are represented by neural networks. The complexity of the vector field is controlled to ensure that the total number of parameters is similar to ResNet18.
    \item The last value $\rvz(T)$ obtained from the neural differential equation module is fed into the classifier to identify the class of the input. The parameters for all three modules (feature extractor, neural differential equation module, and classifier) are optimized simultaneously using backpropagation and the adjoint-sensitive algorithm. 
\end{itemize}

We defined the complexity of the model to have a number of parameters comparable to that of ResNet18. Precisely, ResNet18 has 11.18 million parameters for 10 classes and 11.23 million for 100 classes. Our Neural ODE models possess 10.49 million and 10.59 million parameters for 10 and 100 classes, respectively. Meanwhile, the Neural SDE models, both additive and multiplicative, contain 12.59 million and 12.69 million parameters for 10 and 100 classes, respectively. 
We used depth $T=10$ in the experiments. We employed the Euler method (Euler-Maruyama method in the case of SDEs) for the numerical solving of the differential equations, using a time step size of $\Delta t = 0.1$.

Neural SDE with additive and Neural SDE with multiplicative noise are defined as follows:

\paragraph{Neural SDE with Additive Noise.}

\begin{equation*}
\mathbf{z}_0(t) = \mathbf{z}_0(0) +\int_{0}^{t} \gamma(s, \mathbf{z}_0(s);\theta_{\gamma})\,\mathrm{d}s + \int_{0}^{t}\sigma(s;\theta_{\sigma})\,\mathrm{d} \mathbf{W}(s) \quad\text{with}\,\,\mathbf{z}_0(0)=\zeta(\mathbf{x};\theta_{\zeta}),
\end{equation*}
where the noise term does not depend on the latent process \(\mathbf{z}_0(t)\) and only depends on time $t$.

\paragraph{Neural SDE with Multiplicative Noise.}
\begin{equation*}
\mathbf{z}_0(t) = \mathbf{z}_0(0) +\int_{0}^{t} \gamma(s, \mathbf{z}_0(s);\theta_{\gamma})\,\mathrm{d}s + \int_{0}^{t}\sigma(s;\theta_{\sigma})\,\mathbf{z}_0(s)\,\mathrm{d} \mathbf{W}(s) \quad\text{with}\,\,\mathbf{z}_0(0)=\zeta(\mathbf{x};\theta_{\zeta}),
\end{equation*}
where the noise term indicates an interaction between the diffusion function and the latent process \(\mathbf{z}_0(t)\).

\subsection{Calibration Experiments}\label{sec:app_cali}
We designed experiments to evaluate the uncertainty quantification capability of the proposed method (Continuum Dropout), particularly its ability to produce uncertainty-aware predictions. We assessed calibration performance on CIFAR-100 with Neural ODE and Speech Commands with Neural CDE—the same settings in which the proposed method had previously demonstrated strong regularization performance. Importantly, we did not tune hyperparameters specifically for calibration; instead, we reused the hyperparameters that achieved the best classification performance for each dataset and model. Predictive confidence was computed from the softmax probabilities, and calibration was evaluated using reliability diagrams with 10 equal-width bins. For Continuum Dropout, we performed 5 Monte Carlo (MC) samples of the latent trajectory $\mathbf{z}(t)$ to estimate 
its stochastic distribution, and applied the classification layer to these samples to compute the final predictive distribution.

\newpage

\section{Extended experiments on time series datasets}

We conducted a comprehensive evaluation of the proposed method (Continuum Dropout) using 30 diverse datasets from the University of East Anglia (UEA) and University of California Riverside (UCR) Time Series Classification Repository\footnote{\url{http://www.timeseriesclassification.com/}}~\citep{bagnall2018uea,dau2019ucr}. This analysis was facilitated by the Python library \texttt{sktime}\footnote{\url{https://github.com/sktime/sktime}}~\citep{loning2019sktime}. % The selected datasets encompass both univariate and multivariate time series, representing a wide spectrum of real-world applications, thereby providing a robust testbed for assessing the efficacy and versatility of our proposed methods. 
We followed the experiment protocol and code repository based on \citet{oh2024stable}.
\vspace{-5pt}
\begin{table}[htb]
\scriptsize\centering\captionsetup{justification=centering, skip=5pt}
\caption{Data description for extended experiments}\label{tab:data}
\begin{tabular}{@{}lrrrr@{}}
\Xhline{0.7pt}
\multicolumn{1}{c}{\textbf{Dataset}} & \multicolumn{1}{c}{\textbf{Total number of samples}} & \multicolumn{1}{c}{\textbf{Number of classes}} & \multicolumn{1}{c}{\textbf{Dimension of time series}} & \multicolumn{1}{c}{\textbf{Length of time series}} \\ \hline
\textbf{ArrowHead}                 & 211                           & 3                              & 1                            & 251                     \\
\textbf{Car}                       & 120                           & 4                              & 1                            & 577                     \\
\textbf{Coffee}                    & 56                            & 2                              & 1                            & 286                     \\
\textbf{GunPoint}                  & 200                           & 2                              & 1                            & 150                     \\
\textbf{Herring}                   & 128                           & 2                              & 1                            & 512                     \\
\textbf{Lightning2}                & 121                           & 2                              & 1                            & 637                     \\
\textbf{Lightning7}                & 143                           & 7                              & 1                            & 319                     \\
\textbf{Meat}                      & 120                           & 3                              & 1                            & 448                     \\
\textbf{OliveOil}                  & 60                            & 4                              & 1                            & 570                     \\
\textbf{Rock}                      & 70                            & 4                              & 1                            & 2844                    \\
\textbf{SmoothSubspace}            & 300                           & 3                              & 1                            & 15                      \\
\textbf{ToeSegmentation1}          & 268                           & 2                              & 1                            & 277                     \\
\textbf{ToeSegmentation2}          & 166                           & 2                              & 1                            & 343                     \\
\textbf{Trace}                     & 200                           & 4                              & 1                            & 275                     \\
\textbf{Wine}                      & 111                           & 2                              & 1                            & 234                     \\ \hline
\textbf{ArticularyWordRecognition} & 575                           & 25                             & 9                            & 144                     \\
\textbf{BasicMotions}              & 80                            & 4                              & 6                            & 100                     \\
\textbf{CharacterTrajectories}     & 2858                          & 20                             & 3                            & 60-180                  \\
\textbf{Cricket}                   & 180                           & 12                             & 6                            & 1197                    \\
\textbf{Epilepsy}                  & 275                           & 4                              & 3                            & 206                     \\
\textbf{ERing}                     & 300                           & 6                              & 4                            & 65                      \\
\textbf{EthanolConcentration}      & 524                           & 4                              & 3                            & 1751                    \\
\textbf{EyesOpenShut}              & 98                            & 2                              & 14                           & 128                     \\
\textbf{FingerMovements}           & 416                           & 2                              & 28                           & 50                      \\
\textbf{Handwriting}               & 1000                          & 26                             & 3                            & 152                     \\
\textbf{JapaneseVowels}            & 640                           & 9                              & 12                           & 7-26                    \\
\textbf{Libras}                    & 360                           & 15                             & 2                            & 45                      \\
\textbf{NATOPS}                    & 360                           & 6                              & 24                           & 51                      \\
\textbf{RacketSports}              & 303                           & 4                              & 6                            & 30                      \\
\textbf{SpokenArabicDigits}        & 8798                          & 10                             & 13                           & 4-93                    \\ \Xhline{0.7pt}
\end{tabular}
\vspace{-5pt}
\end{table}

Table~\ref{tab:summary} presents a comprehensive overview of the performance improvements achieved by our proposed method (Continuum Dropout) across various methods. The results demonstrate significant enhancements in classification accuracy across the 30 datasets examined. % On average, we observed performance improvements ranging from 4.19\% to 13.39\%. 
Notably, the impact was more pronounced for Neural ODE and Neural CDE models compared to Neural SDE-based approaches. % This substantial improvement underscores the efficacy of our proposed dropout method in enhancing the generalization capabilities of neural differential equation models across diverse time series classification tasks.
\vspace{-5pt}
\begin{table}[htb]
\scriptsize\centering\captionsetup{justification=centering, skip=5pt}
\caption{Comprehensive performance analysis on extended datasets \\
(Results are averaged across 30 diverse datasets. Values in parentheses represent the mean of individual standard deviations. Improvement and percentage change are reported as average $\pm$ standard error of the mean.)
}\label{tab:summary}
%\vspace{3pt}
\begin{tabular}{@{}ccccc@{}}
\Xhline{0.7pt}
\textbf{Method}       & \textbf{Baseline} & \textbf{Proposed} & \textbf{Improvement} & \textbf{\% Change} \\ \hline
\textbf{Neural ODE}   & 0.521 (0.065)     & 0.568 (0.054)     & 0.047  ± 0.006       & 11.56  ± 2.02      \\ % \hline
\textbf{Neural CDE}   & 0.709 (0.061)     & 0.781 (0.048)     & 0.072  ± 0.012       & 13.39  ± 2.70      \\ % \hline
\textbf{Neural SDE}   & 0.526 (0.068)     & 0.571 (0.065)     & 0.045  ± 0.008       & 9.69  ± 1.87       \\ % \hline
\textbf{Neural LSDE}  & 0.717 (0.056)     & 0.741 (0.061)     & 0.024  ± 0.010       & 4.19  ± 1.65       \\ % \hline
\textbf{Neural LNSDE} & 0.727 (0.047)     & 0.761 (0.056)     & 0.034  ± 0.007       & 5.93  ± 1.38       \\ % \hline
\textbf{Neural GSDE}  & 0.716 (0.065)     & 0.752 (0.063)     & 0.036  ± 0.009       & 6.00  ± 1.50       \\ \Xhline{0.7pt}
\end{tabular}
\vspace{-5pt}
\end{table}

Tables~\ref{tab:summary_univariate} and~\ref{tab:summary_multivariate} offer a detailed decomposition of our experimental outcomes, stratifying the 30 datasets into two equal subsets: 15 univariate and 15 multivariate time series. Our proposed method demonstrates heightened efficacy when applied to univariate datasets. The contrasting results between these two categories underscore the importance of considering data dimensionality when applying proposed method to NDEs.

\begin{table}[htb]
\scriptsize\centering\captionsetup{justification=centering, skip=5pt}
\caption{Comprehensive performance analysis on 15 univariate datasets
% (Results are averaged across 15 univariate datasets. Values in parentheses represent the mean of individual standard deviations. Improvement and percentage change are reported as average $\pm$ standard error of the mean.)
}\label{tab:summary_univariate}
\begin{tabular}{@{}ccccc@{}}
\Xhline{0.7pt}
\textbf{Method}       & \textbf{Baseline} & \textbf{Proposed} & \textbf{Improvement} & \textbf{\% Change} \\ \hline
\textbf{Neural ODE}   & 0.535 (0.073)     & 0.593 (0.068)     & 0.058  ± 0.009       & 11.73  ± 1.93      \\ % \hline
\textbf{Neural CDE}   & 0.628 (0.079)     & 0.725 (0.065)     & 0.098  ± 0.020       & 19.33  ± 4.63      \\ % \hline
\textbf{Neural SDE}   & 0.535 (0.086)     & 0.597 (0.082)     & 0.062  ± 0.013       & 12.99  ± 3.23      \\ % \hline
\textbf{Neural LSDE}  & 0.636 (0.082)     & 0.683 (0.089)     & 0.047  ± 0.013       & 8.12  ± 2.35       \\ % \hline
\textbf{Neural LNSDE} & 0.665 (0.065)     & 0.718 (0.078)     & 0.053  ± 0.010       & 9.41  ± 2.19       \\ % \hline
\textbf{Neural GSDE}  & 0.649 (0.097)     & 0.711 (0.096)     & 0.062  ± 0.015       & 10.30  ± 2.50      \\ \Xhline{0.7pt}
\end{tabular}
\vspace{-3pt}
\end{table}

\begin{table}[htb]
\scriptsize\centering\captionsetup{justification=centering, skip=5pt}
\caption{Comprehensive performance analysis on 15 multivariate datasets
% (Results are averaged across 15 multivariate datasets. Values in parentheses represent the mean of individual standard deviations. Improvement and percentage change are reported as average $\pm$ standard error of the mean.)
}\label{tab:summary_multivariate}
\begin{tabular}{@{}ccccc@{}}
\Xhline{0.7pt}
\textbf{Method}       & \textbf{Baseline} & \textbf{Proposed} & \textbf{Improvement} & \textbf{\% Change} \\ \hline
\textbf{Neural ODE}   & 0.507 (0.056)     & 0.544 (0.041)     & 0.037  ± 0.009       & 11.38  ± 3.63      \\ % \hline
\textbf{Neural CDE}   & 0.790 (0.044)     & 0.837 (0.031)     & 0.047  ± 0.010       & 7.45  ± 1.93       \\ % \hline
\textbf{Neural SDE}   & 0.516 (0.049)     & 0.545 (0.048)     & 0.028  ± 0.007       & 6.38  ± 1.57       \\ % \hline
\textbf{Neural LSDE}  & 0.798 (0.031)     & 0.800 (0.032)     & 0.002  ± 0.013       & 0.27  ± 1.88       \\ % \hline
\textbf{Neural LNSDE} & 0.789 (0.030)     & 0.805 (0.034)     & 0.016  ± 0.007       & 2.45  ± 1.18       \\ % \hline
\textbf{Neural GSDE}  & 0.783 (0.034)     & 0.794 (0.030)     & 0.011  ± 0.005       & 1.71  ± 0.69       \\ \Xhline{0.7pt}
\end{tabular}
\vspace{-3pt}
\end{table}

Figure~\ref{fig:hyperparam_extended} illustrates the performance variations across different hyperparameter configurations. The baseline performance without Continuum Dropout is represented by a black dashed horizontal line. Our results consistently demonstrate that the proposed method enhances classification performance relative to this baseline across various hyperparameter settings. This robust improvement highlights the method's effectiveness in bolstering generalization capabilities and its versatility in addressing diverse time series classification tasks, further validating its practical utility in real-world problems.

\begin{figure*}[!htb]
    \centering\captionsetup{justification=centering, skip=5pt}
    \captionsetup[subfigure]{justification=centering, skip=5pt}
    \subfloat[Neural ODE]{
      \includegraphics[clip,width=0.47\linewidth]{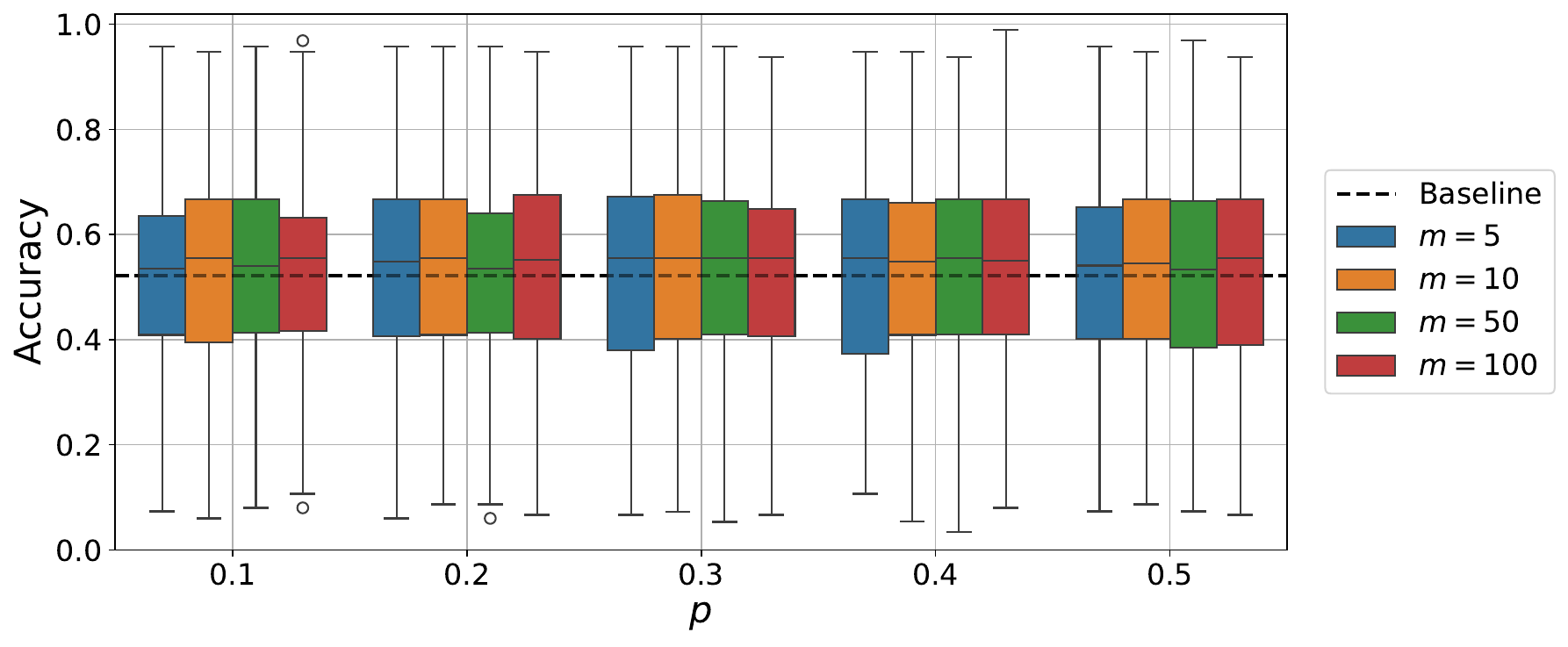}} \hfill
    \subfloat[Neural CDE]{
      \includegraphics[clip,width=0.47\linewidth]{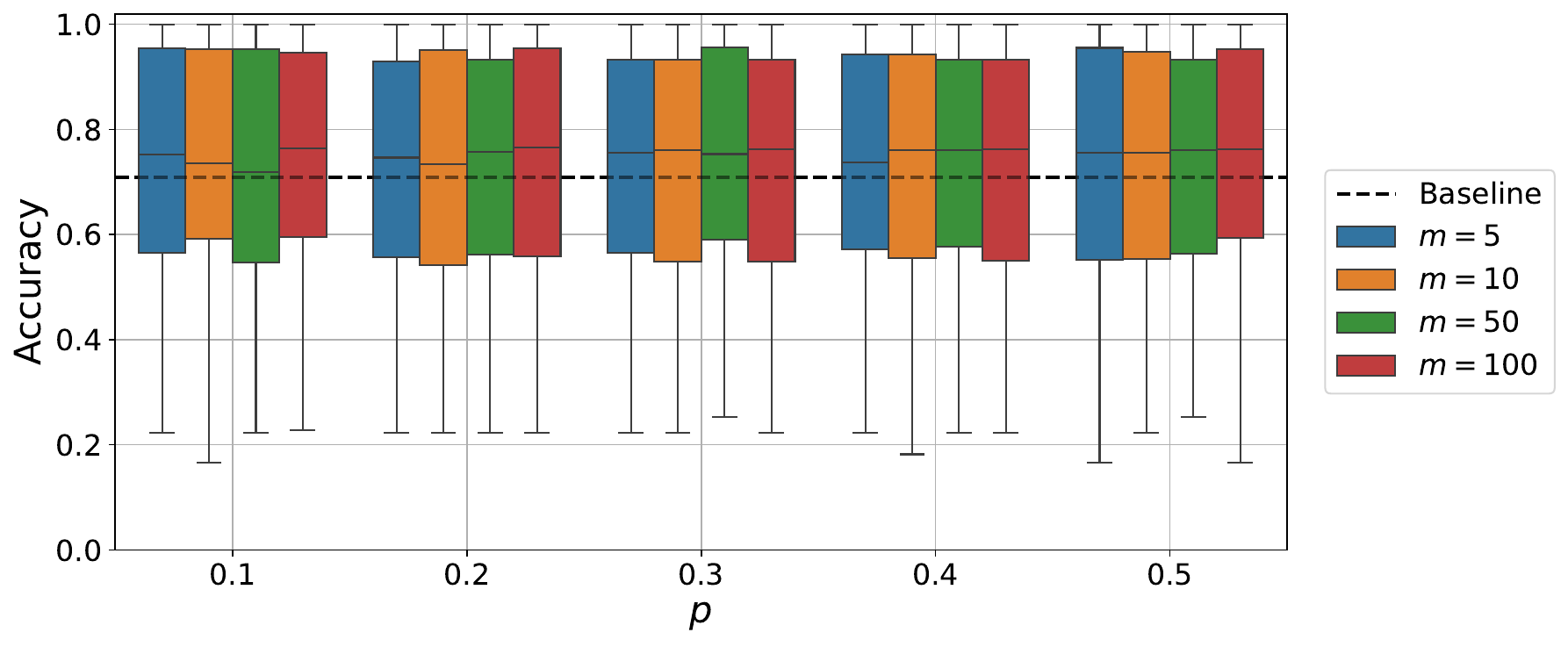}} \\
    %\vspace{5pt}
    \subfloat[Neural SDE]{
      \includegraphics[clip,width=0.47\linewidth]{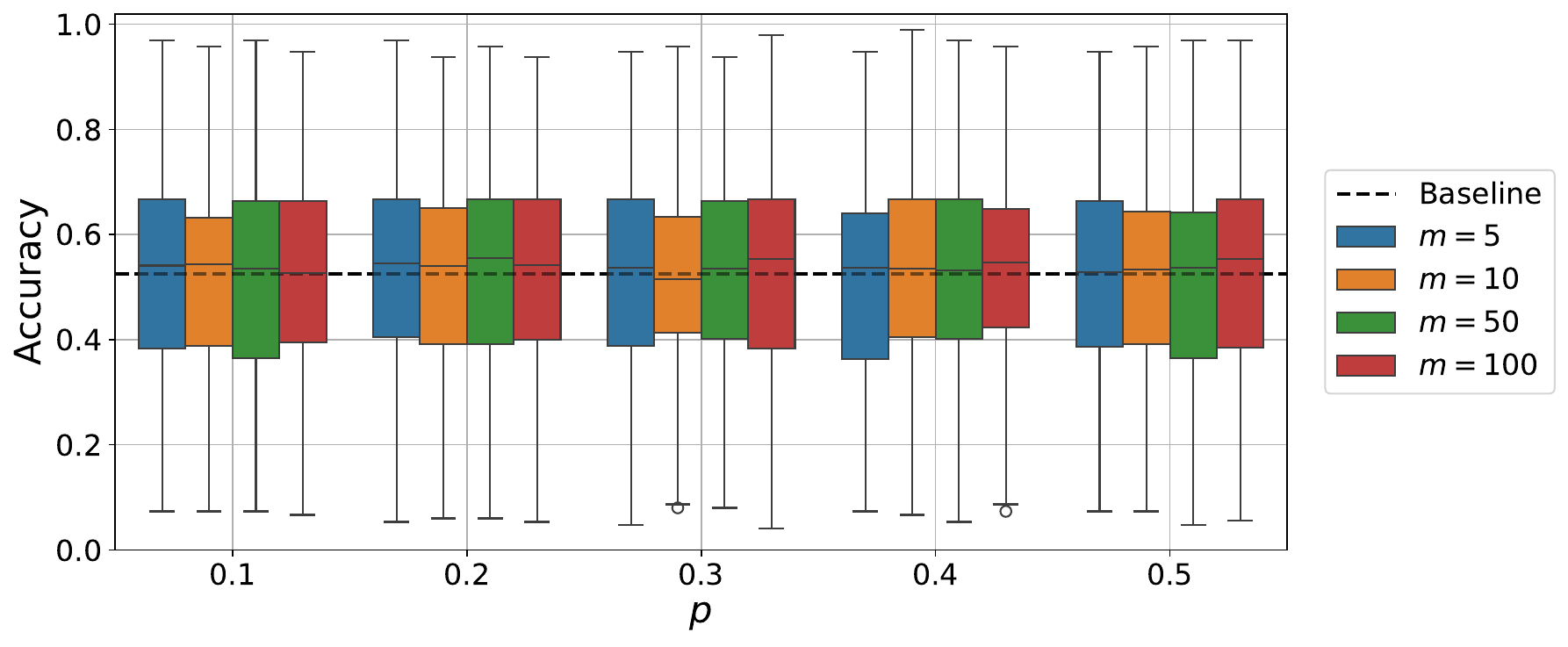}} \hfill
    \subfloat[Neural LSDE]{
      \includegraphics[clip,width=0.47\linewidth]{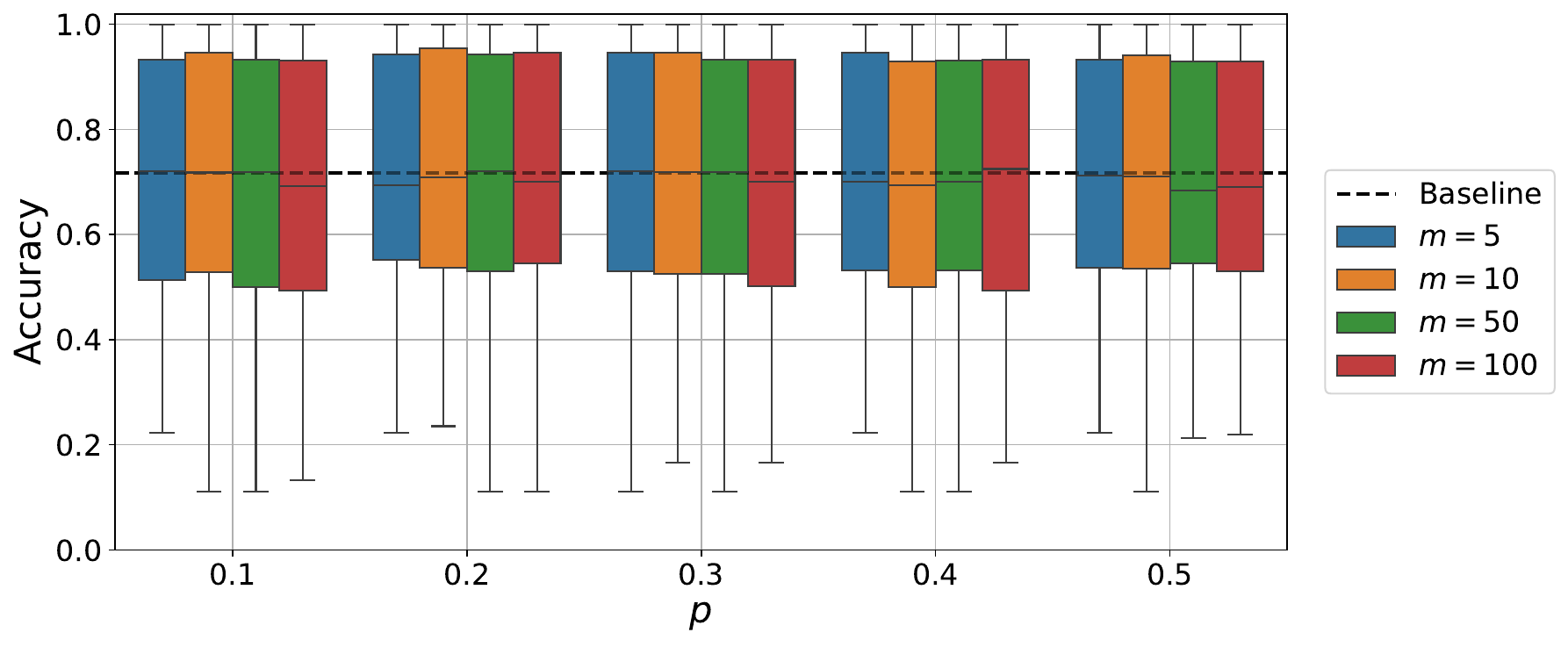}} \\
      %\vspace{5pt}
    \subfloat[Neural LNSDE]{
      \includegraphics[clip,width=0.47\linewidth]{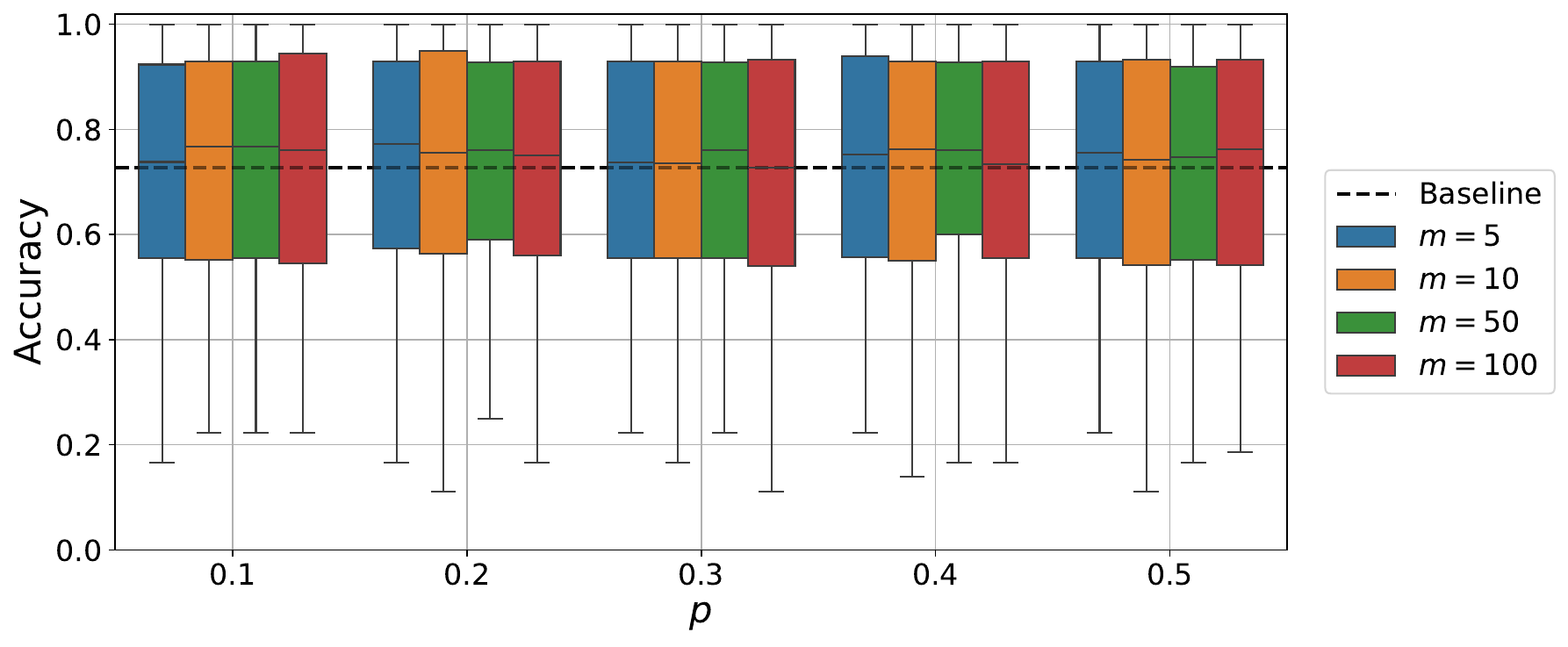}} \hfill
    \subfloat[Neural GSDE]{
      \includegraphics[clip,width=0.47\linewidth]{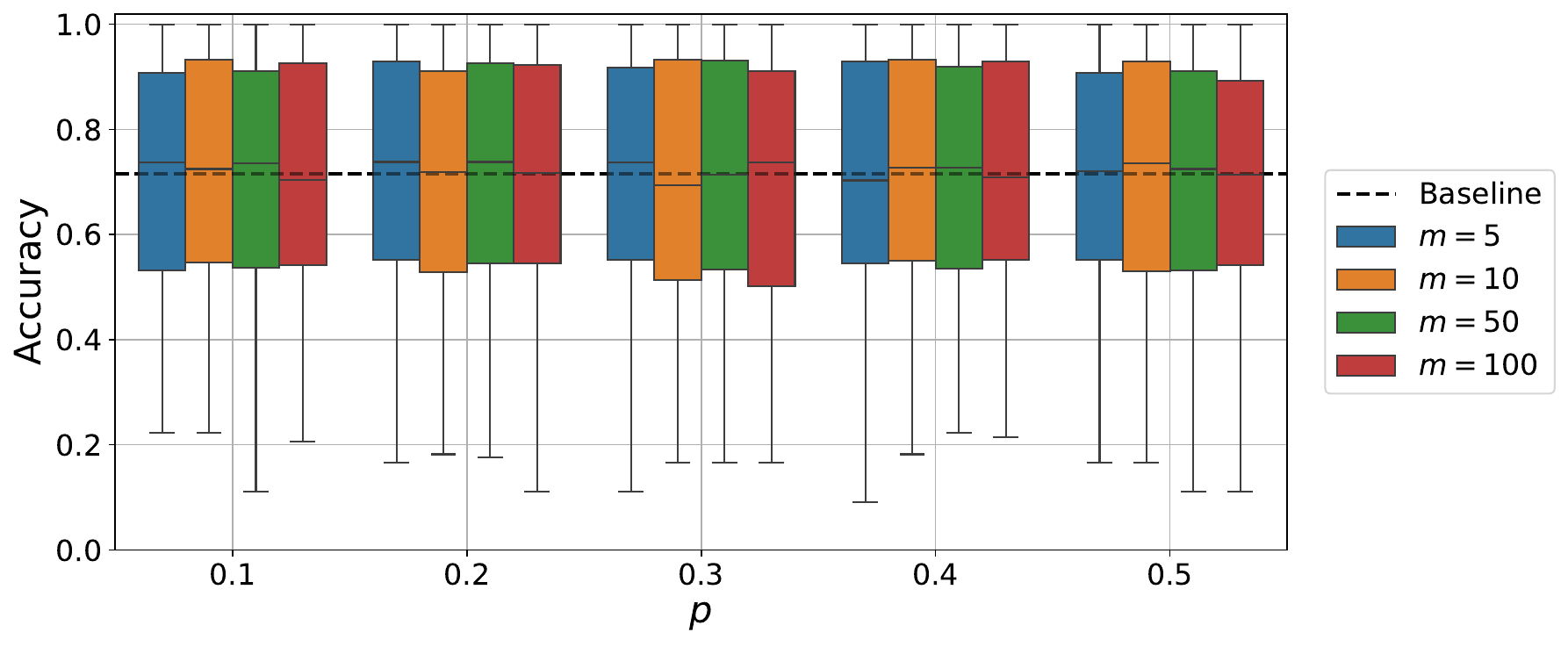}} 
    \caption{Performance comparison with different hyperparameters on extended datasets}\label{fig:hyperparam_extended}
\end{figure*}

\clearpage
\newpage
\section{Further Analysis of Continuum Dropout}\label{app:NRODE}

\subsection{Sensitivity Analysis of Hyperparameters}\label{sec:sensitivity}

We conducted a sensitivity analysis to provide guidelines for selecting the hyperparameters $p$ and $m$ of proposed method (Continuum Dropout). We evaluated the accuracy of Neural CDE successfully trained on Speech Commands dataset. To observe the performance variations with respect to $p$ and $m$, we set $p \in \lbrack 0.1, 0.2, 0.3, 0.4, 0.5 \rbrack$ and $m \in \lbrack 3, 5, 10, 50, 100 \rbrack$. Table~\ref{tab:sensitivity} shows the performance for each combination of $p$ and $m$, and we highlighted in red the cases that exhibit lower performance than the performance without Continuum Dropout, adjusted by one standard deviation. We observed a significant performance decline at high dropout rates when $m=3$. This is because a smaller $m$ increases the variance of latent processes with Continuum Dropout, hindering stable training. Consequently, we limited our experiments to $m \in \lbrack 5, 10, 50, 100 \rbrack$ in this paper. This choice robustly improves performance regardless of hyperparameters $p$ and $m$.

%We conducted an ablation study to provide guidelines for selecting the hyperparameter $m$ and the sample size of Monte Carlo simulation $MC$. To observe the performance variations with respect to $m$ and $MC$, we set $m \in \lbrack 3, 5, 10, 50, 100 \rbrack$ and $MC \in \lbrack 5, 10, 20 \rbrack$. We evaluated the accuracy of Neural LSDE successfully trained on Speech Commands dataset with 10 repeated experiments. Specifically, we considered the most complex Neural LSDE setting during our experiments, with $n_l = 4$ and $n_h = 128$. We have highlighted \textbf{the best} and \underline{higher than without dropout performance (0.926)} methods in the Table~\ref{tab:ablation}.
\vspace{-5pt}
\begin{table}[H]
\scriptsize\centering\captionsetup{justification=centering, skip=5pt}
\caption{Accuracy on Speech Commands from the hyperparameter sensitivity analysis}
\label{tab:sensitivity}
\begin{tabular}{cccccc}
\Xhline{0.7pt}
\multirow{2}{*}{$p$} & \multicolumn{5}{c}{$m$} \\ \cline{2-6} 
                     & 3         & 5         & 10        & 50        & 100       \\ \hline
0.1 & 0.934 (0.002) & 0.938 (0.004) & 0.936 (0.002) &  0.938 (0.001) & 0.940 (0.002)
 \\
0.2 & 0.930 (0.003)  & 0.939 (0.002) & 0.938 (0.001) & 0.939 (0.002) & 0.936 (0.003) \\
0.3 & 0.921 (0.004)  & 0.936 (0.003) & 0.938 (0.003) & \textbf{0.940 (0.001)} & \textbf{0.940 (0.001)} \\
0.4 & \textcolor{red}{0.914 (0.005)}  & 0.931 (0.003) & 0.934 (0.003) & 0.933 (0.003) & 0.931 (0.003) \\
0.5 & \textcolor{red}{0.909 (0.008)}  & 0.928 (0.004) & 0.923 (0.003) & 0.927 (0.003) & 0.925 (0.004) \\ \Xhline{0.7pt}
\end{tabular}
\vspace{-5pt}
\end{table}

Figures~\ref{fig:hyperparam_cifar100}--\ref{fig:hyperparam_svhn} illustrate how the performance of proposed method for Neural ODE and Neural SDEs on image datasets varies with different hyperparameters $p$ and $m$, showing that the proposed method robustly enhances performance regardless of the choice of $p$ and $m$ on image datasets. Black dashed line and gray area indicate the mean and standard deviation of without Continuum Dropout.

\begin{figure*}[h]
    \centering\captionsetup{justification=centering, skip=5pt}
    \captionsetup[subfigure]{justification=centering, skip=5pt}
    \subfloat[Neural ODE]{
      \includegraphics[clip,width=0.32\linewidth]{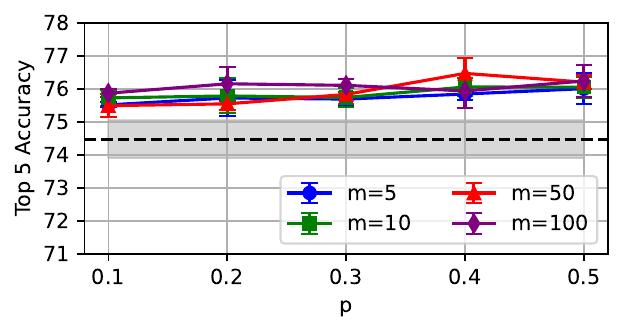}} \hfil
    \subfloat[Neural SDE (additive)]{
      \includegraphics[clip,width=0.32\linewidth]{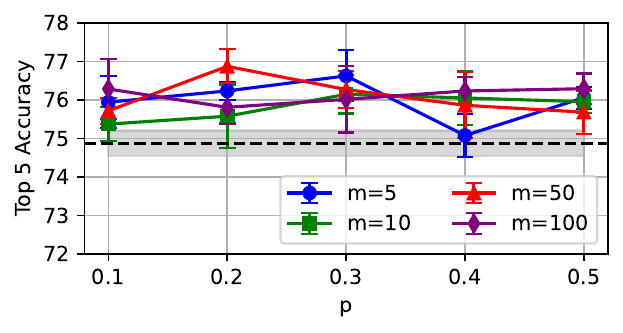}} \hfil
    \subfloat[Neural SDE (multiplicative)]{
      \includegraphics[clip,width=0.32\linewidth]{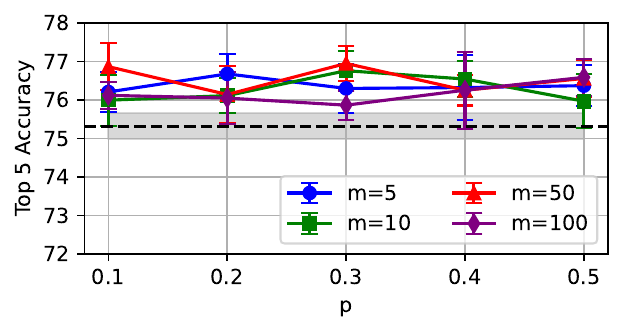}} 
    \caption{Performance comparison with different hyperparameters on CIFAR-100}\label{fig:hyperparam_cifar100}
    \vspace{-10pt}
\end{figure*}

\vspace{-5pt} 
\begin{figure*}[h]
    \centering\captionsetup{justification=centering, skip=5pt}
    \captionsetup[subfigure]{justification=centering, skip=5pt}
    \subfloat[Neural ODE]{
      \includegraphics[clip,width=0.32\linewidth]{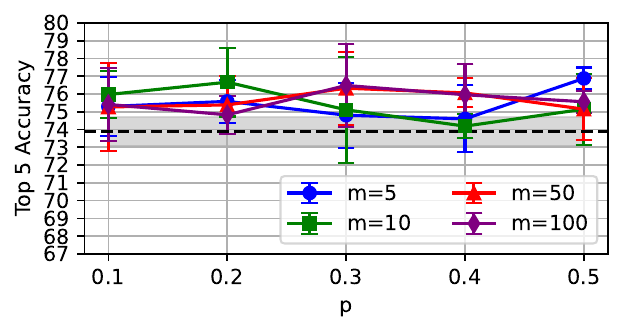}} \hfil
    \subfloat[Neural SDE (additive)]{
      \includegraphics[clip,width=0.32\linewidth]{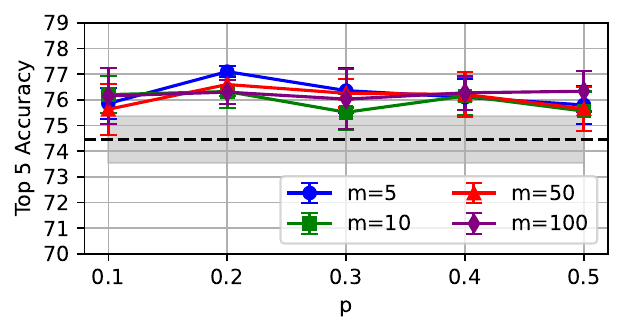}} \hfil
    \subfloat[Neural SDE (multiplicative)]{
      \includegraphics[clip,width=0.32\linewidth]{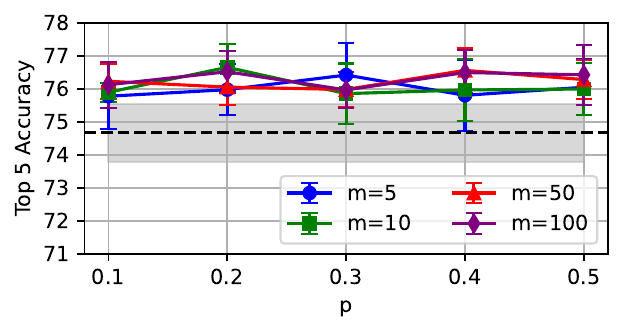}} 
    \caption{Performance comparison with different hyperparameters on CIFAR-10}\label{fig:hyperparam_cifar10}
    \vspace{-10pt}    
\end{figure*}

\begin{figure*}[h]
    \centering\captionsetup{justification=centering, skip=5pt}
    \captionsetup[subfigure]{justification=centering, skip=5pt}
    \subfloat[Neural ODE]{
      \includegraphics[clip,width=0.32\linewidth]{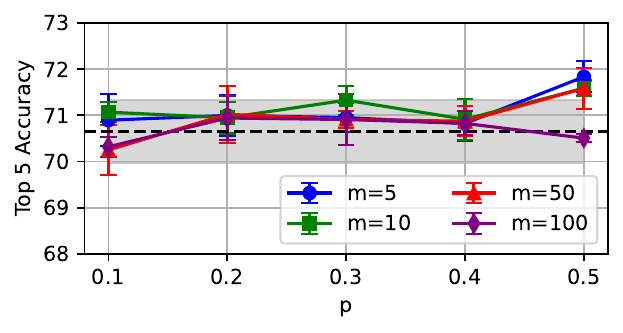}} \hfil
    \subfloat[Neural SDE (additive)]{
      \includegraphics[clip,width=0.32\linewidth]{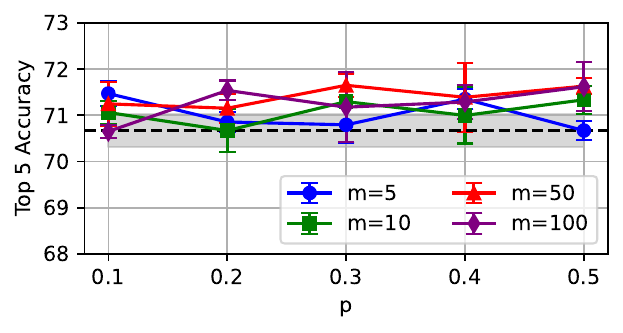}} \hfil
    \subfloat[Neural SDE (multiplicative)]{
      \includegraphics[clip,width=0.32\linewidth]{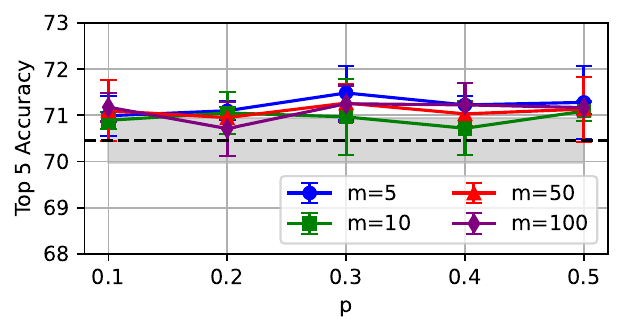}} 
    \caption{Performance comparison with different hyperparameters on STL-10} \label{fig:hyperparam_stl10}
    \vspace{-8pt}
\end{figure*}

\begin{figure*}[h]
    \centering\captionsetup{justification=centering, skip=5pt}
    \captionsetup[subfigure]{justification=centering, skip=5pt}
    \subfloat[Neural ODE]{
      \includegraphics[clip,width=0.32\linewidth]{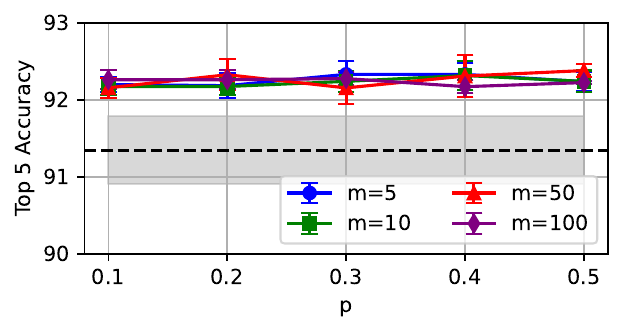}} \hfil
    \subfloat[Neural SDE (additive)]{
      \includegraphics[clip,width=0.32\linewidth]{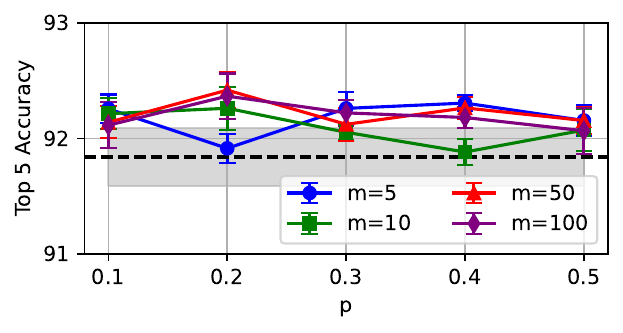}} \hfil
    \subfloat[Neural SDE (multiplicative)]{
      \includegraphics[clip,width=0.32\linewidth]{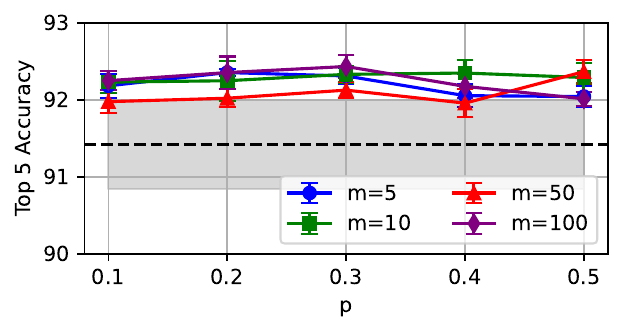}} 
    \caption{Performance comparison with different hyperparameters on SVHN}\label{fig:hyperparam_svhn}

\end{figure*}

\begin{figure}[h]
    \centering\captionsetup{justification=centering, skip=5pt}
    \captionsetup[subfigure]{justification=centering, skip=5pt}
    
    \begin{subfigure}{0.38\textwidth}
        \includegraphics[width=\linewidth]{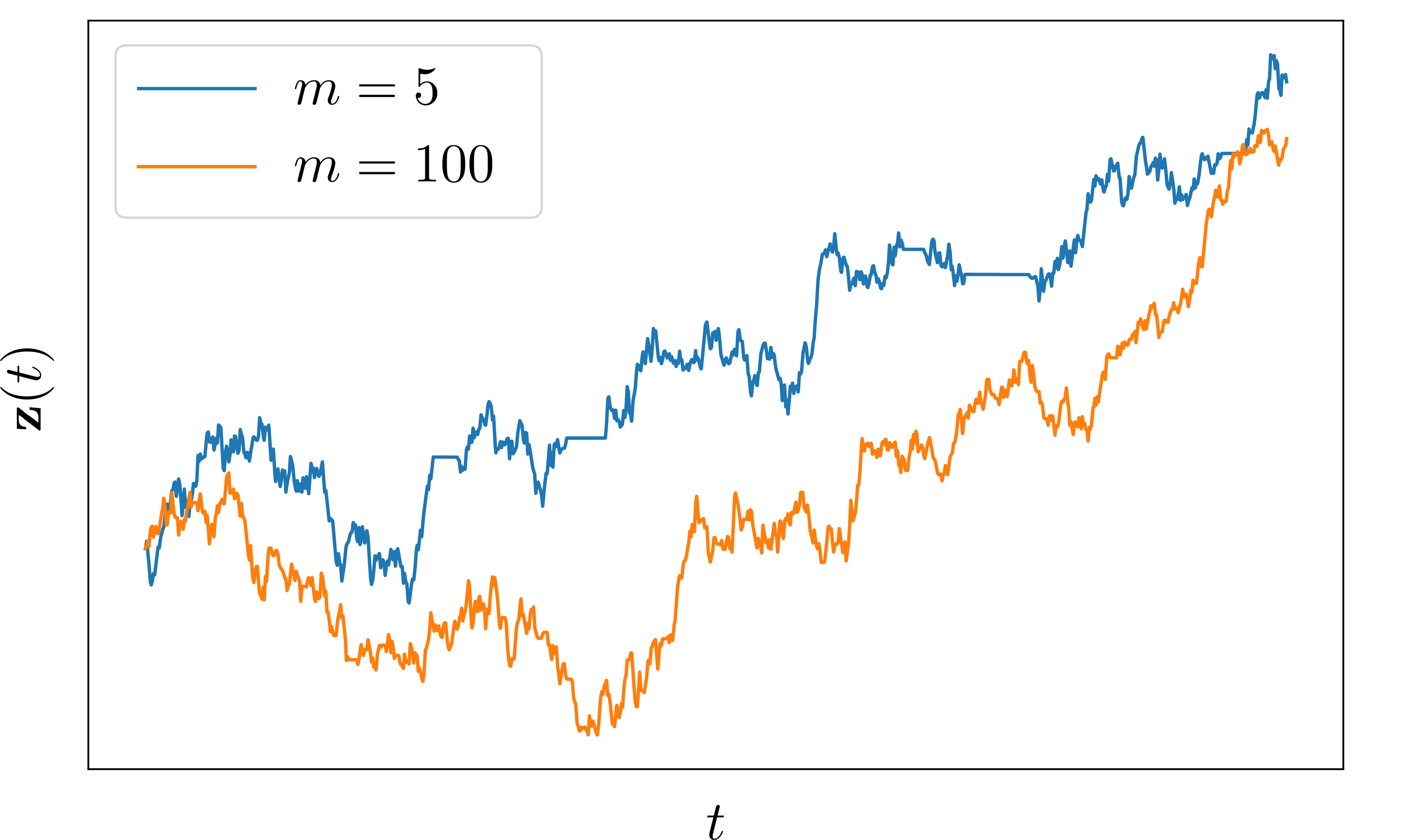}
        \caption{Dropout rate $p = 0.1$}
        \label{subfig:p1}
    \end{subfigure}
    \hspace{0.02\textwidth}
    \begin{subfigure}{0.38\textwidth}
        \includegraphics[width=\linewidth]{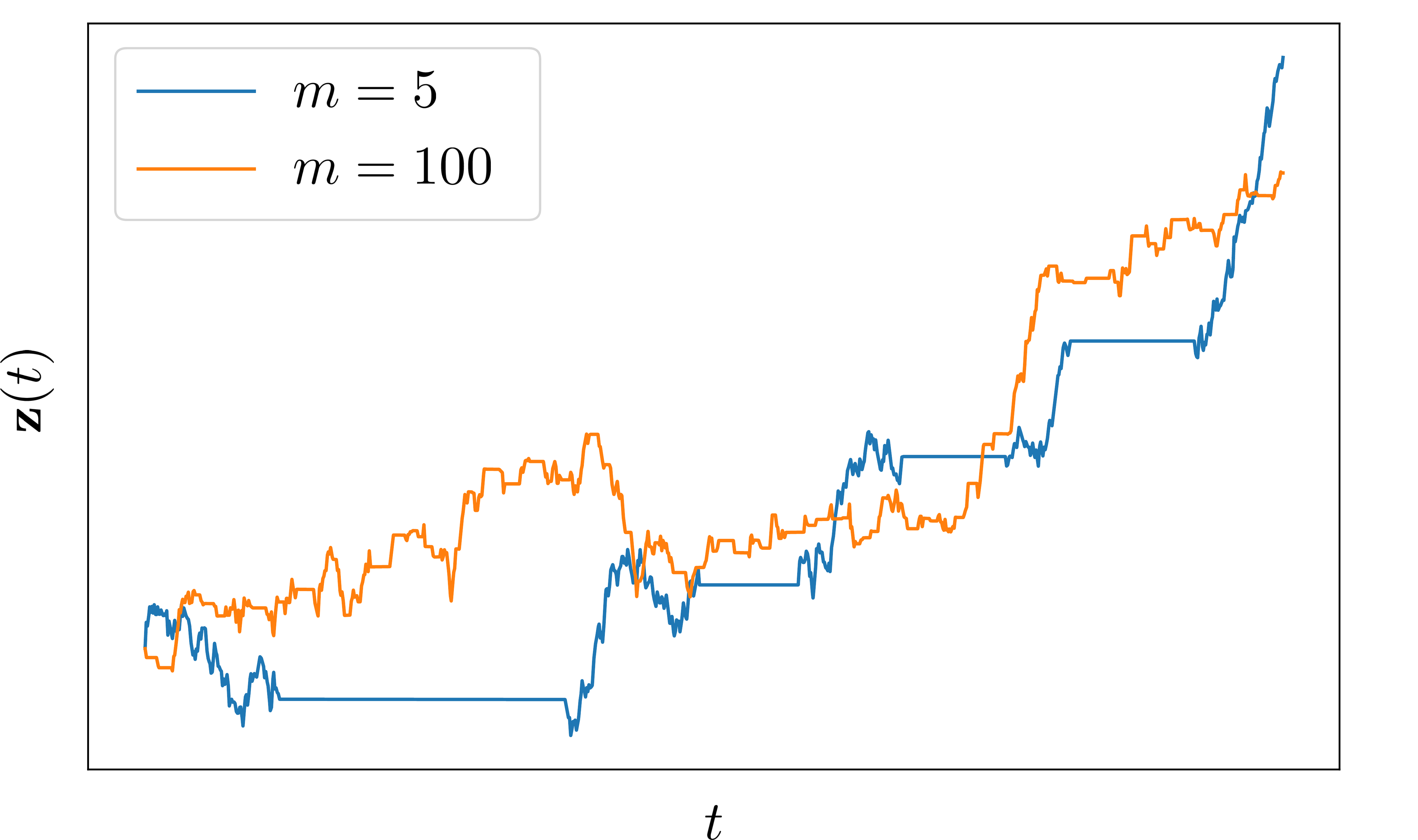}
        \caption{Dropout rate $p = 0.5$}
        \label{subfig:p5}
    \end{subfigure}

    \caption{Illustration of latent process $\rvz(t)$ with Continuum Dropout, using different hyperparameters $p$ and $m$}
    \label{fig:sample_diff_p}
\end{figure}

\subsection{Effect of Monte Carlo Sample Size}\label{app:c}

We designed experiments to analyze the effect of Monte Carlo sample size during the test phase. Table~\ref{tab:validation1} and Table~\ref{tab:validation2} show the optimal performance based on the number of Monte Carlo simulation samples $N_{\text{MC}}$ on Speech Commands and CIFAR-100, respectively. We considered $N_{\text{MC}} = [1, 3, 5, 10, 20]$, and observed consistent performance improvements with only $5$ samples, which supports the accurate inference. %While 5 samples provide sufficient performance, this paper uses 10 samples as the default for experiments to achieve more stable results.

\begin{table}[htb]
\scriptsize\centering\captionsetup{justification=centering, skip=5pt}
\caption{Accuracy on Speech Commands with different numbers of Monte Carlo simulation samples}
\label{tab:validation1}
\begin{tabular}{@{\hspace{0.2cm}}ccccccc@{\hspace{0.2cm}}}
\Xhline{0.7pt}
 \multirow{2}{*}{\begin{tabular}[c]{@{}c@{}}\textbf{Continuum} \\ \textbf{Dropout}\end{tabular}}   & \multirow{2}{*}{\textbf{$N_{\text{MC}}$}} & \multirow{2}{*}{\textbf{Neural CDE}} & \multirow{2}{*}{\textbf{ANCDE}} & \multirow{2}{*}{\textbf{Neural LSDE}} & \multirow{2}{*}{\textbf{Neural LNSDE}} & \multirow{2}{*}{\textbf{Neural GSDE}} \\
 & & & & & & \\ \hline
 X & - & 0.910 (0.005) & 0.760 (0.003) & 0.927 (0.004) & 0.923 (0.001) & 0.913 (0.001) \\ \hline
 \multirow{5}{*}{O} %& 0 & 0.934 (0.000) & 0.776 (0.003) & 0.932 (0.000) & 0.930 (0.003) & 0.920 (0.005)\\
 & 1 & 0.908 (0.002) & 0.762 (0.002) & 0.931 (0.003) & 0.925 (0.002) & 0.919 (0.006) \\
 & 3 & 0.929 (0.002) & 0.782 (0.006) & 0.932 (0.001) & 0.930 (0.000) & 0.927 (0.002)\\
 & 5 & 0.940 (0.001) & \textbf{0.794 (0.003)} & 0.932 (0.000) & \textbf{0.932 (0.001)} & 0.927 (0.001) \\
 & 10  & \textbf{0.945 (0.001)} & 0.793 (0.007) & \textbf{0.933 (0.001)} & 0.932 (0.002) & 0.930 (0.002) \\
 & 20  & 0.943 (0.003) & 0.793 (0.005) & 0.932 (0.001) & 0.932 (0.002) & \textbf{0.930 (0.001)} \\
\Xhline{0.7pt}
\end{tabular}
\end{table}

\begin{table}[!ht]
\scriptsize\centering\captionsetup{justification=centering, skip=5pt}
\caption{Top-5 accuracy on CIFAR-100 with different numbers of Monte Carlo simulation samples}
\label{tab:validation2}
\begin{tabular}{@{\hspace{0.2cm}}ccccc@{\hspace{0.2cm}}}
\Xhline{0.7pt}
 \multirow{2}{*}{\begin{tabular}[c]{@{}c@{}}\textbf{Continuum} \\ \textbf{Dropout}\end{tabular}}  & \multirow{2}{*}{\textbf{$N_{\text{MC}}$}} & \multirow{2}{*}{\textbf{Neural ODE}} & \multirow{2}{*}{\begin{tabular}[c]{@{}c@{}}\textbf{Neural SDE} \\      \textbf{(additive)}\end{tabular}} & \multirow{2}{*}{\begin{tabular}[c]{@{}c@{}}\textbf{Neural SDE} \\      \textbf{(multiplicative)}\end{tabular}} \\
 & & & & \\ \hline
 X & - & 74.475 (0.581) & 74.878 (0.328) & 75.317 (0.338) \\ \hline
 \multirow{5}{*}{O} %& 0 & 75.822 (0.314) & 75.754 (0.390) &  75.925 (0.286)\\
 & 1 & 76.365 (0.342) & 76.390 (0.263) &  76.006 (0.426) \\
 & 3 & \textbf{76.848 (0.254)} & 76.547 (0.398) &  76.182 (0.522)  \\
 & 5 & 76.213 (0.238) & \textbf{76.958 (0.415)} & 76.787 (0.420) \\
 & 10  & 76.470 (0.480) & 76.872 (0.442) & \textbf{76.947 (0.458)} \\
 & 20  & 76.374 (0.298) & 76.704 (0.252) & 76.927 (0.404) \\
\Xhline{0.7pt}
\end{tabular}
\end{table}

\newpage
\subsection{Comprehensive Analysis of Computational Efficiency}\label{app:compute}
We performed comprehensive experiments to evaluate the computational overhead of our proposed method (Continuum Dropout) with varying dropout rates $p$ and expected renewal counts $m$. Table~\ref{tab:computation} presents the average computation times per epoch on Speech Commands dataset. The results indicate that while the proposed method generally increases computation time compared to the baseline, the impact of different $p$ and $m$ values is relatively minor. Also, since Monte Carlo simulation is only conducted during the test phase, we observe that it does not significantly increase the computational overhead. The computational overhead remains almost negligible, showing only a marginal increase compared to the baseline. 

\begin{table}[htb]
\scriptsize\centering\captionsetup{justification=centering, skip=5pt}
\caption{Computation time comparison on Speech Commands (time in seconds per epoch)}\label{tab:computation}
\begin{minipage}[b]{\textwidth}
    \centering
    \begin{tabular}{@{\hspace{0.2cm}}ccccccc@{\hspace{0.2cm}}} 
    \Xhline{0.7pt}
    $p$ & $m$ & \textbf{Neural CDE} & \textbf{ANCDE} & \textbf{Neural LSDE} & \textbf{Neural LNSDE} & \textbf{Neural GSDE} \\ 
    \hline
    0   & -   & 25.560 (0.259) & 53.264 (0.157) & 19.416 (0.147) & 19.532 (0.109) & 19.776 (0.052)  \\ 
    \hline
    \multirow{4}{*}{0.1} & 5   & 28.056 (0.262) & 60.208 (0.292) & 22.086 (0.158) & 22.499 (0.223) & 23.095 (0.233)  \\ 
        & 10  & 28.182 (0.258) & 59.914 (0.199) & 22.076 (0.171) & 22.207 (0.173) & 22.807 (0.112)  \\ 
        & 50  & 28.014 (0.245) & 59.339 (0.307) & 22.157 (0.150) & 22.213 (0.161) & 23.178 (0.246)  \\ 
        & 100 & 28.375 (0.508) & 60.100 (0.340) & 22.332 (0.190) & 22.128 (0.122) & 22.939 (0.131)  \\ 
    \hline
    \multirow{4}{*}{0.3} & 5   & 27.180 (0.366) & 59.148 (0.408) & 21.576 (0.155) & 21.668 (0.171) & 22.496 (0.176)  \\ 
        & 10  & 27.226 (0.305) & 59.156 (0.367) & 21.610 (0.135) & 21.632 (0.144) & 22.460 (0.216)  \\ 
        & 50  & 27.064 (0.248) & 59.062 (0.373) & 21.589 (0.214) & 21.646 (0.199) & 22.482 (0.183)  \\ 
        & 100 & 27.145 (0.255) & 59.093 (0.276) & 21.580 (0.214) & 21.740 (0.131) & 22.436 (0.174)  \\ 
    \hline
    \multirow{4}{*}{0.5} & 5   & 26.052 (0.255) & 58.108 (0.229) & 21.045 (0.094) & 20.790 (0.174) & 21.818 (0.089)  \\ 
        & 10  & 26.393 (0.276) & 57.643 (0.136) & 21.058 (0.097) & 21.185 (0.091) & 21.893 (0.088)  \\ 
        & 50  & 25.688 (0.219) & 57.646 (0.407) & 21.025 (0.135) & 21.176 (0.120) & 21.931 (0.116)  \\ 
        & 100 & 26.144 (0.314) & 57.905 (0.265) & 21.005 (0.104) & 21.098 (0.144) & 21.960 (0.084)  \\ 
    \Xhline{0.7pt}
    \end{tabular}
\end{minipage}
\vspace{5pt}
\end{table}

\section{Limitations of Jump Diffusion Model}\label{app:why?}

%In this section, we delve deeper into the limitations of the jump diffusion model mentioned in the main text. 

In this section, we examine the limitations of the jump diffusion model discussed in the main text in greater detail. Specifically, Appendix~\ref{app:discretization} analyzes the theoretical issues of jump diffusion model, Appendix~\ref{app:liu_hyperparameter} provides an analysis of the practical 
difficulty in tuning the dropout rate $p$, which is a hyperparameter of jump diffusion model, and Appendix~\ref{app:why_cannot} demonstrates, through experiments, why jump diffusion model cannot be universally applied to various variants of NDEs.

\subsection{Discretization}\label{app:discretization}
%\cite{liu2020does} claimed, without providing a formal proof, that 
\citet{liu2020does} claimed that 
\begin{equation}
\begin{aligned}
    \rvZ(k+1) &= \rvZ(k) + \gamma(\rvZ(k); \theta_\gamma) \circ \xi \\
    &= \rvZ(k) + \frac{1}{2} \gamma(\rvZ(k); \theta_\gamma) + \frac{1}{2}\gamma(\rvZ(k); \theta_\gamma) \circ \Xi
\end{aligned}
\label{eq:liu_discrete2}
\end{equation}
with $k=0,1,\ldots, N-1$, $\sP(\xi^{(i)}=0)=1-\sP(\xi^{(i)}=1)=p$ for $i=1,\ldots, d_z$ and $\Xi = 2\xi - 1$ is a discrete version of the following jump diffusion process: for $0\leq t \leq T$,
\begin{align}
    \rvz(t) = \rvz(0) + \int_0^t \frac{1}{2} \gamma(\rvz(\tau); \theta_\gamma) \, \mathrm{d}\tau + \int_0^t \frac{1}{2} \gamma(\rvz(\tau); \theta_\gamma) \circ \Xi_{N_\tau} \, \mathrm{d}N_\tau,  \label{eq:liu_jd}
\end{align}
where $N_\tau$ is a Poisson counting process.

However, that claim is not only incomplete in defining \eqref{eq:liu_discrete2}, but even if we were to correct it properly, % it would still be mathematically incorrect. 
our analysis reveals theoretical inconsistencies in this approach. % particularly with respect to the handling of time steps in the discrete approximation. 
More specifically, the time step size $\Delta t$ has not been considered in \eqref{eq:liu_discrete2}, which is essential for it to be a valid discrete approximation of the continuous jump diffusion process. In fact, a correct Euler discretization of \eqref{eq:liu_jd} is given by 
\begin{align*}
    \rvZ(k+1) &= \rvZ(k) + \frac{1}{2} \gamma(\rvZ(k); \theta_\gamma) \Delta t + \frac{1}{2} \gamma(\rvZ(k); \theta_\gamma) \circ \Xi_{N_k} \Delta N_k\\
    &= \rvZ(k) + 
    \begin{cases}
    \begin{aligned}
        &\frac{1}{2} \gamma(\rvZ(k); \theta_\gamma) \Delta t + \frac{1}{2} \gamma(\rvZ(k); \theta_\gamma) \circ \Xi \vphantom{\frac{1}{2}}, & \text{if } \Delta N_k = 1,\\[0.5em]
        &\frac{1}{2} \gamma(\rvZ(k); \theta_\gamma) \Delta t \vphantom{\frac{1}{2}}, & \text{if } \Delta N_k = 0,
    \end{aligned}
    \end{cases}\\
    &\neq \rvZ(k) + \gamma(\rvZ(k); \theta_\gamma) \circ \xi, 
\end{align*}
where  $\Delta N_k = N_{\frac{T}{N}k} - N_{\frac{T}{N}k^-} \in \{0, 1\}$.

In contrast, our proposed method based on the alternating renewal processes (Continuum Dropout) accurately extends the following discrete-time equation to the continuous-time process:
\begin{equation*}
    \rvZ(k+1) = \rvZ(k) + \gamma(\rvZ(k); \theta_\gamma)  \Delta t \circ I_k.
\end{equation*}
%where $\sP(I_k^{(i)}=0)=\sP(\rvz(kT/N) \text{ is in the inactive state})).$
%In contrast, the discretization of our proposed dropout scheme based on the alternating renewal processes (NRODE) is consistent with equation~\ref{eq:liu_discrete1}, where \(\sP(\xi^{(i)}=0)=\sP(S_{2k+1}^{(i)} \leq t < S_{2k+2}^{(i)}) = p\).

\subsection{Hyperparameter Tuning}\label{app:liu_hyperparameter}
\vspace{-1pt}
We evaluated the performance of jump diffusion model with respect to the hyperparameter dropout rate \( p \). Tables \ref{tab1} and \ref{tab2} present the results of classification tasks on time series and image datasets, respectively. Jump diffusion model showed performance improvement at very low dropout rates below 0.1, but no significant improvement was observed beyond that. This behavior differs from na\"{\i}ve dropout, and it introduces additional cost for tuning. However, our proposed method (Continuum Dropout), as analyzed in Appendix~\ref{sec:sensitivity}, consistently improves performance robustly within the tuning grid of na\"{\i}ve dropout.

\vspace{-5pt}
\begin{table}[H]
\scriptsize\centering\captionsetup{justification=centering, skip=5pt}
\caption{Results of hyperparameter tuning for jump diffusion model in time series datasets}\label{tab1}
\begin{minipage}[b]{0.72\textwidth}
    \centering
    \caption*{(a) Jump Diffusion \cite{liu2020does}}
    \begin{tabular}{ccccc}
    \Xhline{0.7pt}
    $p$ & \textbf{SmoothSubspace} & \textbf{ArticularyWordRecognition} & \textbf{ERing} & \textbf{RacketSports} \\ \hline
    $0$ & 0.569 (0.040) & 0.859 (0.005) & 0.839 (0.018) & 0.565 (0.065) \\ \hline
    $10^{-5}$ & 0.594 (0.048) & \textbf{0.871 (0.054)} & 0.844 (0.050)  & 0.571 (0.018)  \\ 
    $10^{-4}$ & \textbf{0.617 (0.043)} & 0.862 (0.043) & \textbf{0.861 (0.064)}  & 0.592 (0.032)   \\
    $10^{-3}$ & 0.606 (0.043) & 0.862 (0.014)  & 0.850 (0.036) &  \textbf{0.609 (0.031)}  \\
    $10^{-2}$ & 0.594 (0.036) & 0.856 (0.035)   & 0.850 (0.055)  & 0.592 (0.056)  \\
    $0.1$ & 0.600 (0.057) & 0.859 (0.021)  & 0.844 (0.057) & 0.609 (0.061)  \\ 
    $0.2$ & 0.611 (0.011)  & 0.853 (0.029) & 0.844 (0.065) & 0.576 (0.033) \\
    $0.3$ & 0.589 (0.060) & 0.862 (0.049) &0.856 (0.040)   & 0.582 (0.071)   \\
    $0.4$ & 0.594 (0.024) & 0.856 (0.014) & 0.856 (0.056)  & 0.576 (0.036)  \\
    $0.5$ & 0.600 (0.068) & 0.862 (0.017) & 0.850 (0.033)  & 0.569 (0.041) \\ \Xhline{0.7pt}
    \end{tabular}
\end{minipage}%
\vspace{4pt}
\begin{minipage}[b]{0.72\textwidth}
    \centering
    \caption*{(b) Jump Diffusion + TTN \cite{liu2020does}}
    \begin{tabular}{ccccc}
    \Xhline{0.7pt}
    $p$ & \textbf{SmoothSubspace} & \textbf{ArticularyWordRecognition} &\textbf{ERing} & \textbf{RacketSports} \\ \hline
    $0$ & 0.569 (0.040) & 0.859 (0.005) & 0.839 (0.018) & 0.565 (0.065) \\ \hline
    $10^{-5}$ & 0.583 (0.036) &  0.874 (0.029) & 0.872 (0.010) & 0.587 (0.080)  \\ 
    $10^{-4}$ & 0.594 (0.024) & \textbf{0.876 (0.025)} & \textbf{0.878 (0.025)} & 0.576 (0.024)  \\
    $10^{-3}$ & 0.583 (0.024) & 0.874 (0.022) & 0.867 (0.031)& \textbf{0.598 (0.076)}  \\
    $10^{-2}$ & 0.594 (0.040)  & 0.874 (0.031) & 0.861 (0.018) & 0.576 (0.059)  \\
    $0.1$ & \textbf{0.606 (0.018)} &  0.871 (0.015) &  0.872 (0.033) & 0.560 (0.060) \\ 
    $0.2$ & 0.594 (0.065)  & 0.862 (0.021) &  0.872 (0.036) & 0.587 (0.060)  \\
    $0.3$ & 0.600 (0.057) &  0.868 (0.031)  &  0.850 (0.024)& 0.572 (0.051)  \\
    $0.4$ & 0.606 (0.048)  &  0.871 (0.033) & 0.861 (0.033)  & 0.571 (0.056) \\
    $0.5$ & 0.600 (0.010) & 0.862 (0.020)  & 0.844 (0.016)  & 0.578 (0.039)  \\ \Xhline{0.7pt}
    \end{tabular}
\end{minipage}%
\end{table}
\vspace{-12pt}
\begin{table}[H]
\scriptsize\centering\captionsetup{justification=centering, skip=5pt}
\caption{Results of hyperparameter tuning for jump diffusion model in image datasets}\label{tab2}
\begin{minipage}[b]{0.50\textwidth}
    \centering
    \caption*{(a) Jump Diffusion \cite{liu2020does}}
    \begin{tabular}{ccccc}
    \Xhline{0.7pt}
    $p$ & \textbf{CIFAR-100} & \textbf{CIFAR-10} & \textbf{STL-10} & \textbf{SVHN} \\ \hline
    $0$ & 74.475 (1.181) & 73.870 (0.820) & 70.650 (0.688)  & 91.348 (0.440)\\ \hline
    $10^{-5}$ & 74.852 (1.085) & \textbf{74.987 (0.350)} & \textbf{71.097 (0.242)} & 91.388 (0.348)  \\ 
    $10^{-4}$ & \textbf{76.083 (0.502)} & 74.370 (1.689) &  71.084 (0.215) & 90.906 (0.422)  \\
    $10^{-3}$ & 73.873 (1.973) & 74.377 (1.303)  & 70.912 (0.436) & 91.520 (0.450)  \\
    $10^{-2}$ & 75.098 (0.530) & 74.600 (1.165) & 70.409 (0.370)
 & 91.194 (0.556)  \\
    $0.1$ & 75.415 (0.415) & 73.570 (1.788) & 70.966 (0.192) & 91.272 (0.367) \\ 
    $0.2$ & 74.975 (0.394) & 72.927 (1.782) & 70.931 (0.230) & 91.539 (0.238) \\
    $0.3$ & 74.782 (0.608) & 72.727 (1.437)  & 70.938 (0.648) & 91.478 (0.146)  \\
    $0.4$ & 74.900 (1.401) & 72.282 (1.413) & 70.903 (0.364) & \textbf{91.568 (0.413)} \\
    $0.5$ & 74.535 (0.231) & 73.850 (2.184) & 70.778 (0.204) & 91.480 (0.135) \\ \Xhline{0.7pt}
    \end{tabular}
\end{minipage}%
\vspace{4pt}
\begin{minipage}[b]{0.50\textwidth}
    \centering
    \caption*{(b) Jump Diffusion + TTN \cite{liu2020does}}
    \begin{tabular}{ccccc}
    \Xhline{0.7pt}
    $p$ & \textbf{CIFAR-100} & \textbf{CIFAR-10} & \textbf{STL-10} & \textbf{SVHN} \\ \hline
    $0$ & 74.475 (1.181) & 73.870 (0.820) & 70.650 (0.688)  & 91.348 (0.440)\\ \hline
    $10^{-5}$ & \textbf{76.013 (0.276)} & 73.742 (1.646) & 70.747 (0.479) & \textbf{91.730 (0.518)} \\ 
    $10^{-4}$ & 75.743 (0.307) & 74.305 (2.027) & 70.716 (0.282) & 91.366 (0.110)  \\
    $10^{-3}$ & 75.227 (0.261) & 74.755 (1.500) & 70.409 (0.390) & 91.323 (0.566)  \\
    $10^{-2}$ & 74.430 (1.637) & 74.710 (1.307)  & 70.544 (0.177) & 91.088 (0.309)   \\
    $0.1$ & 75.170 (0.213) & \textbf{75.015 (0.503)} & 70.775 (0.259)  & 91.509 (0.107) \\ 
    $0.2$ & 75.692 (0.392) & 73.403 (1.190) & \textbf{70.931 (0.286)} & 91.561 (0.494)  \\
    $0.3$ & 74.992 (1.470) & 73.375 (2.269) &  70.819 (0.455)& 91.267 (0.360)  \\
    $0.4$ & 74.553 (1.220) & 73.915 (1.692)  &  70.741 (0.206)& 91.423 (0.401) \\
    $0.5$ & 75.502 (1.176) & 73.160 (1.698) & 70.722 (0.411) &91.238 (0.606)   \\ \Xhline{0.7pt}
    \end{tabular}
\end{minipage}
\end{table}

\subsection{Universal Applicability}\label{app:why_cannot}

We discuss why jump diffusion model is not universally applicable, unlike the proposed method (Continuum Dropout). \citet{liu2020does} introduced a new type of Neural SDE by adding a stochastic dropout term to Neural ODE. However, this approach is limited to ODE-based models. Nonetheless, from an engineering perspective, we compared the performance of the proposed method with the model, which combines jump diffusion noise with the CDE-based model. Additionally, in the case of SDE-based models, the existing diffusion network dominates the jump diffusion model term, and thus, these models are not considered. % in \Eqref{eq:cdeliu}

%\begin{equation}\label{eq:cdeliu}
%\begin{aligned}
%\mathbf{z}_0(t) &= \mathbf{z}_0(0) + \int_{0}^{t}\gamma(s, \mathbf{z}_0(s);\theta_{\gamma})\,\mathrm{d}\mathbf X(s) + \int_{0}^{t} \sqrt{\frac{p}{1-p}}\,\gamma(s, \mathbf{z}_0(s);\theta_{\gamma})\circ\mathrm{d}\mathbf W(s) \\
%&\hspace{7.2cm} \text{with} \quad \mathbf{z}_0(0) = \zeta(x^{(1)};\theta_{\zeta}).
%\end{aligned}
%\end{equation}

%Table~\ref{eq:cdeliu}

Table~\ref{tab:cdeliu} presents the performance of jump diffusion model and our proposed method on Speech Commands, using Neural CDE as the baseline. Under the jump‑diffusion model, performance either declines or, if it improves, the improvement is not statistically significant. However, our proposed method demonstrates statistically significant and highly successful performance improvements, experimentally proving that it is the only universally applicable dropout method.

\begin{table}[H]
\scriptsize\centering\captionsetup{justification=centering, skip=5pt}
\caption{Performance of various dropout methods on Speech Commands}\label{tab:cdeliu}
\begin{tabular}{cc}
\Xhline{0.7pt}
\textbf{Dropout Methods}       & \textbf{Test Accuracy}   \\ \hline
Baseline (Neural CDE)   & 0.910 (0.005)           \\ \hline
Jump Diffusion \cite{liu2020does}   &  0.917 (0.009)         \\
Jump Diffusion + TTN \cite{liu2020does} &  0.906 (0.013)           \\  \hline
\textbf{Continuum Dropout} (ours)     & \textbf{\phantom{$^{**}$}0.940 (0.001)$^{**}$} \\ \Xhline{0.7pt}
\end{tabular}
\end{table}

%\bibliography{references}

\end{document}